\newcommand{\circled}[1]{{\large\textcircled{\footnotesize #1}}}
\newcommand{\scircled}[1]{{\normalsize\textcircled{\scriptsize #1}}}
\definecolor{ForestGreen}{rgb}{0.13, 0.55, 0.13}
\newtheorem{lemma}{Lemma}
\newcommand{\cmark}{{\color{ForestGreen} \ding{51}}}%
\newcommand{\xmark}{{\color{red} \ding{55}}}%
\newcommand\blfootnote[1]{%
  \begingroup
  \renewcommand\thefootnote{}\footnote{#1}%
  \addtocounter{footnote}{-1}%
  \endgroup
}
\def\smcomb{SM-comb}
\ificcvfinal\pagestyle{empty}\fi
\begin{document}

\newcommand{\vt}{\mathbf{t}}
\newcommand{\vf}{\mathbf{f}}
\newcommand{\vg}{\mathbf{g}}
\newcommand{\vh}{\mathbf{h}}
\newcommand{\vn}{\mathbf{n}}

\newcommand{\mX}{\mathbf{X}}
\newcommand{\mY}{\mathbf{Y}}
\newcommand{\mV}{\mathbf{V}}
\newcommand{\mF}{\mathbf{F}}
\newcommand{\mP}{\mathbf{P}}
\newcommand{\mR}{\mathbf{R}}
\newcommand{\mQ}{\mathbf{Q}}
\newcommand{\mS}{\mathbf{S}}
\newcommand{\mI}{\mathbf{I}}

\newcommand{\mPi}{\mathbf{\Pi}}
\newcommand{\vone}{\mathbf{1}}
\newcommand{\vzero}{\mathbf{0}}

\newcommand{\cX}{\mathcal{X}}
\newcommand{\cY}{\mathcal{Y}}
\newcommand{\cI}{\mathcal{I}}
\newcommand{\cJ}{\mathcal{J}}

\newcommand{\rmO}[1]{\mathrm{O}(#1)}
\newcommand{\rmE}[1]{\mathrm{E}(#1)}
\newcommand{\rmSO}[1]{\mathrm{S}\mathrm{O}(#1)}

\newcommand{\mFX}{\mF^{(\cX)}}
\newcommand{\mFY}{\mF^{(\cY)}}
\newcommand{\DeltaX}{\Delta^{(\cX)}}
\newcommand{\DeltaY}{\Delta^{(\cY)}}
\newcommand{\DeltadX}{{\Delta'}^{(\cX)}}
\newcommand{\DeltadY}{{\Delta'}^{(\cY)}}
\newcommand{\mXI}{\mX_\cI}
\newcommand{\mYJ}{\mY_\cJ}
\newcommand{\mPiX}{\mPi^{(\cX)}}
\newcommand{\mPiXp}{\mPi^{(\cX')}}
\newcommand{\mPiY}{\mPi^{(\cY)}}
\newcommand{\vfX}{\vf^{(\cX)}}
\newcommand{\vgX}{\vg^{(\cX)}}
\newcommand{\vnX}{\vn^{(\cX)}}
\newcommand{\vhX}{\vh^{(\cX)}}
\newcommand{\vhY}{\vh^{(\cY)}}

\newcommand{\DeltaXsub}[1]{\DeltaX_{#1}}
\newcommand{\DeltaXproj}{\DeltaXsub{\mathrm{proj}}}
\newcommand{\DeltaXstiff}{\DeltaXsub{\mathrm{stiff}}}
\newcommand{\DeltaYsub}[1]{\DeltaY_{#1}}
\newcommand{\DeltaYproj}{\DeltaYsub{\mathrm{proj}}}
\newcommand{\DeltadXsub}[1]{\DeltadX_{#1}}
\newcommand{\DeltadXproj}{\DeltadXsub{\mathrm{proj}}}
\newcommand{\DeltadYsub}[1]{\DeltadY_{#1}}
\newcommand{\DeltadYproj}{\DeltadYsub{\mathrm{proj}}}

\newcommand{\mXt}{\tilde{\mX}}
\newcommand{\mYht}{\tilde{\mYh}}
\newcommand{\mXht}{\tilde{\mXh}}
\newcommand{\mYhtJ}{\mYht_\cJ}
\newcommand{\mXhtI}{\mXht_\cI}
\newcommand{\mXtI}{\tilde{\mX}_\cI}
\newcommand{\vhXI}{\vhX_\cI}
\newcommand{\vhYJ}{\vhY_\cJ}

\newcommand{\mXh}{\hat{\mX}}
\newcommand{\mYh}{\hat{\mY}}
\newcommand{\mXhI}{\mXh_\cI}
\newcommand{\mYhJ}{\mYh_\cJ}

\newcommand{\fcX}{f_\cX}
\newcommand{\fcY}{f_\cY}
\newcommand{\ncX}{n_\cX}
\newcommand{\ncY}{n_\cY}
\newcommand{\mcX}{m_\cX}
\newcommand{\mcY}{m_\cY}
\newcommand{\dcX}{d_\cX}
\newcommand{\dcY}{d_\cY}
\newcommand{\scX}{s_\cX}
\newcommand{\scY}{s_\cY}

\newcommand{\absmX}{|\mX|}
\newcommand{\absmY}{|\mY|}

\newcommand{\bbR}{\mathbb{R}}
\newcommand{\bbN}{\mathbb{N}}

\newcommand{\rangei}[1]{\{1,\dots,#1\}}

\newcommand{\pz}{\phantom{0}}

\newcommand{\dummyfig}[1]{
  \centering
  \fbox{
    \begin{minipage}[c][0.33\textheight][c]{0.5\textwidth}
      \centering{#1}
    \end{minipage}
  }
}

\newcommand{\cErec}{\mathcal{E}_\mathrm{rec}}
\newcommand{\cEdef}{\mathcal{E}_\mathrm{def}}
\newcommand{\cEori}{\mathcal{E}_\mathrm{ori}}

\newcommand{\lambdadef}{\lambda_\mathrm{def}}
\newcommand{\lambdaori}{\lambda_\mathrm{ori}}
\newcommand{\lambdarec}{\lambda_\mathrm{rec}}
\newcommand{\expnumber}[2]{{#1}\mathrm{e}{#2}}

\def\pathOurs{figs/rend_ours/}
\def\pathChen{figs/rend_chen/}
\def\pathMina{figs/rend_mina/}
\def\pathPmsdp{figs/rend_pmsdp/}
\def\pathsmcomb{figs/rend_smcomb/}
\def\srcEnd{_M}
\def\trgtEnd{_N}

\title{$\boldsymbol{\Sigma}$IGMA: Scale-Invariant Global Sparse Shape Matching}%
\thispagestyle{plain}
\pagestyle{plain}

\author{ Maolin Gao$^{1,4}$ ~~~~ Paul Roetzer$^3$ ~~~~ Marvin Eisenberger$^{1,4}$ \\ Zorah L{\"a}hner$^{2}$ ~~~~ Michael Moeller$^2$ ~~~~ Daniel Cremers$^{1,4,5}$ ~~~~ Florian Bernard$^3$\\ \\
$^1$ Technical University of Munich \quad $^2$ University of Siegen \quad $^3$ University of Bonn \\ $^4$ Munich Center for Machine Learning \quad $^5$ University of Oxford
}

\maketitle%
\ificcvfinal\thispagestyle{empty}\fi%
\maketitle%
\begin{strip}%
  \centerline{%
  \footnotesize%
  \begin{tabular}{cccc}%
    \setlength{\tabcolsep}{5pt}
    \begin{overpic}[width=3cm]{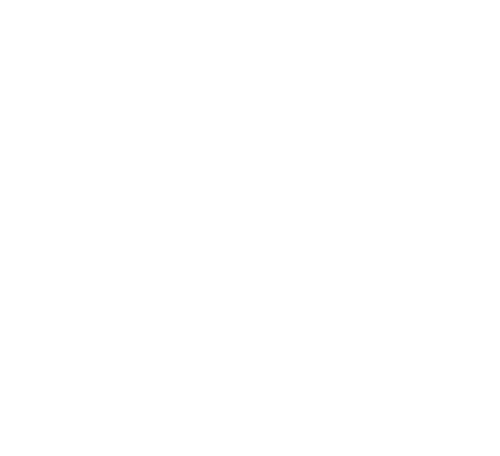}
        \put(0, 0){\includegraphics[height=4.2cm]{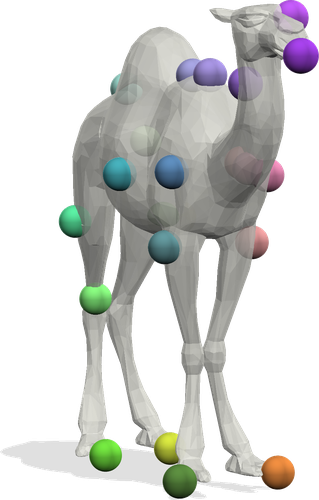}}
    \end{overpic}
    \hspace{-0.2cm}
    \includegraphics[height=1.4cm]{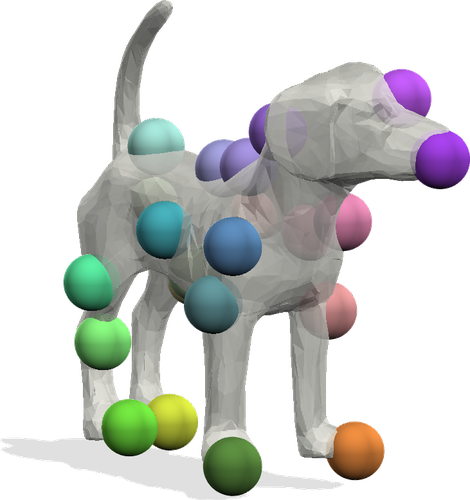}
    &
    \hspace{-0.25cm}
    \includegraphics[height=2.0cm]{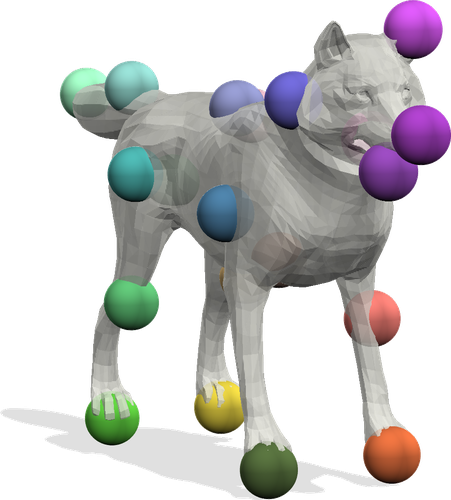}
    \hspace{-0.25cm}
    \begin{overpic}[width=2.5cm]{figs/empty.png}
        \put(0, 0){\includegraphics[height=3.8cm]{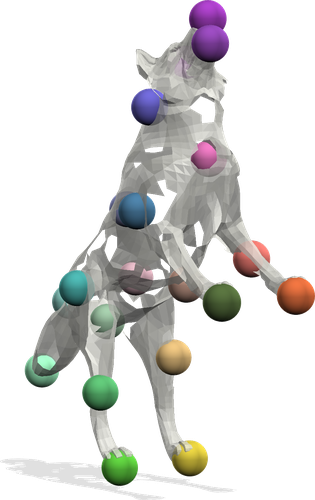}}
    \end{overpic}
    &
    \hspace{-0.75cm}
    \newcommand{\gapLineWidth}{3pt}
\definecolor{cPLOT0}{RGB}{28,213,227}
\definecolor{cPLOT1}{RGB}{80,150,80}
\definecolor{cPLOT2}{RGB}{90,130,213}
\definecolor{cPLOT3}{RGB}{247,179,43}
\definecolor{cPLOT5}{RGB}{242,64,0}

\pgfplotsset{%
    label style = {font=\large},
    tick label style = {font=\large},
    title style =  {font=\Large},
    legend style={  fill= gray!10,
                    fill opacity=0.6, 
                    font=\large,
                    draw=gray!20, %
                    text opacity=1}
}
\begin{tikzpicture}[scale=0.5, transform shape]
	\begin{axis}[
		width=\columnwidth,
		height=0.75\columnwidth,
		grid=major,
		title=Optimality vs. Runtime,
		legend style={
			at={(0.97,0.03)},
			anchor=south east,
			legend columns=1},
		legend cell align={left},
		ylabel={{\large$\%$ Pairs w/ Closed Gap}},
        xlabel={Runtime (s)},
		xmin=0,
        xmax=3650,
        ylabel near ticks,
        xtick={0, 1000, 2000, 3000, 4000},
		ymin=-5,
        ymax=85,
        ytick={0, 20, 40, 60, 80, 100},
	]
    \addplot [color=cPLOT0, smooth, line width=\gapLineWidth]
    table[row sep=crcr]{%
0 0\\
124.1379 0\\
248.2759 1.4085\\
372.4138 4.2254\\
496.5517 4.2254\\
620.6897 4.2254\\
744.8276 4.2254\\
868.9655 4.2254\\
993.1034 4.2254\\
1117.2414 4.2254\\
1241.3793 4.2254\\
1365.5172 4.2254\\
1489.6552 4.2254\\
1613.7931 4.2254\\
1737.931 4.2254\\
1862.069 4.2254\\
1986.2069 4.2254\\
2110.3448 4.2254\\
2234.4828 4.2254\\
2358.6207 4.2254\\
2482.7586 4.2254\\
2606.8966 4.2254\\
2731.0345 4.2254\\
2855.1724 4.2254\\
2979.3103 4.2254\\
3103.4483 4.2254\\
3227.5862 4.2254\\
3351.7241 4.2254\\
3475.8621 4.2254\\
3600 4.2254\\
    };
    \addlegendentry{\textcolor{black}{\smcomb}~\cite{roetzer2022scalable}}

    \addplot [color=cPLOT2, smooth, line width=\gapLineWidth]
    table[row sep=crcr]{%
0 0\\
124.1379 0\\
248.2759 0\\
372.4138 0\\
496.5517 0\\
620.6897 0\\
744.8276 0\\
868.9655 0\\
993.1034 0\\
1117.2414 0\\
1241.3793 0\\
1365.5172 0\\
1489.6552 0\\
1613.7931 0\\
1737.931 0\\
1862.069 0\\
1986.2069 0\\
2110.3448 0\\
2234.4828 0\\
2358.6207 0\\
2482.7586 0\\
2606.8966 0\\
2731.0345 0\\
2855.1724 0\\
2979.3103 0\\
3103.4483 0\\
3227.5862 0\\
3351.7241 0\\
3475.8621 0\\
3600 0\\
    };
    \addlegendentry{\textcolor{black}{PMSDP}~\cite{maron2016point}}

    \addplot [color=cPLOT3, smooth, line width=\gapLineWidth]
    table[row sep=crcr]{%
0 0\\
124.1379 0\\
248.2759 0\\
372.4138 1.4085\\
496.5517 2.8169\\
620.6897 2.8169\\
744.8276 2.8169\\
868.9655 2.8169\\
993.1034 2.8169\\
1117.2414 4.2254\\
1241.3793 7.0423\\
1365.5172 7.0423\\
1489.6552 7.0423\\
1613.7931 7.0423\\
1737.931 8.4507\\
1862.069 8.4507\\
1986.2069 9.8592\\
2110.3448 9.8592\\
2234.4828 9.8592\\
2358.6207 9.8592\\
2482.7586 9.8592\\
2606.8966 9.8592\\
2731.0345 9.8592\\
2855.1724 11.2676\\
2979.3103 11.2676\\
3103.4483 11.2676\\
3227.5862 11.2676\\
3351.7241 11.2676\\
3475.8621 11.2676\\
3600 11.2676\\
    };
    \addlegendentry{\textcolor{black}{Mina}~\cite{bernard2020mina}}

    \addplot [color=cPLOT5, smooth, line width=\gapLineWidth]
    table[row sep=crcr]{%
0 0\\
124.1379 18.3099\\
248.2759 28.169\\
372.4138 29.5775\\
496.5517 36.6197\\
620.6897 40.8451\\
744.8276 46.4789\\
868.9655 54.9296\\
993.1034 54.9296\\
1117.2414 56.338\\
1241.3793 57.7465\\
1365.5172 59.1549\\
1489.6552 60.5634\\
1613.7931 61.9718\\
1737.931 63.3803\\
1862.069 63.3803\\
1986.2069 67.6056\\
2110.3448 69.0141\\
2234.4828 69.0141\\
2358.6207 69.0141\\
2482.7586 70.4225\\
2606.8966 71.831\\
2731.0345 71.831\\
2855.1724 74.6479\\
2979.3103 76.0563\\
3103.4483 76.0563\\
3227.5862 77.4648\\
3351.7241 77.4648\\
3475.8621 77.4648\\
3600 78.8732\\
    };
    \addlegendentry{\textcolor{black}{Ours}}
        
	\end{axis}
\end{tikzpicture}
    &
    \hspace{-0.8cm}
    \newcommand{\gapLineWidth}{3pt}
\definecolor{cPLOT0}{RGB}{28,213,227}
\definecolor{cPLOT1}{RGB}{80,150,80}
\definecolor{cPLOT2}{RGB}{90,130,213}
\definecolor{cPLOT3}{RGB}{247,179,43}
\definecolor{cPLOT5}{RGB}{242,64,0}

\pgfplotsset{%
    label style = {font=\large},
    tick label style = {font=\large},
    title style =  {font=\Large},
    legend style={  fill= gray!10,
                    fill opacity=0.6, 
                    font=\large,
                    draw=gray!20, %
                    text opacity=1}
}
\begin{tikzpicture}[scale=0.5, transform shape]
	\begin{axis}[
		width=\columnwidth,
		height=0.75\columnwidth,
		grid=major,
		title=Runtime vs. Shape Resolution,
		legend style={
			at={(0.97,0.86)},
			anchor=north east,
			legend columns=1},
		legend cell align={left},
		ylabel={{\large Runtime (s$\cdot10^3$)}},
        xlabel={\# Faces $\cdot10^3$},
        xmin=0,
        xmax=9000,
        ylabel near ticks,
        xtick={2000, 4000, 6000, 8000, 10000},
        xticklabels={ $2$, $4$, $6$, $8$, $10$},
        xtick scale label code/.code={},
        ymin=-100,
        ymax=4000,
        ytick={0, 1000, 2000, 3000, 4000},
        yticklabels={ $0$, $1$, $2$, $3$, $4$},
        extra y ticks=3600,
        extra y tick labels={},
        extra y tick style={grid=major, grid style={dashed,black!70,line width=2pt}},
	]
\node[anchor=east] at (axis cs: 8800,3800) {{\large Max. Time Budget}};
    \addplot [color=cPLOT0, smooth, line width=\gapLineWidth]
    table[row sep=crcr]{%
0 0\\
200 230.47\\
400 821.73\\
582 3590.9\\
    };
    \addlegendentry{\textcolor{black}{\smcomb}~\cite{roetzer2022scalable}}

        \addplot [color=cPLOT0, smooth, dotted, line width=\gapLineWidth, forget plot]
    table[row sep=crcr]{%
582 3590.9\\
600 3980.3\\
    };
    \addplot [color=cPLOT2, smooth, line width=\gapLineWidth]
    table[row sep=crcr]{%
0 0\\
600 8.6774\\
1200 8.9443\\
1800 9.1023\\
2400 8.8943\\
3000 9.0454\\
3600 9.0661\\
4200 8.792\\
4800 8.619\\
5400 8.8934\\
6000 8.9223\\
6600 8.8437\\
7200 8.7901\\
7800 8.878\\
8400 8.6731\\
9000 9.3577\\
    };
    \addlegendentry{\textcolor{black}{PMSDP}~\cite{maron2016point}}
    \addplot [color=cPLOT3, smooth, line width=\gapLineWidth]
    table[row sep=crcr]{%
0 0\\
600 1125.2424\\
1200 1671.7544\\
1800 1630.4576\\
2400 3600.4906\\
    };
    \addlegendentry{\textcolor{black}{MINA}~\cite{bernard2020mina}}
\addplot [color=cPLOT3, dotted, smooth, line width=\gapLineWidth, forget plot]
    table[row sep=crcr]{%
2400 3600.4906\\
3000 7581.8534\\
3600 13574.546\\
    };
    \addplot [color=cPLOT5, smooth, line width=\gapLineWidth]
    table[row sep=crcr]{%
0 0\\
600 24.4268\\
1200 26.0234\\
1800 26.0311\\
2400 37.5955\\
3000 49.7587\\
3600 53.0488\\
4200 77.0996\\
4800 119.2951\\
5400 173.051\\
6000 205.4263\\
6600 318.8348\\
7200 273.4554\\
7800 340.2952\\
8400 399.3415\\
9000 599.1073\\
    };
    \addlegendentry{\textcolor{black}{Ours}}
        
	\end{axis}
\end{tikzpicture}\\
    (a) Non-rigidity \& scale difference &
    (b) Partiality \& scale difference &
    (c) Global optimality (TOSCA)&
    (d) Scalability
  \end{tabular}
  }
  \vspace{-3mm}

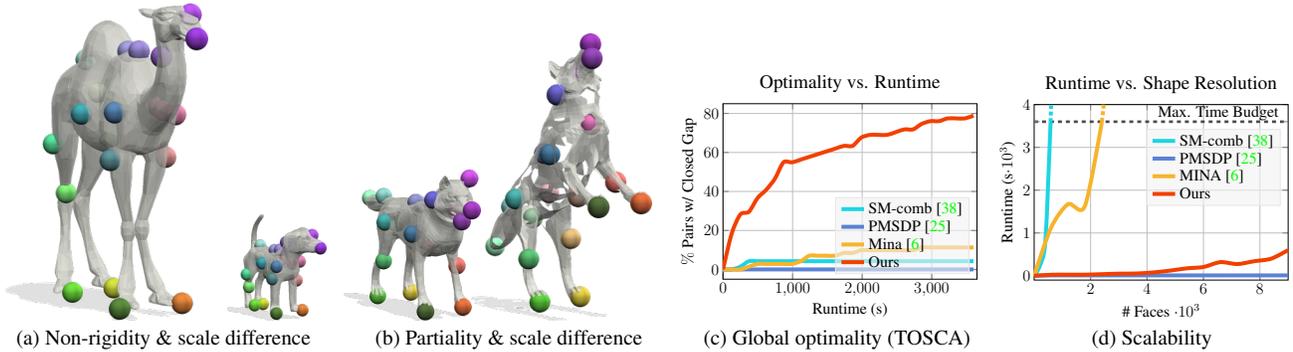
\captionof{figure}{We propose a sparse non-rigid shape matching approach that is provably invariant to rigid transformations and global scaling, can (often) be solved to optimality, and scales linearly with mesh resolution. (a, b) Our matchings for non-rigid shapes with drastically different scale and  partiality. 
(c, d) Our method is the only one that is able to both solve the majority of  pairs to global optimality within a time budget of $1$h, as well as scaling up to high mesh resolutions. }
\label{fig:teaser}
\end{strip}

\begin{abstract}
We propose a novel mixed-integer programming (MIP) formulation for generating precise sparse correspondences for highly non-rigid shapes. 
To this end, we introduce a projected Laplace-Beltrami operator (PLBO) which combines intrinsic and extrinsic geometric information to measure the deformation quality induced by predicted correspondences.
We integrate the PLBO, together with an orientation-aware regulariser, into a novel MIP formulation that can be solved to global optimality for many practical problems.
In contrast to previous methods, our approach is provably invariant to rigid transformations and global scaling, initialisation-free, has optimality guarantees, and scales to high resolution meshes with (empirically observed) linear time.
We show state-of-the-art results for sparse non-rigid matching on several challenging 3D datasets, including data with inconsistent meshing, as well as applications in mesh-to-point-cloud matching.
\end{abstract}
\vspace{-2.5mm}
\section{Introduction} \label{sec:introduction}
Finding correspondences or matchings between parts of 3D shapes is a well-studied problem that lies at the heart of many tasks in computer vision, computer graphics and beyond. 
Example tasks related to such 3D shape matching problems include generative shape modelling \cite{wu2019sagnet}, 3D shape analysis \cite{ren2020jaws}, motion capture \cite{taylor2012vitruvian}, or model-based image segmentation~\cite{bernard2017combinatorial}, which are relevant for applications in autonomous driving, robotics, and biomedicine. 

\begin{table*}[h]
\resizebox{\linewidth}{!}{%
\setlength{\tabcolsep}{12pt}
\begin{tabular}{@{}lclccccl@{}}
\toprule
\textbf{Method}                        & \textbf{Init.-free} & \textbf{Solver} & \textbf{\% of optimal pairs$^\dagger$}                                      & \textbf{Scale Inv.} & \textbf{Rigid  Inv.} & \textbf{Data}      \\ \midrule
PMSDP \cite{maron2016point}            & \cmark              & convex solver               & $5.0\%$                       &  \cmark             &  \cmark              & point cloud        \\
MINA \cite{bernard2020mina}            & \cmark              & MIP solver          & $11.5\%$ & \xmark              &  (\cmark)$^\star$              & mesh \& point cloud \\
\smcomb~\cite{roetzer2022scalable}     & \xmark              & custom solver                & $23.5\%^\ddagger$      &  \xmark             &  \cmark              & mesh               \\
\textbf{Ours}             & \cmark                           & MIP solver           & $73.0\%$       & \cmark              &  \cmark              & mesh \& point cloud      \\ \bottomrule          
\end{tabular}
}
\caption{\textbf{Overview of axiomatic shape matching approaches} that have a global optimisation flavour. 
Our approach is the only one that at the same time is initalisation-free, scale- and rigid motion-invariant, and works on both meshes and point clouds. Moreover, ours achieves the best proportion of globally optimal solutions across all experiments. 
($^\dagger$summarised across all datasets. $^\ddagger$w/o runtime budget. $^\star$MINA explicitly optimises for a global rotation using an $\rmSO{3}$  discretisation.)
}
\label{tab:properties}
\end{table*}

While 3D shape matching is traditionally addressed in terms of an optimisation problem formulation, in recent years learning-based approaches became more popular. 
Specifically, with the advent of geometric deep learning we have witnessed a dramatic improvement in the performance of data-driven methods for 3D shape matching, even for difficult settings such as unsupervised learning for partial shapes~\cite{eisenberger2020deep, attaiki2021dpfm}.
Yet, such learning-based approaches lack theoretical guarantees, both regarding the (global) optimality and often also regarding  structural properties of obtained solutions. 
While  optimality guarantees are crucial in safety-critical domains (e.g. autonomous driving, or computer-aided surgery), desirable structural properties may be related to geometric consistency, such as smoothness or the continuity of matchings~\cite{windheuser2011geometrically}. 
From a technical point of view it is typically straightforward  to impose respective structural properties within an optimisation-based framework. 
However, resulting formulations are high-dimensional and non-convex, so that finding `good' solutions is a major challenge -- many resulting problems lead to large-scale integer linear programming formulations, e.g.~when imposing discrete diffeomorphism constraints~\cite{windheuser2011geometrically}, or when considering the NP-hard quadratic assignment problem~\cite{Lawler:1963wn,Pardalos:1993uo,rendl1994quadratic,rodola2012game,kezurer2015,vestner2017efficient}.

Yet, 3D shape matching typically involves profound structural characteristics (e.g.~related to geometric constraints), so that, despite the complexity and high dimensional nature of respective optimisation problems, several \linebreak formalisms have been proposed that can be efficiently solved in practice.
Often they are built on a low dimensional matching representation, e.g.~in the spectral domain~\cite{ovsjanikov2012functional}, or in terms of a sparse set of discrete control points that give rise to a dense correspondence~\cite{bernard2020mina}. 
In this work we build upon the latter and propose a novel formalism that has a range of desirable properties, including scale invariance, rigid motion invariance and a substantially better scalability (compared to~\cite{bernard2020mina}), which directly leads to a major improvement of the matching quality.
In Tab.~\ref{tab:properties} we provide an overview of the properties of our method and its direct competitors.

\textbf{Contributions.}
We propose a novel mixed-integer programming formulation for \textbf{S}cale-\textbf{I}nvariant \textbf{G}lobal sparse shape \textbf{Ma}tching (\textbf{SIGMA}) based on a projected Laplace-Beltrami operator, which combines intrinsic and extrinsic geometric information and is applicable to a variety of challenging non-rigid shape matching problems.
In summary, our main contributions are:
\begin{itemize}
    \setlength\itemsep{0em}
    \item A novel initialisation-free mixed-integer programming formulation for sparse shape matching, which can be solved to global optimality for many practical instances and (empirically) scales linearly with the mesh resolution.
    \item Our method is provably invariant to rigid transformations and global scaling, thus eliminating the extrinsic alignment required by many shape matching pipelines.
    \item We propose the use of the \emph{projected} Laplace-Beltrami operator for geometry reconstruction, which combines intrinsic and extrinsic geometric information, while still being invariant under  actions of the Euclidean group $\rmE{3}$.
    \item We obtain state-of-the-art results on multiple challenging non-rigid shape matching datasets.
\end{itemize}

\section{Related Work} \label{sec:relatedwork}

Shape matching is a widely studied topic and providing an in-depth survey would be beyond the scope of this paper.
We refer the reader to \cite{sahillioglu2020survey} for an overview of non-rigid matching and only mention directly related literature here.

\textbf{Quadratic Assignment Problem.}
The shape matching problem can be formulated as a quadratic assignment problem (QAP) \cite{Loiola:2ua4FrR7}. 
The respective QAP aims to find the optimal permutation between (vertices of) two shapes preserving a predefined pairwise property. 
For non-rigid shape matching this property is often chosen as the geodesic distance between pairs of vertices. 
However, the QAP is NP-hard~\cite{rendl1994quadratic},
so that most methods rely on convex relaxations \cite{Schellewald:2005up,Dym:2017ue,bernard:2018,kushinsky2019sinkhorn}, or  heuristics \cite{le2017alternating,vestner2017efficient}. 
In consequence, these methods cannot guarantee to find the global optimum of the original non-convex problem  and can thus result in arbitrarily bad solutions.
Instead of considering a QAP formulation that seeks to find a permutation between all shape vertices, our approach uses a low-dimensional, discrete matching representation that is combined with dense geometry reconstruction, thereby requiring only the computation of a permutation between a small number of keypoints.

\textbf{Matching with Optimality Guarantees.}
Even though computing global optima is intractable for general QAP formulations, some existing works that consider more specialised shape matching models can be solved efficiently.
This is the case for contour-to-contour \cite{schmidt2009planar}, contour-to-image \cite{felzenszwalb2005representation,schoenemann2009combinatorial}, and contour-to-surface \cite{lahner2016efficient, roetzer2023conjugate} matchings.
An analogous approach has been proposed for surface-to-surface matching in \cite{windheuser2011geometrically}. However, it does not scale to high resolutions and global optimality cannot be guaranteed.
In \cite{roetzer2022scalable} an efficient decomposition of \cite{windheuser2011geometrically} has been introduced which scales to higher resolutions but still cannot guarantee optimality.
The approach of \cite{bernard2020mina} uses a convex mixed-integer alignment model to find correspondences between two surfaces. While its  worst-case  time complexity is exponential, it converges to a global optimum in reasonable time in many practical settings. 

Another direction was taken by \cite{ovsjanikov2012functional} in which  point-wise correspondences are transformed to functional correspondences which can then be reduced to a low-dim, continuous problem via the Laplace-Beltrami eigenfunctions.
The optimisation for this so-called functional map can be done to global optimality depending on the energy in the spectral domain, but it is hard to guarantee properties like bijectivity for the underlying point-wise correspondence, despite several recent improvement attempts \cite{Pai2021CVPR, Ren2020MapTree}.

\textbf{Sparse Matching.}
Sparse matching methods only compute correspondences for a small subset of vertices of the input.
This allows to impose constraints that are otherwise intractable for  full resolution shapes, and therefore leads to few but often high quality matches. 
Many dense correspondence methods can only find local optima \cite{vestner2017efficient,ezuz2019elastic}, and hence rely on a meaningful initialisation, which can f.e. be given by a robust sparse set of initial correspondences \cite{sahillioglu2018genetic, yang2015sparse,panine2022landmark}.
The sparsity of the solution in \cite{rodola2012game} is achieved by utilising an L1-norm formulation, directly promoting sparsity.
In \cite{maron2016point} the problem of non-rigid matching is posed as high-dimensional procrustes matching problem which is convexified so that the relaxed problem can be solved to global optimality.
The MINA approach of \cite{bernard2020mina} defines a deformation model based on sparse keypoints, and eventually optimises via  mixed-integer programming. 
While conceptually \cite{bernard2020mina} resembles our approach,
it explicitly solves for a rotation matrix. In contrast, ours does not require this and is thus significantly more efficient, so that we reduce optimality gaps substantially faster, see Fig.~\ref{fig:teaser}c.

\textbf{Scale Invariant Shape Analysis.} Shapes with different global scaling are ubiquitous, and techniques such as shape normalisation and scale-invariant feature descriptors have been developed \cite{aubry2011wks, BronsteinK10}.
A scale-invariant metric incorporating Gaussian curvature as the adaptive normalisation factor has been proposed in \cite{aflalo2013saleinvariant}. Based on that, an intrinsic scale-invariant Laplacian-Beltrami Operator (LBO) can be constructed. In contrast, our projected LBO is also invariant to rigid motions and blended with extrinsic information, which is found useful for shape reconstruction.

\section{Global Sparse Shape Matching} \label{sec:method}

We address the task of sparse non-rigid shape matching, which we summarise in the following. 
For each problem instance, we consider a pair of surfaces $\cX$ and $\cY$ which are given as triangle mesh discretisations of Riemannian 2-manifolds embedded in 3D space. 
These are defined as the triplets $\cX=\bigl(\mX,\mFX,\cI\bigr)$ and $\cY=\bigl(\mY,\mFY,\cJ\bigr)$, where the first element denotes the vertices, the second element the faces, and the third element the indices of keypoints (\ie~a subset of vertex indices).
Tab.~\ref{table:notation} summarises our notation. 

Our aim is to determine optimal correspondences between the keypoint vertices $\mXI$ and $\mYJ$ (of $\cX$ and $\cY$, respectively), which we represent as a permutation matrix $\mP\in\{0,1\}^{n \times n}$. In order to find this permutation, we consider a regularisation that utilises synergies between shape matching and shape reconstruction.
To this end, we utilise \emph{rigid motion-invariant} geometric information of shape $\cX$, and then use keypoints of shape $\cY$ in order to reconstruct shape $\cX$ in the pose of shape $\cY$ (and vice-versa with swapped roles of $\cX$ and $\cY$), see Fig.~\ref{fig:intuition}.

For the rigid motion-invariant geometry encapsulation  we introduce the projected Laplace-Beltrami operator in Sec.~\ref{subsec:extrinsic-deformation} and our sparse matching formalism in Sec.~\ref{subsec:sparse-matching}.

\begin{table}[t!]
\small\centering
	\begin{tabularx}{\columnwidth}{lp{5.6cm}}
 \toprule
        \textbf{Symbol} & \textbf{Description} \\
		\toprule
		  $\cX=\bigl(\mX,\mFX,\cI\bigr)$ & shape $\cX$\\
        $\absmX, |\mFX|$   & number of vertices and faces in $\cX$ \\
            $\mX \in \bbR^{\absmX\times 3} $ & vertices of shape $\cX$\\
            $\mFX \in \bbN^{|\mFX| \times 3} $ & faces of shape $\cX$\\
            $\cI \in \bbN^{n}$ & keypoint indices on shape $\cX$\\
            $ \mX_{\cI} \in \bbR^{n \times 3} $ & keypoint coordinates on shape $\cX$\\
            $\mXt \in \bbR^{\absmX\times 4}$ & vertices of shape $\cX$ in homogeneous coordinates (appended with $\vone$) \\
            $\mXh\in \bbR^{\absmX\times 3} $ & reconstruction of $\mX$ with connectivity of $\cX$ and pose of $\cY$ (same vertex ordering  as $\mX$) \\
            $\mPiX \in \bbR^{\absmX \times \absmX}$ & projection onto the null space of $\mXt$ \\
            $\vhX \in \bbR^{\absmX} $ & orientation-aware features of $\cX$\\
            $\vhXI \in \bbR^{n} $ & orientation-aware features of $\cX$'s keypoints\\
            $\cY=\bigl(\mY,\mFY,\cJ\bigr)$ & shape $\cY$\\
            $\vdots$ & (analogous as above)\\
            $\mP\in\{0,1\}^{n \times n}$ & permutation of keypoints\\
		\bottomrule
	\end{tabularx}
	\caption{Summary of our notation.}
	\label{table:notation}
\end{table}

\begin{figure}
    \centering%
    \includegraphics[width=\columnwidth]{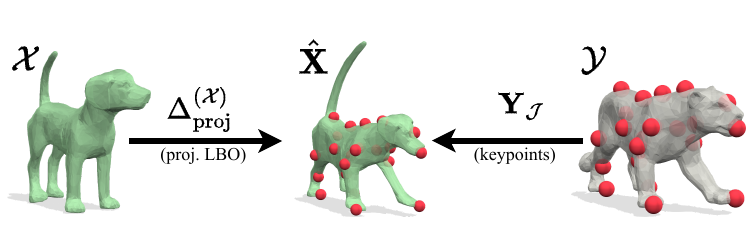}
    \caption{We \textbf{utilise synergies between shape matching  and shape reconstruction}. To this end, rigid motion-invariant geometric information of shape $\cX$, encoded in the projected LBO $\DeltaXproj$, and the keypoint coordinates $\mYJ$ of shape $\cY$, are combined to reconstruct shape $\cX$ in the pose of shape $\cY$. Our motivation is that high quality correspondences will lead to better reconstruction and vice versa.}
    \label{fig:intuition}
\end{figure}

\subsection{Projected Laplace-Beltrami Operator}\label{subsec:extrinsic-deformation}

To encapsulate rigid motion-invariant geometric information,
we propose a variant of the Laplace-Beltrami operator that we call \emph{Projected Laplace-Beltrami operator} (PLBO). For $\DeltaXstiff$ being the stiffness matrix component of the Laplacian~\cite{pinkall1993cotan}, we define the PLBO as
\begin{align}\label{eq:proj-lbo}
    \DeltaXproj:=\bigl(\mPiX\bigr)^\top\DeltaXstiff\mPiX,
\end{align}
where $\mPiX$ denotes the projection matrix
\begin{align}\label{eq:proj-matrix}
    \mPiX:=\mathbf{I}-\mXt(\mXt^\top\mXt)^{-1}\mXt^\top,~~\text{with}~\mXt:=\begin{pmatrix}\mX&\vone\end{pmatrix},
\end{align}
and $\vone\in\bbR^{\absmX}$ is a vector of all ones.~\nocite{nasikun2018fastspectrum}
To obtain an operator that is scale-invariant, the definition of Eqn.~\eqref{eq:proj-lbo} is area-normalised, \ie based only on the stiffness matrix component $\DeltaXstiff$ of the Laplacian.
Due to the use of the coordinate function, the PLBO is not purely intrinsic but also incorporates extrinsic information.
We found this mix of intrinsic and extrinsic information beneficial for accurate correspondence computation, and even though the PLBO uses the extrinsic coordinate function, it is still invariant under transformations in the Euclidean group $\rmE{3}$:
\begin{lemma}\label{lemma:invariance-rigid-body}
Let $\Delta(\mX):=\DeltaXproj\in\bbR^{\absmX\times \absmX}$ be the projected Laplace-Beltrami operator for the vertices $\mX$, defined in~Eqn.~\eqref{eq:proj-lbo}. For any rigid body transformation 
\begin{equation}
    \begin{pmatrix}
    \mR & \vt \\
    \vzero & 1
    \end{pmatrix}\in\rmE{3},~~\text{with}~~\mR\in \rmO{3},\vt\in\bbR^3,
\end{equation}
it holds that $\Delta(\mX)=\Delta(\mX\mR^\top+\vone\vt^\top)$.
\end{lemma}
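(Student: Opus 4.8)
The plan is to exploit the fact that $\DeltaXproj$ depends on the vertex positions only through two factors, the cotangent stiffness matrix $\DeltaXstiff$ and the projector $\mPiX$, and to establish invariance of each factor separately under the map $\mX\mapsto\mX':=\mX\mR^\top+\vone\vt^\top$. Since $\DeltaXproj=(\mPiX)^\top\DeltaXstiff\,\mPiX$, invariance of both factors immediately yields $\Delta(\mX')=\Delta(\mX)$.

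For the projector, the key observation is that appending the constant column $\vone$ converts the affine vertex map into a \emph{linear} reparametrisation of the homogeneous coordinates. Concretely, I would verify by direct block multiplication that
\begin{equation}
\begin{pmatrix}\mX'&\vone\end{pmatrix}=\begin{pmatrix}\mX&\vone\end{pmatrix}\mathbf{T},\qquad \mathbf{T}:=\begin{pmatrix}\mR^\top&\vzero\\\vt^\top&1\end{pmatrix}\in\bbR^{4\times4}.
\end{equation}
The matrix $\mathbf{T}$ is block-triangular, so $\det\mathbf{T}=\det\mR=\pm1\neq0$ because $\mR\in\rmO{3}$; hence $\mathbf{T}$ is invertible. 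Since right-multiplication by an invertible matrix preserves the column space of $\mXt$, the orthogonal projector onto that column space — and therefore its complementary projector $\mPiX$ — is unchanged. Equivalently, substituting $\mXt\mathbf{T}$ into Eqn.~\eqref{eq:proj-matrix} lets every $\mathbf{T}$ factor cancel, $\mPiXp=\mI-\mXt\mathbf{T}(\mathbf{T}^\top\mXt^\top\mXt\mathbf{T})^{-1}\mathbf{T}^\top\mXt^\top=\mI-\mXt(\mXt^\top\mXt)^{-1}\mXt^\top=\mPiX$.

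For the stiffness matrix, I would invoke the standard fact that the cotangent weights constituting $\DeltaXstiff$ are functions of the triangles' edge lengths and interior angles only. Any element of $\rmE{3}$ acts as an isometry of $\bbR^3$ — the orthogonal part $\mR$ preserves inner products and the translation $\vt$ cancels in vertex differences — so all edge lengths and angles, and thus $\DeltaXstiff$, are preserved. Combining the two invariances gives $\Delta(\mX')=(\mPiXp)^\top\DeltaXstiff\,\mPiXp=(\mPiX)^\top\DeltaXstiff\,\mPiX=\Delta(\mX)$.

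The only point requiring care — and the main (mild) obstacle — is the well-definedness of $\mPiX$ itself: the inverse in Eqn.~\eqref{eq:proj-matrix} exists iff $\mXt$ has full column rank, i.e.\ iff the vertices affinely span $\bbR^3$ (they are not all coplanar). This non-degeneracy holds for any genuine surface mesh, and since $\mathbf{T}$ is invertible it is inherited by $\mX'$, so the argument is consistent on both sides of the identity.
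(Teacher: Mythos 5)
Your proof is correct and follows essentially the same route as the paper's: both write the transformed homogeneous coordinates as $\mXt$ right-multiplied by an invertible $4\times 4$ matrix (your $\mathbf{T}$ is the paper's $\mQ^\top$), cancel that factor inside the projector formula of Eqn.~\eqref{eq:proj-matrix}, and then invoke the rigid-motion invariance of the stiffness matrix $\DeltaXstiff$. Your extra remarks --- the column-space interpretation of the cancellation, the explicit edge-length/angle justification for $\DeltaXstiff$, and the full-column-rank condition needed for the inverse in Eqn.~\eqref{eq:proj-matrix} to exist --- are sound refinements of the same argument rather than a different approach.
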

See the supplementary material for the proof.

While there are several popular deformation models in the literature~\cite{botsch2006primo,grinspun2003discrete,solomon2011killing,sorkine2007rigid}, we find that existing formulations often depend on the extrinsic pose of the input surfaces. 
Therefore, such methods require rigidly aligned poses since applying a rotational offset $\mX\mR^\top$ can alter the results for any $\mR\in\rmO{3}$. 
Compared to the standard LBO operator $\DeltaX\in\bbR^{\absmX\times\absmX}$, the PLBO $\DeltaXproj$ projects the original coordinate function $\mX$ onto its null space, so that when used as a regulariser (as we introduce in Eqn.~\eqref{eq:deformTerm}) only components outside the null space are penalised.
In practice, this reduces oversmoothing and leads to more accurate reconstructions, see Fig.~\ref{fig:lbovsproj} for an illustration.

\def\wLBO{2cm}
\def\hLBO{2cm}
\begin{figure}[h]
    \centering
    \begin{tabular}{cccccc}
        \rotatebox{90}{\hspace{0.5cm}LBO} &
        \includegraphics[height=\hLBO, width=\wLBO]{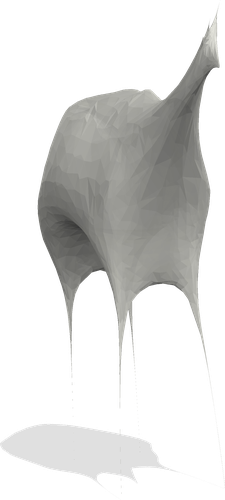} & 
        \includegraphics[height=\hLBO, width=\wLBO]{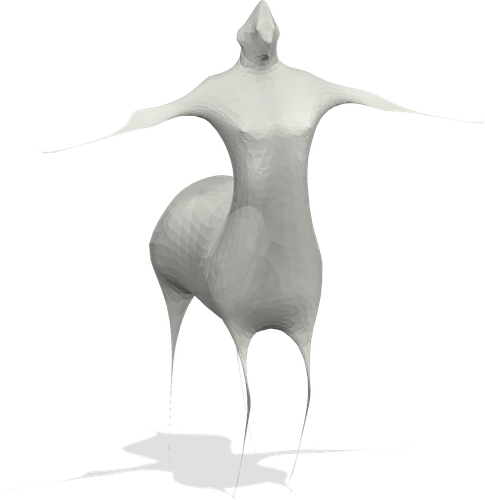} & 
        \includegraphics[height=\hLBO, width=\wLBO]{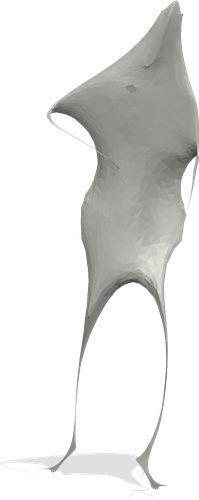} & 
        \includegraphics[height=\hLBO, width=\wLBO]{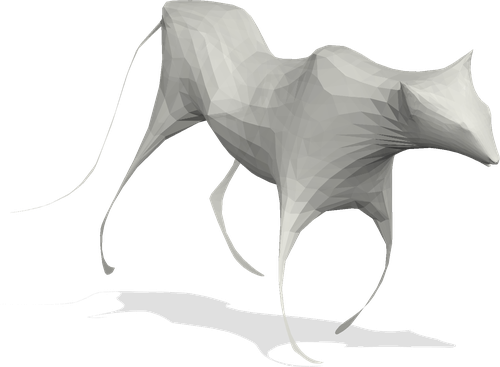} \\
        \rotatebox{90}{\hspace{0.2cm}proj. LBO} &
        \includegraphics[height=\hLBO, width=\wLBO]{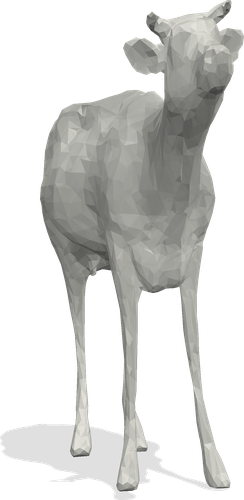} &
        \includegraphics[height=\hLBO, width=\wLBO]{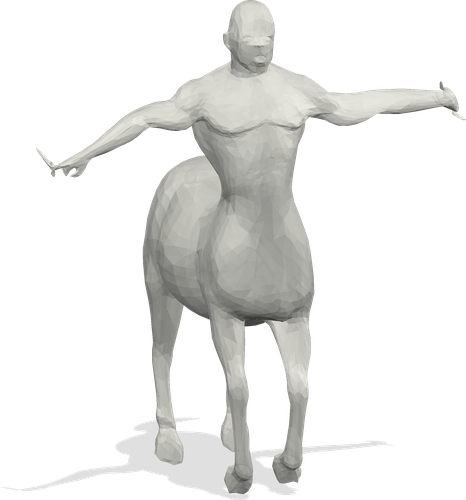} &
        \includegraphics[height=\hLBO, width=\wLBO]{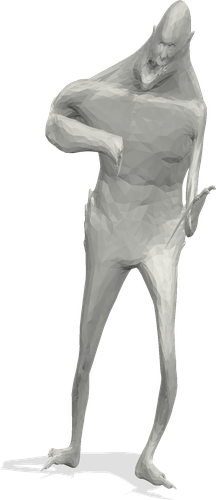} & 
        \includegraphics[height=\hLBO, width=\wLBO]{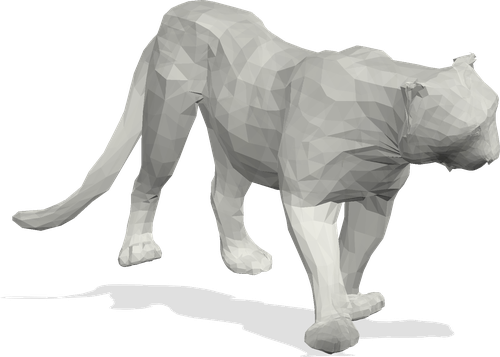} \\
    \end{tabular}
     \caption{Comparison of \textbf{reconstructed shapes} using the standard \textbf{LBO} and our proposed \textbf{projected LBO}. From the definition in Eqn.~\eqref{eq:proj-lbo} we can show that $\DeltaXproj\mX=0$, \ie this yields a deformation prior that incurs no cost on exact reconstructions of the initial geometry $\mX$. Thus, the projected LBO is able to better preserve local geometry and leads to more realistic reconstructions. See the supplementary material for details.
     }
    \label{fig:lbovsproj}
\end{figure}

\subsection{Sparse Matching}\label{subsec:sparse-matching}

Our final optimisation problem comprises a reconstruction term $\mathcal{E}_\mathrm{rec}$, a deformation term $\mathcal{E}_\mathrm{def}$ and an orientation-aware term $\mathcal{E}_\mathrm{ori}$, and reads
\begin{align}\label{eq:optimization-problem}
    \min_{\mP, \mXh, \mYh}&~\mathcal{E}_\mathrm{rec}(\mP, \mXh, \mYh) + \lambda_\mathrm{def} \mathcal{E}_\mathrm{def}(\mXh, \mYh) + \lambda_\mathrm{ori} \mathcal{E}_\mathrm{ori}(\mP) \nonumber\\
    \text{s.t.}~~&~~\mP\in\{0,1\}^{n\times n},~\mP^\top\vone_n=\vone_n,~\mP\vone_n=\vone_n.
\end{align}
Our formulation is provably invariant under global scaling and rigid-body motions of the inputs:

\begin{lemma}\label{lemma:invariance-scale-rigid}
    Let $\bigl(\mP,\mXh,\mYh\bigr)$ be a global optimiser of Eqn.~\eqref{eq:optimization-problem}. 
    \begin{enumerate}
        \item[(a)] Let $\cX':=\bigl(s\mX,\mFX,\cI\bigr)$ be a rescaled input shape $\cX$, where a scalar factor $s>0$ is applied to the vertex coordinates.
        Then $\bigl(\mP',\mXh',\mYh'\bigr):=\bigl(\mP,\mXh,s\mYh\bigr)$ is a global optimiser of Eqn.~\eqref{eq:optimization-problem} between $\cX'$ and $\cY$.
        \item[(b)] Let $\cX'':=\bigl(\mX \mR^\top + \vone\vt^\top ,\mFX,\cI\bigr)$ be a rigidly transformed version of $\cX$ with $\mR\in \rmSO{3},\vt\in\bbR^3$.
    Then $\bigl(\mP'',\mXh'',\mYh''\bigr):=\bigl(\mP,\mXh,\mYh\mR^\top+\vone\vt^\top\bigr)$ is a global optimiser of Eqn.~\eqref{eq:optimization-problem} between $\cX''$ and $\cY$.
    \end{enumerate}
\end{lemma}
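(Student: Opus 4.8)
The plan is to prove each part by exhibiting an explicit, feasibility-preserving bijection between the feasible set of the problem posed on $(\cX,\cY)$ and the one posed on the transformed pair, and then showing that this bijection either preserves the objective outright (part~(b)) or preserves its minimiser (part~(a)); optimality then transfers automatically. In both cases the candidate map leaves $\mP$ untouched, and since $\mP$ remains a permutation it is still feasible for the transformed problem. The map also leaves $\mXh$ untouched (it lives in the pose of the unchanged shape $\cY$) and acts only on $\mYh$ (which lives in the pose of the modified shape $\cX$). I would therefore check, term by term, how $\cErec$, $\cEdef$ and $\cEori$ behave under the substitution, splitting each energy into its $\cX$-side contribution (built from $\DeltaXproj$ and the keypoints $\mYJ$) and its $\cY$-side contribution (built from $\DeltaYproj$ and the keypoints $\mXI$).

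For part~(b) I would use three structural facts. First, by Lemma~\ref{lemma:invariance-rigid-body} the $\cX$-side PLBO is unchanged, $\Delta(\mX\mR^\top+\vone\vt^\top)=\Delta(\mX)$, and since $\mXh''=\mXh$, every $\cX$-side contribution to $\cEdef$ and $\cErec$ is literally unchanged. Second, for the $\cY$-side I would substitute $\mYh''=\mYh\mR^\top+\vone\vt^\top$ and use that $\vone$ is a column of $\begin{pmatrix}\mY & \vone\end{pmatrix}$, so $\mPiY\vone=\vzero$ and hence $\DeltaYproj\vone=\vzero$; this annihilates the translation, leaving $\DeltaYproj\mYh''=(\DeltaYproj\mYh)\mR^\top$, whose PLBO quadratic form is preserved because $\mR$ is orthogonal. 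Third, in the $\cY$-side reconstruction residual the rigid motion applies simultaneously to $\mYhJ$ and to $\mXI''=\mXI\mR^\top+\vone\vt^\top$, and the translation cancels through $\mP^\top\vone=\vone$ while the rotation cancels by orthogonality. Finally $\cEori(\mP)$ depends only on $\mP$ and on orientation-aware features $\vhXI,\vhYJ$, which are invariant under proper rotations; this is precisely why part~(b) is stated for $\rmSO{3}$ rather than $\rmO{3}$. Together these give $\cErec+\lambdadef\cEdef+\lambdaori\cEori$ unchanged pointwise on the feasible set.

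For part~(a) the first and genuinely new step is to establish scale-invariance of the PLBO, which Lemma~\ref{lemma:invariance-rigid-body} does not cover. This rests on two observations: the cotangent stiffness matrix $\DeltaXstiff$ depends only on triangle angles and is therefore unchanged under $\mX\mapsto s\mX$ (this is exactly what ``area-normalised'' buys us), and the projector $\mPiXp$ equals $\mPiX$ because rescaling the coordinate columns of $\begin{pmatrix}s\mX & \vone\end{pmatrix}$ does not change their span, while the orthogonal projection depends only on that span. Hence $\Delta(s\mX)=\Delta(\mX)$. With $\mXh'=\mXh$ fixed the $\cX$-side terms are again unchanged, whereas substituting $\mYh'=s\mYh$ and noting that the $\cX$-keypoints become $s\mXI$ multiplies the (un-normalised) $\cY$-side deformation and reconstruction residuals by $s^2$; the scale-normalisation built into the energy cancels this factor, so that—together with scale-invariance of the PLBO and of the orientation features—every term is preserved under the bijection, exactly as in part~(b).

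The main obstacle is the scaling in part~(a): unlike the rigid case, the substitution is asymmetric, touching only $\mYh$ and the $\cX$-keypoints, so the delicate point is to verify that this rescaling does not re-weight the $\cX$-side against the $\cY$-side and thereby perturb the optimal permutation $\mP$. The argument must lean on the scale-normalisation in $\cErec$ and $\cEdef$ together with the scale-invariance of the PLBO to upgrade ``each $\cY$-side term is positively homogeneous of degree two in $s$'' to ``each term is genuinely invariant''. Once that is secured, both parts collapse to the same bijection-plus-termwise-invariance template, and global optimality of $\bigl(\mP,\mXh,s\mYh\bigr)$ and $\bigl(\mP,\mXh,\mYh\mR^\top+\vone\vt^\top\bigr)$ follows because the bijection maps the global minimiser to the global minimiser.
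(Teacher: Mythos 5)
Your overall strategy is the same as the paper's proof: a variable substitution (bijection) that fixes $\mP$ and $\mXh$ and acts only on $\mYh$, followed by term-wise verification using (i) scale-invariance of the stiffness matrix and of the projector, (ii) Lemma~\ref{lemma:invariance-rigid-body} for the $\cX$-side in part~(b), (iii) $\mP^\top\vone=\vone$ to cancel translations, (iv) $\DeltaYproj\vone=\vzero$ to annihilate $\vone\vt^\top$ in the deformation term, and (v) orthogonal invariance of the Frobenius norm. Two of your remarks are even slightly sharper than the paper: the span argument for $\mPiXp=\mPiX$ is cleaner than the paper's explicit computation with $\mS=\mathrm{diag}(s,s,s,1)$, and your explanation of why part~(b) is restricted to $\rmSO{3}$ (the orientation feature flips sign under reflections) is correct and only implicit in the paper.

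However, one quantitative claim in part~(a) is wrong as stated and, taken literally, would break your own cancellation argument: you assert the un-normalised $\cY$-side residuals are multiplied by $s^2$ (and later call them ``positively homogeneous of degree two in $s$''). The paper's energies use \emph{unsquared} Frobenius norms, so under $\mYh'=s\mYh$ and keypoints $s\mXI$ of $\cX'$ the residuals scale by $s$, not $s^2$:
\begin{equation*}
\bigl\|\mYhJ'-\mP^\top (s\mXI)\bigr\|_F = s\,\bigl\|\mYhJ-\mP^\top\mXI\bigr\|_F,
\qquad
\bigl\|\DeltaYproj\mYh'\bigr\|_F = s\,\bigl\|\DeltaYproj\mYh\bigr\|_F,
\end{equation*}
and this degree-one homogeneity is exactly what the normalisation cancels, since the diameter itself scales linearly, $1/(n\,\dcX')=1/(n\,s\,\dcX)$. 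If the terms really scaled as $s^2$, the single factor $1/s$ supplied by the diameter would \emph{not} cancel them and the lemma would fail; conversely, had you squared the norms you would also need squared diameters in the normalisation. With the degree corrected to one, your argument coincides with the paper's proof.
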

Lemma~\ref{lemma:invariance-scale-rigid} holds analogously for rescaling and rigid transformations of $\cY$ due to the symmetry of the formulation. We provide proofs in the supplementary material.

The \textbf{reconstruction term} $\mathcal{E}_\mathrm{rec}$ encourages the reconstructed keypoints $\mXhI$ to be best aligned with the given keypoints of $\cY$ after they have been reordered via the permutation $\mP$ (we apply the same in a symmetric manner for reconstructing keypoints $\mYhJ$ from keypoints of $\cX$  reordered via $\mP^\top$):
\begin{align}
    \mathcal{E}_\mathrm{rec} = \frac{1}{n\dcY}\bigl\|\mXhI{-}\mP\mYJ\bigr\|_F{+}\frac{1}{n\dcX}\bigl\|\mYhJ{-}\mP^\top\mXI\bigr\|_F,
\end{align}
where $\dcX$ and $\dcY$ denote the diameter of $\cX$ and $\cY$, \ie the maximum among the geodesic distances between all pairs of vertices for each shape, respectively.

The \textbf{deformation term} $\mathcal{E}_\mathrm{def}$ favours reconstructions $\mXh$  which have a similar local geometry to $\cX$ as it was encoded in its PLBO $\DeltaXproj$ (again, we apply the same in a symmetric manner for  favouring reconstructions $\mYh$ with a similar local geometry to $\cY$ as encoded by $\DeltaYproj$):
\begin{align}
    \mathcal{E}_\mathrm{def} = \frac{1}{\absmX\dcY}\bigl\|\DeltaXproj\mXh\bigr\|_F+\frac{1}{\absmY\dcX}\bigl\|\DeltaYproj\mYh\bigr\|_F.
    \label{eq:deformTerm}
\end{align}
Fig.~\ref{fig:lbovsproj} shows how the use of the PLBO affects $\mXh$.  

The \textbf{orientation-aware term} 
$\mathcal{E}_\mathrm{ori}$ preserves the surface orientation in the correspondence through extrinsic orientation-aware feature maps $\vhX$ and $\vhY$:
\begin{align}
    \label{eq:term-ori-aware}
    \mathcal{E}_\mathrm{ori} = \frac{1}{n}\bigl\| \vhXI-\mP\vhYJ \bigr\|_F.
\end{align}
Even though the PLBO uses extrinsic information, it is still agnostic to intrinsic symmetries, such as left-right flips, due to the in-built invariance under the group $\rmE{3}$, which includes mirror-symmetries.
Thus, we propose the orientation-aware regularisation term to disambiguate such intrinsic symmetries. 
Similar solutions were developed previously for the functional maps framework~\cite{ren2018continuous}. 
Our orientation-aware feature map is defined as 
\begin{equation}\label{eq:orientation-feature}
\vhX=
\begin{pmatrix}
\bigl\langle(\nabla\vfX)_1~~~\times(\nabla\vgX)_1~~~,\vnX_1\bigr\rangle \\
\vdots \\
\bigl\langle(\nabla\vfX)_{\absmX} \times(\nabla\vgX)_{\absmX},\vnX_{\absmX}\bigr\rangle 
\end{pmatrix},
\end{equation}
where $\vnX_i\in\bbR^3$ are the unit outer normals at the $i$-th vertex  of $\cX$, and $\vfX$ and $\vgX$ are two distinct scalar fields (see Sec.~\ref{subsec:implementation} and supplementary material for details).
The outer product of their normalised gradient fields convey their orientation through the right-hand rule. Thus, the resulting feature field $\vhX$ implicitly encodes such orientation information. Comparing the features on both surfaces $\cX$ and $\cY$ allows us to disambiguate intrinsic symmetries. 

At the same time, $\vhX$ maintains (proper) rotation invariance. 
This follows directly from the fact that the gradient fields $\nabla\vfX,\nabla\vgX$ and normal field $\vnX$ are both rotation-equivariant, leaving their inner product unchanged. %

\vspace{-1mm}
\section{Experiments} \label{sec:experiments}

In this section, we  qualitatively and quantitatively evaluate our proposed method on several shape matching datasets (including TOSCA \cite{Bronstein:2008:NGN:1462123}, SMAL \cite{Zuffi:CVPR:2017}, SHREC20 Non-Isometric \cite{Dyke:2020:track.b} and the DeformingThings4D-Matching dataset \cite{magnet2022smooth}), and study its performance in terms of accuracy, global optimality and global scaling dependency. 

\vspace{-0.5mm}
\subsection{Evaluation Metric}
\textbf{Accuracy.} We evaluate the correspondence accuracy according to the Princeton benchmark protocol \cite{Kim11}, which reports the \emph{percentage of correct keypoints (PCK)} for the given sparse control points. For dense methods \cite{roetzer2022scalable}, we only consider this subset of sparse keypoints to enable a direct comparison of accuracy.

\textbf{Optimality.} We use the \emph{relative optimality gap} to measure the degree of optimality. It is defined as
\begin{equation}
    \text{gap} = \Bigl| \frac{ \overline{\text{obj}} - \underline{\text{obj}} }{\overline{\text{obj}}} \Bigr|,
    \label{eq:def_relGap}
\end{equation}
where $\overline{\text{obj}}$ is the best upper bound, and $\underline{\text{obj}}$ the best lower bound of the objective. 

\vspace{-0.5mm}
\subsection{Implementation Details}\label{subsec:implementation}

\textbf{Optimisation.}
The optimisation of all MIP-based methods was performed using MOSEK 10.0.35 \cite{mosek} on a desktop computer with AMD Ryzen 9 5950X 16-core processor and 64GB RAM. 
A time budget of 1h is set for each problem instance and its global optimum is considered to be found if the relative gap is reduced below $10^{-2}$.

\textbf{Parameters.}
We set $\lambda_\mathrm{def}=5$ and $\lambda_\mathrm{ori}=0.025$ in Eqn.~\eqref{eq:optimization-problem} and choose two entries with different frequencies from the scale-invariant wave kernel signature \cite{aubry2011wks} as $\vfX$ and $\vgX$ for the orientation-aware feature defined in Sec.~\ref{subsec:sparse-matching}. 
We coin the full formulation in Eqn.~\eqref{eq:optimization-problem} as \emph{Ours} and additionally show results without the orientation term $\mathcal{E}_\mathrm{ori}$ as \emph{Ours w/o Ori.}. Moreover, we note that dropping either $\cEdef$ or $\cErec$ will lead to a linear assignment problem for the permutation $\mP$ and a failed reconstruction in $\mXh,~\mYh$.

\textbf{Solution Pruning.}\label{sec:pruning}
We employ the same solution pruning strategy as  done in \cite{bernard2020mina} to reduce the search space. 
For each keypoint the pruning process keeps $k$ potential matches based on the similarity of the histogram of geodesics at each keypoint, see \cite{bernard2020mina} for details. 
In our experiments, we apply this pruning with $k=11$ to all methods that admit an initial search space reduction, namely to our method, MINA and PMSDP.

\vspace{-0.5mm}
\subsection{Competing Methods}

We compare against state-of-the-art matching methods which exhibit a global optimisation flavour.

\textbf{\smcomb}~is a combinatorial solver for the elastic shape matching formalism~\cite{windheuser2011geometrically} proposed by Roetzer et al.~\cite{roetzer2022scalable}.
It solves an integer linear program to find triangle-triangle matchings. 
This approach heavily relies on the discretisation of both shapes and might fail to produce a feasible solution entirely in cases of large triangulation discrepancies.

\textbf{PMSDP} is a convex semidefinite programming relaxation approach proposed by Maron et al.~\cite{maron2016point}.
It uses the LBO eigenbases as a high-dimensional feature embedding in the procrustes matching problem and is therefore an intrinsic and global scaling invariant formalism. 
Its implementation uses a pruning strategy to rule out unlikely correspondences using the average geodesic  distance (AGD) descriptor \cite{Kim11}. We found this to be too aggressive and often pruning the correct ground truth matches, thus, leading to poor performance. It is reported in Fig.~\ref{fig:pck_tosca} under \emph{PMSDP vanilla}. 
Hence, we replace its pruning strategy with ours, as discussed in Sec.~\ref{subsec:implementation}, while keeping all other default options, and coin it as \emph{PMSDP tuned}.

\textbf{MINA.}
This method proposed by Bernard et al. \cite{bernard2020mina} is the closest to ours among all baselines. 
Although MINA also employs a MIP model for sparse deformable shape matching problems, the key difference is that their approach is extrinsic due to its piece-wise affine deformation of the source to the target shape that is combined with a global rigid transformation, which in turn makes it computationally more expensive. 
 Note that MINA originally uses an aggressive search space pruning which we found to prune a large portion of correct matchings. Thus, we use a more conservative pruning by allowing twice as many potential matches as MINA.

\vspace{-0.5mm}
\subsection{Isometric Shape Matching}

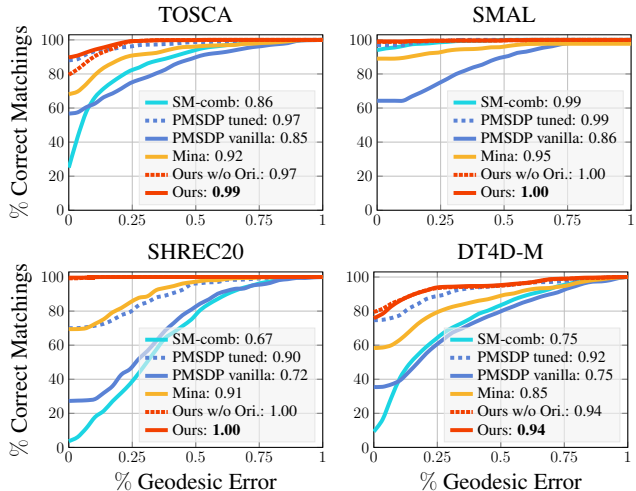
\begin{figure}[t]
    \centering
    \begin{tabular}{cc}
    \hspace{-1.2cm}
         \newcommand{\pckLineWidth}{3pt}
\newcommand{\plotWidth}{\columnwidth}
\newcommand{\plotHeight}{0.75\columnwidth}
\newcommand{\pckTitle}{TOSCA}
\definecolor{cPLOT0}{RGB}{28,213,227}
\definecolor{cPLOT1}{RGB}{80,150,80}
\definecolor{cPLOT2}{RGB}{90,130,213}
\definecolor{cPLOT3}{RGB}{247,179,43}
\definecolor{cPLOT5}{RGB}{242,64,0}

\pgfplotsset{%
    label style = {font=\large},
    tick label style = {font=\large},
    title style =  {font=\LARGE},
    legend style={  fill= gray!10,
                    fill opacity=0.6, 
                    font=\large,
                    draw=gray!20, %
                    text opacity=1}
}
\begin{tikzpicture}[scale=0.5, transform shape]
	\begin{axis}[
		width=\plotWidth,
		height=\plotHeight,
		grid=major,
		title=\pckTitle,
		legend style={
			at={(0.97,0.03)},
			anchor=south east,
			legend columns=1},
		legend cell align={left},
		ylabel={{\LARGE$\%$ Correct Matchings}},
		xmin=0,
        xmax=1,
        ylabel near ticks,
        xtick={0, 0.25, 0.5, 0.75, 1},
        ymin=0,
        ymax=103,
        ytick={0, 20, 40, 60, 80, 100},
	]
    \addplot [color=cPLOT0, smooth, line width=\pckLineWidth]
    table[row sep=crcr]{%
0 25\\
0.034483 42.7982\\
0.068966 57.7523\\
0.10345 66.3761\\
0.13793 71.5138\\
0.17241 75.4128\\
0.2069 78.8532\\
0.24138 81.8807\\
0.27586 83.7156\\
0.31034 86.1927\\
0.34483 87.7064\\
0.37931 88.8991\\
0.41379 90.367\\
0.44828 91.8807\\
0.48276 93.3486\\
0.51724 94.7706\\
0.55172 95.7339\\
0.58621 96.2844\\
0.62069 96.9266\\
0.65517 97.3394\\
0.68966 97.844\\
0.72414 98.3028\\
0.75862 98.578\\
0.7931 99.0367\\
0.82759 99.0826\\
0.86207 99.1284\\
0.89655 99.7248\\
0.93103 99.9083\\
0.96552 99.9541\\
1 100\\
    };
    \addlegendentry{\textcolor{black}{\smcomb: 0.86}}
    \addplot [color=cPLOT2, dashed, smooth, line width=\pckLineWidth]
    table[row sep=crcr]{%
0 88.0464\\
0.034483 88.6724\\
0.068966 91.4988\\
0.10345 93.3373\\
0.13793 94.164\\
0.17241 94.7166\\
0.2069 95.6267\\
0.24138 96.0706\\
0.27586 96.8973\\
0.31034 97.2069\\
0.34483 97.2988\\
0.37931 97.5641\\
0.41379 97.7597\\
0.44828 98.3126\\
0.48276 98.6256\\
0.51724 98.7038\\
0.55172 98.7651\\
0.58621 98.9828\\
0.62069 98.9828\\
0.65517 98.9828\\
0.68966 99.061\\
0.72414 99.1393\\
0.75862 99.2958\\
0.7931 99.374\\
0.82759 99.4523\\
0.86207 99.687\\
0.89655 99.8435\\
0.93103 100\\
1 100\\
    };
    \addlegendentry{\textcolor{black}{PMSDP tuned: 0.97}}
    \addplot [color=cPLOT2, smooth, line width=\pckLineWidth]
    table[row sep=crcr]{%
0 56.73\\
0.034483 57.4342\\
0.068966 60.481\\
0.10345 62.6683\\
0.13793 66.3726\\
0.17241 68.5794\\
0.2069 71.4291\\
0.24138 74.5937\\
0.27586 76.5606\\
0.31034 78.2648\\
0.34483 80.4017\\
0.37931 82.4969\\
0.41379 85.3889\\
0.44828 87.3755\\
0.48276 88.9122\\
0.51724 90.6558\\
0.55172 91.7396\\
0.58621 92.3264\\
0.62069 93.0025\\
0.65517 93.947\\
0.68966 94.8972\\
0.72414 95.7802\\
0.75862 96.328\\
0.7931 97.0769\\
0.82759 97.552\\
0.86207 98.0885\\
0.89655 99.3573\\
0.93103 99.7764\\
0.96552 99.8435\\
1 100\\
    };
    \addlegendentry{\textcolor{black}{PMSDP vanilla: 0.85}}
    \addplot [color=cPLOT3, smooth, line width=\pckLineWidth]
    table[row sep=crcr]{%
0 68.1383\\
0.034483 69.6267\\
0.068966 74.0399\\
0.10345 79.4878\\
0.13793 83.6568\\
0.17241 86.4506\\
0.2069 88.9657\\
0.24138 90.6113\\
0.27586 91.3209\\
0.31034 91.8574\\
0.34483 92.3828\\
0.37931 93.6598\\
0.41379 94.8782\\
0.44828 95.7445\\
0.48276 95.929\\
0.51724 96.3173\\
0.55172 96.6133\\
0.58621 96.7612\\
0.62069 96.9177\\
0.65517 97.0742\\
0.68966 97.6917\\
0.72414 97.77\\
0.75862 98.1612\\
0.7931 98.3568\\
0.82759 98.7089\\
0.86207 99.1393\\
0.89655 99.687\\
0.93103 99.8826\\
0.96552 99.9218\\
1 100\\
    };
    \addlegendentry{\textcolor{black}{Mina: 0.92}}
    \addplot [color=cPLOT5, smooth, densely dotted, line width=\pckLineWidth]
    table[row sep=crcr]{%
0 79.5787\\
0.034483 82.7224\\
0.068966 86.8528\\
0.10345 90.4995\\
0.13793 92.9835\\
0.17241 94.8527\\
0.2069 97.5738\\
0.24138 99.0462\\
0.27586 99.3949\\
0.31034 99.6209\\
0.34483 99.8163\\
0.37931 99.8469\\
0.41379 99.8775\\
0.44828 99.8775\\
0.48276 99.8775\\
0.51724 99.9081\\
0.55172 100\\
1 100\\
    };
    \addlegendentry{\textcolor{black}{Ours w/o Ori.: 0.97}}
    \addplot [color=cPLOT5, smooth, line width=\pckLineWidth]
    table[row sep=crcr]{%
0 89.7377\\
0.034483 90.7826\\
0.068966 92.6409\\
0.10345 94.3062\\
0.13793 95.5914\\
0.17241 96.9603\\
0.2069 98.239\\
0.24138 99.1942\\
0.27586 99.3643\\
0.31034 99.5903\\
0.34483 99.8163\\
0.37931 99.8469\\
0.41379 99.8775\\
0.44828 99.8775\\
0.48276 99.8775\\
0.51724 99.9081\\
0.55172 100\\
1 100\\
    };
    \addlegendentry{\textcolor{black}{Ours: \textbf{0.99}}}
        
	\end{axis}
\end{tikzpicture}& 
    \hspace{-1.25cm}
         \newcommand{\pckLineWidth}{3pt}
\newcommand{\plotWidth}{\columnwidth}
\newcommand{\plotHeight}{0.75\columnwidth}
\newcommand{\pckTitle}{SMAL}
\definecolor{cPLOT0}{RGB}{28,213,227}
\definecolor{cPLOT1}{RGB}{80,150,80}
\definecolor{cPLOT2}{RGB}{90,130,213}
\definecolor{cPLOT3}{RGB}{247,179,43}
\definecolor{cPLOT5}{RGB}{242,64,0}

\pgfplotsset{%
    label style = {font=\large},
    tick label style = {font=\large},
    title style =  {font=\LARGE},
    legend style={  fill= gray!10,
                    fill opacity=0.6, 
                    font=\large,
                    draw=gray!20, %
                    text opacity=1}
}
\begin{tikzpicture}[scale=0.5, transform shape]
	\begin{axis}[
		width=\plotWidth,
		height=\plotHeight,
		grid=major,
		title=\pckTitle,
		legend style={
			at={(0.97,0.03)},
			anchor=south east,
			legend columns=1},
		legend cell align={left},
		xmin=0,
        xmax=1,
        ylabel near ticks,
        xtick={0, 0.25, 0.5, 0.75, 1},
        ymin=0,
        ymax=103,
        ytick={0, 20, 40, 60, 80, 100},
	]
    \addplot [color=cPLOT0, smooth, line width=\pckLineWidth]
    table[row sep=crcr]{%
0 94\\
0.034483 95\\
0.068966 95.5455\\
0.10345 96.4545\\
0.13793 97.4545\\
0.17241 97.7273\\
0.2069 98\\
0.24138 98.5455\\
0.27586 98.8182\\
0.31034 99.2727\\
0.34483 99.3636\\
0.37931 99.4545\\
0.41379 99.4545\\
0.44828 99.6364\\
0.48276 99.9091\\
0.51724 99.9091\\
0.55172 100\\
1 100\\
    };
    \addlegendentry{\textcolor{black}{\smcomb: 0.99}}
    \addplot [color=cPLOT2, dashed, smooth, line width=\pckLineWidth]
    table[row sep=crcr]{%
0 96.9091\\
0.034483 96.9091\\
0.068966 96.9091\\
0.10345 97\\
0.13793 98\\
0.17241 98.7273\\
0.2069 99\\
0.24138 99.5455\\
0.27586 99.5455\\
0.31034 99.9091\\
0.34483 99.9091\\
0.37931 100\\
1 100\\
    };
    \addlegendentry{\textcolor{black}{PMSDP tuned: 0.99}}
    \addplot [color=cPLOT2, smooth, line width=\pckLineWidth]
    table[row sep=crcr]{%
0 64.2727\\
0.034483 64.2727\\
0.068966 64.2727\\
0.10345 64.3636\\
0.13793 66.9091\\
0.17241 69.1818\\
0.2069 71.7273\\
0.24138 74.4545\\
0.27586 77.0909\\
0.31034 79.9091\\
0.34483 81.9091\\
0.37931 83.9091\\
0.41379 85.4545\\
0.44828 87.7273\\
0.48276 89.4545\\
0.51724 90.5455\\
0.55172 92.3636\\
0.58621 93.0909\\
0.62069 94.2727\\
0.65517 95.4545\\
0.68966 96.2727\\
0.72414 97.4545\\
0.75862 98.1818\\
0.7931 98.8182\\
0.82759 99.0909\\
0.86207 99.1818\\
0.89655 99.3636\\
0.93103 99.7273\\
0.96552 99.9091\\
1 100\\
    };
    \addlegendentry{\textcolor{black}{PMSDP vanilla: 0.86}}
    \addplot [color=cPLOT3, smooth, line width=\pckLineWidth]
    table[row sep=crcr]{%
0 89\\
0.034483 89\\
0.068966 89\\
0.10345 89.4545\\
0.13793 90.3636\\
0.17241 91.2727\\
0.2069 92.1818\\
0.24138 92.5455\\
0.27586 93.6364\\
0.31034 94.2727\\
0.34483 94.5455\\
0.37931 94.6364\\
0.41379 95\\
0.44828 95.6364\\
0.48276 95.7273\\
0.51724 96.4545\\
0.55172 97.0909\\
0.58621 97.5455\\
0.62069 97.6364\\
0.65517 97.7273\\
0.68966 97.7273\\
0.72414 97.7273\\
0.75862 97.7273\\
0.7931 97.7273\\
0.82759 97.7273\\
0.86207 97.7273\\
0.89655 97.7273\\
0.93103 97.7273\\
0.96552 97.7273\\
1 97.7273\\
    };
    \addlegendentry{\textcolor{black}{Mina: 0.95}}
    \addplot [color=cPLOT5, smooth, densely dotted, line width=\pckLineWidth]
    table[row sep=crcr]{%
0 98.8182\\
0.034483 98.8182\\
0.068966 98.8182\\
0.10345 98.8182\\
0.13793 99.1818\\
0.17241 99.1818\\
0.2069 99.4545\\
0.24138 99.4545\\
0.27586 99.5455\\
0.31034 99.6364\\
0.34483 99.6364\\
0.37931 99.6364\\
0.41379 99.6364\\
0.44828 99.6364\\
0.48276 99.6364\\
0.51724 99.8182\\
0.55172 99.8182\\
0.58621 99.8182\\
0.62069 100\\
1 100\\
    };
    \addlegendentry{\textcolor{black}{Ours w/o Ori.: 1.00}}
    \addplot [color=cPLOT5, smooth, line width=\pckLineWidth]
    table[row sep=crcr]{%
0 99.0909\\
0.034483 99.0909\\
0.068966 99.0909\\
0.10345 99.0909\\
0.13793 99.3636\\
0.17241 99.3636\\
0.2069 99.5455\\
0.24138 99.5455\\
0.27586 99.5455\\
0.31034 99.6364\\
0.34483 99.6364\\
0.37931 99.6364\\
0.41379 99.6364\\
0.44828 99.6364\\
0.48276 99.6364\\
0.51724 99.8182\\
0.55172 99.8182\\
0.58621 99.8182\\
0.62069 100\\
1 100\\
    };
    \addlegendentry{\textcolor{black}{Ours: \textbf{1.00}}}
        
	\end{axis}
\end{tikzpicture}\\
     \hspace{-1.2cm}
         \newcommand{\pckLineWidth}{3pt}
\newcommand{\plotWidth}{\columnwidth}
\newcommand{\plotHeight}{0.75\columnwidth}
\newcommand{\pckTitle}{SHREC20}
\definecolor{cPLOT0}{RGB}{28,213,227}
\definecolor{cPLOT1}{RGB}{80,150,80}
\definecolor{cPLOT2}{RGB}{90,130,213}
\definecolor{cPLOT3}{RGB}{247,179,43}
\definecolor{cPLOT5}{RGB}{242,64,0}

\pgfplotsset{%
    label style = {font=\large},
    tick label style = {font=\large},
    title style =  {font=\LARGE},
    legend style={  fill= gray!10,
                    fill opacity=0.6, 
                    font=\large,
                    draw=gray!20, %
                    text opacity=1}
}
\begin{tikzpicture}[scale=0.5, transform shape]
	\begin{axis}[
		width=\plotWidth,
		height=\plotHeight,
		grid=major,
		title=\pckTitle,
		legend style={
			at={(0.97,0.03)},
			anchor=south east,
			legend columns=1},
		legend cell align={left},
		ylabel={{\LARGE$\%$ Correct Matchings}},
        xlabel={\LARGE$\%$ Geodesic Error},
		xmin=0,
        xmax=1,
        ylabel near ticks,
        xtick={0, 0.25, 0.5, 0.75, 1},
		ymin=0,
        ymax=103,
        ytick={0, 20, 40, 60, 80, 100},
	]
    \addplot [color=cPLOT0, smooth, line width=\pckLineWidth]
    table[row sep=crcr]{%
0 3.6364\\
0.034483 6\\
0.068966 11.2727\\
0.10345 17.8182\\
0.13793 21.4545\\
0.17241 27.0909\\
0.2069 32\\
0.24138 37.2727\\
0.27586 43.6364\\
0.31034 50.5455\\
0.34483 57.4545\\
0.37931 64.1818\\
0.41379 67.6364\\
0.44828 71.2727\\
0.48276 75.0909\\
0.51724 80.5455\\
0.55172 84.1818\\
0.58621 87.0909\\
0.62069 89.8182\\
0.65517 92\\
0.68966 93.8182\\
0.72414 95.0909\\
0.75862 96.5455\\
0.7931 97.4545\\
0.82759 98.7273\\
0.86207 99.6364\\
0.89655 100\\
1 100\\
    };
    \addlegendentry{\textcolor{black}{\smcomb: 0.67}}
    \addplot [color=cPLOT2, dashed, smooth, line width=\pckLineWidth]
    table[row sep=crcr]{%
0 70\\
0.034483 70\\
0.068966 70.5455\\
0.10345 71.4545\\
0.13793 72.5455\\
0.17241 74.7273\\
0.2069 77.6364\\
0.24138 79.4545\\
0.27586 83.4545\\
0.31034 85.2727\\
0.34483 88\\
0.37931 89.0909\\
0.41379 90.7273\\
0.44828 92.5455\\
0.48276 95.8182\\
0.51724 96.3636\\
0.55172 96.9091\\
0.58621 97.0909\\
0.62069 98.3636\\
0.65517 98.3636\\
0.68966 98.5455\\
0.72414 98.9091\\
0.75862 99.2727\\
0.7931 99.4545\\
0.82759 99.6364\\
0.86207 99.8182\\
0.89655 100\\
1 100\\
    };
    \addlegendentry{\textcolor{black}{PMSDP tuned: 0.90}}
    \addplot [color=cPLOT2, smooth, line width=\pckLineWidth]
    table[row sep=crcr]{%
0 27.2727\\
0.034483 27.4545\\
0.068966 27.6364\\
0.10345 28.3636\\
0.13793 32\\
0.17241 36\\
0.2069 42.7273\\
0.24138 46\\
0.27586 52\\
0.31034 56.7273\\
0.34483 62.5455\\
0.37931 68.5455\\
0.41379 73.0909\\
0.44828 77.6364\\
0.48276 80.9091\\
0.51724 84.3636\\
0.55172 86.9091\\
0.58621 90.1818\\
0.62069 91.8182\\
0.65517 93.0909\\
0.68966 93.2727\\
0.72414 95.0909\\
0.75862 95.6364\\
0.7931 96.5455\\
0.82759 98.1818\\
0.86207 98.7273\\
0.89655 99.4545\\
0.93103 99.4545\\
0.96552 99.8182\\
1 100\\
    };
    \addlegendentry{\textcolor{black}{PMSDP vanilla: 0.72}}
    \addplot [color=cPLOT3, smooth, line width=\pckLineWidth]
    table[row sep=crcr]{%
0 69.4545\\
0.034483 69.4545\\
0.068966 70\\
0.10345 72.5455\\
0.13793 75.2727\\
0.17241 79.2727\\
0.2069 81.8182\\
0.24138 85.0909\\
0.27586 87.8182\\
0.31034 88.7273\\
0.34483 92.3636\\
0.37931 93.6364\\
0.41379 94.7273\\
0.44828 96.1818\\
0.48276 96.9091\\
0.51724 97.6364\\
0.55172 98.1818\\
0.58621 98.1818\\
0.62069 98.9091\\
0.65517 99.2727\\
0.68966 99.4545\\
0.72414 99.4545\\
0.75862 99.8182\\
0.7931 99.8182\\
0.82759 100\\
1 100\\
    };
    \addlegendentry{\textcolor{black}{Mina: 0.91}}
    \addplot [color=cPLOT5, smooth, densely dotted, line width=\pckLineWidth]
    table[row sep=crcr]{%
0 99.2727\\
0.034483 99.2727\\
0.068966 99.6364\\
0.10345 99.6364\\
0.13793 100\\
1 100\\
    };
    \addlegendentry{\textcolor{black}{Ours w/o Ori.: 1.00}}
    \addplot [color=cPLOT5, smooth, line width=\pckLineWidth]
    table[row sep=crcr]{%
0 99.6364\\
0.034483 99.6364\\
0.068966 99.6364\\
0.10345 99.6364\\
0.13793 100\\
1 100\\
    };
    \addlegendentry{\textcolor{black}{Ours: \textbf{1.00}}}
        
	\end{axis}
\end{tikzpicture}& 
    \hspace{-1.25cm}
         \newcommand{\pckLineWidth}{3pt}
\newcommand{\plotWidth}{\columnwidth}
\newcommand{\plotHeight}{0.75\columnwidth}
\newcommand{\pckTitle}{DT4D-M}
\definecolor{cPLOT0}{RGB}{28,213,227}
\definecolor{cPLOT1}{RGB}{80,150,80}
\definecolor{cPLOT2}{RGB}{90,130,213}
\definecolor{cPLOT3}{RGB}{247,179,43}
\definecolor{cPLOT5}{RGB}{242,64,0}

\pgfplotsset{%
    label style = {font=\large},
    tick label style = {font=\large},
    title style =  {font=\LARGE},
    legend style={  fill= gray!10,
                    fill opacity=0.6, 
                    font=\large,
                    draw=gray!20, %
                    text opacity=1}
}
\begin{tikzpicture}[scale=0.5, transform shape]
	\begin{axis}[
		width=\plotWidth,
		height=\plotHeight,
		grid=major,
		title=\pckTitle,
		legend style={
			at={(0.97,0.03)},
			anchor=south east,
			legend columns=1},
		legend cell align={left},
        xlabel={\LARGE$\%$ Geodesic Error},
		xmin=0,
        xmax=1,
        ylabel near ticks,
        xtick={0, 0.25, 0.5, 0.75, 1},
		ymin=0,
        ymax=103,
        ytick={0, 20, 40, 60, 80, 100},
	]
    \addplot [color=cPLOT0, smooth, line width=\pckLineWidth]
    table[row sep=crcr]{%
0 9.2778\\
0.034483 16.3333\\
0.068966 29.1667\\
0.10345 40.8889\\
0.13793 48.6111\\
0.17241 54.1667\\
0.2069 59.0556\\
0.24138 63.2778\\
0.27586 67.3333\\
0.31034 70.3889\\
0.34483 72.8333\\
0.37931 75.5\\
0.41379 78.5\\
0.44828 80.2222\\
0.48276 82.4444\\
0.51724 84.5556\\
0.55172 86.8333\\
0.58621 88.6111\\
0.62069 90.1111\\
0.65517 91.3333\\
0.68966 92.7778\\
0.72414 94.5556\\
0.75862 95.6111\\
0.7931 96.8889\\
0.82759 98\\
0.86207 98.6667\\
0.89655 99.2778\\
0.93103 99.6667\\
0.96552 99.9444\\
1 100\\
    };
    \addlegendentry{\textcolor{black}{\smcomb: 0.75}}
    \addplot [color=cPLOT2, dashed, smooth, line width=\pckLineWidth]
    table[row sep=crcr]{%
0 74.6667\\
0.034483 74.8889\\
0.068966 75.9444\\
0.10345 77.6667\\
0.13793 81.5556\\
0.17241 84.0556\\
0.2069 87.1111\\
0.24138 88.6111\\
0.27586 89.5556\\
0.31034 91.6667\\
0.34483 92.7222\\
0.37931 93.4444\\
0.41379 94\\
0.44828 94.3333\\
0.48276 94.8889\\
0.51724 95.3333\\
0.55172 95.6111\\
0.58621 96.2222\\
0.62069 96.7222\\
0.65517 97\\
0.68966 97.0556\\
0.72414 97.3333\\
0.75862 97.8333\\
0.7931 98.4444\\
0.82759 98.8333\\
0.86207 98.9444\\
0.89655 99.5\\
0.93103 99.8333\\
0.96552 99.8333\\
1 100\\
    };
    \addlegendentry{\textcolor{black}{PMSDP tuned: 0.92}}
    \addplot [color=cPLOT2, smooth, line width=\pckLineWidth]
    table[row sep=crcr]{%
0 35.4444\\
0.034483 35.5556\\
0.068966 36.9444\\
0.10345 39.3333\\
0.13793 44.2778\\
0.17241 49.6111\\
0.2069 55.1667\\
0.24138 59.6667\\
0.27586 63.6111\\
0.31034 67.4444\\
0.34483 69.6111\\
0.37931 72.2222\\
0.41379 74.5556\\
0.44828 76.8333\\
0.48276 78.7778\\
0.51724 80.9444\\
0.55172 82.8889\\
0.58621 84.9444\\
0.62069 86.5556\\
0.65517 88.3333\\
0.68966 89.7222\\
0.72414 91\\
0.75862 93.0556\\
0.7931 94.6111\\
0.82759 96.2778\\
0.86207 97.2778\\
0.89655 97.4444\\
0.93103 98.1667\\
0.96552 99.4444\\
1 100\\
    };
    \addlegendentry{\textcolor{black}{PMSDP vanilla: 0.75}}
    \addplot [color=cPLOT3, smooth, line width=\pckLineWidth]
    table[row sep=crcr]{%
0 58.3333\\
0.034483 58.6111\\
0.068966 59.9444\\
0.10345 63.2778\\
0.13793 67.7222\\
0.17241 72.3889\\
0.2069 76\\
0.24138 78.7778\\
0.27586 80.8889\\
0.31034 82.5556\\
0.34483 83.7222\\
0.37931 84.9444\\
0.41379 86.1111\\
0.44828 86.9444\\
0.48276 88.3333\\
0.51724 89.6667\\
0.55172 90.3889\\
0.58621 91.8333\\
0.62069 92.8333\\
0.65517 93.3889\\
0.68966 93.5556\\
0.72414 94.2222\\
0.75862 94.9444\\
0.7931 96.3333\\
0.82759 97.3333\\
0.86207 98.1667\\
0.89655 98.6667\\
0.93103 99.3333\\
0.96552 99.7222\\
1 100\\
    };
    \addlegendentry{\textcolor{black}{Mina: 0.85}}
    \addplot [color=cPLOT5, smooth, densely dotted, line width=\pckLineWidth]
    table[row sep=crcr]{%
0 79.4915\\
0.034483 81.2994\\
0.068966 84.4068\\
0.10345 86.4407\\
0.13793 88.6441\\
0.17241 90.678\\
0.2069 92.2599\\
0.24138 93.3333\\
0.27586 93.7853\\
0.31034 94.0678\\
0.34483 94.1808\\
0.37931 94.2373\\
0.41379 94.2938\\
0.44828 94.5198\\
0.48276 94.7458\\
0.51724 95.0847\\
0.55172 95.5932\\
0.58621 96.2712\\
0.62069 96.7797\\
0.65517 97.6271\\
0.68966 98.4181\\
0.72414 98.8701\\
0.75862 98.9831\\
0.7931 99.209\\
0.82759 99.3785\\
0.86207 99.435\\
0.89655 99.548\\
0.93103 99.887\\
0.96552 100\\
1 100\\
    };
    \addlegendentry{\textcolor{black}{Ours w/o Ori.: 0.94}}
    \addplot [color=cPLOT5, smooth, line width=\pckLineWidth]
    table[row sep=crcr]{%
0 75.8889\\
0.034483 78.3889\\
0.068966 82.7778\\
0.10345 86.1111\\
0.13793 88.6667\\
0.17241 90.6111\\
0.2069 92.1111\\
0.24138 93.7222\\
0.27586 94.2222\\
0.31034 94.3333\\
0.34483 94.5556\\
0.37931 94.7222\\
0.41379 94.7222\\
0.44828 94.8333\\
0.48276 95.1111\\
0.51724 95.5\\
0.55172 96\\
0.58621 96.3889\\
0.62069 96.9444\\
0.65517 97.8333\\
0.68966 98.6667\\
0.72414 98.9444\\
0.75862 99\\
0.7931 99.3333\\
0.82759 99.6111\\
0.86207 99.6667\\
0.89655 99.7778\\
0.93103 99.8889\\
0.96552 100\\
1 100\\
    };
    \addlegendentry{\textcolor{black}{Ours: \textbf{0.94}}}
        
	\end{axis}
\end{tikzpicture}
    \end{tabular}
    \caption{\textbf{PCK curves} on datasets TOSCA \cite{Bronstein:2008:NGN:1462123}, SMAL \cite{Zuffi:CVPR:2017}, SHREC20 \cite{Dyke:2020:track.b} and DT4D-M \cite{magnet2022smooth}. The values in the legends are \emph{Area-Under-the-Curves} (AUC $\uparrow$). Across all datasets our method consistently outperforms all other approaches. For SMAL and SHREC20 our method produces nearly perfect results. The orientation-aware term improves our performance in most cases. %
    }
    \label{fig:pck_tosca}
\end{figure}

\textbf{TOSCA}~\cite{Bronstein:2008:NGN:1462123} is a popular dataset for the task of isometric shape matching. It contains shapes of 9 different categories and for each category, multiple non-rigidly deformed shapes. In our experiment, the sparse keypoints provided by \cite{Kim11} with ground truth labels are considered for matching, leading to $71$ different isometric pairs with $21$ to $46$ keypoints. We run all experiments with a time budget of 1h\footnote{\smcomb~runs till its end due to the lack of time limitation.}.

Fig.~\ref{fig:pck_tosca} summarises the PCK for the keypoints and \linebreak Fig.~\ref{fig:box_tosca} (left) unveils the statistics of the final relative optimality gaps.  Fig.~\ref{fig:q-comparison} shows qualitative matching results. 
Our proposed method outperforms all baselines and is able to produce the lowest relative gaps. More specifically, ours successfully certifies $56$ out of the total $71$ matching pairs whereas MINA can only certify $8$ pairs (cf. Fig.~\ref{fig:teaser}c, Tab.~\ref{tab:optimal_pairs}). Note that these numbers are different to the ones reported in MINA~\cite{bernard2020mina} due to their substantially more aggressive search space pruning.
PMSDP computes sparse correspondences very fast due to its convex (relaxed) formulation of the original non-convex problem. However, it requires an additional post-processing step to project onto the space of permutations and the final results have larger relative optimality gaps and are less accurate.
\smcomb's final estimation also has larger optimality gaps, which results in bad matching performance. Consequently, it is only able to provide optimality certification for $13$ out of all $71$ pairs (cf. Tab.~\ref{tab:optimal_pairs}).

\begin{figure}[t!]
    \centering
    \begin{tabular}{cc}
        \hspace{-1cm}
         \makecell{\definecolor{cPLOT0}{RGB}{28,213,227}
\definecolor{cPLOT1}{RGB}{80,150,80}
\definecolor{cPLOT2}{RGB}{90,130,213}
\definecolor{cPLOT3}{RGB}{247,179,43}
\definecolor{cPLOT5}{RGB}{242,64,0}
\pgfplotsset{%
    label style = {font=\large},
    tick label style = {font=\large},
    title style =  {font=\Large},
    legend style={  fill= gray!10,
                    fill opacity=0.6, 
                    font=\large,
                    draw=gray!20, %
                    text opacity=1}
}
\begin{tikzpicture}[scale=0.5, transform shape]
    \begin{axis}[
		width=1\columnwidth,
		height=0.75\columnwidth,
		title=TOSCA: Statistics on Rel. Opt. Gaps ,
		boxplot/draw direction=y,
		grid=major,
		ymin=-5,
		ymax=160,
		xmin=0.65,
		xmax=4.35,
	    ytick={0,20, 40, 60, 80, 100, 120, 140, 160},
            extra y ticks=150,
            extra y tick labels={$\dots$},
            extra y tick style={grid=minor, grid style={dashed,black}
        },
        ymajorgrids=true,
		yticklabels={0,20, 40, 60, 80, 100, 120, 140,$\infty$},
		xtick={1,2,3,4,5},
		xticklabels={\smcomb, PMSDP, MINA, Ours, $\phantom{1}$},
		ylabel={Rel. Opt. Gap ($\%$)},
		every boxplot/.style={mark=x,every mark/.append style={mark size=5pt}},
		boxplotcolor/.style={color=#1,very thick,fill=#1!50,mark options={color=#1,fill=#1!70}},
            boxplot/box extend=0.4%
		]
		\addplot+[boxplotcolor=cPLOT0, boxplot prepared={
			lower whisker=0.000000,
			lower quartile=3.158760,
			median=8.619597,
			upper quartile=17.673926,
			upper whisker=43.521389
			}]
		coordinates {(0,160)};%
		\addplot+[boxplotcolor=cPLOT2, boxplot prepared={
			lower whisker=1.521102,
			lower quartile=24.386971,
			median=34.253678,
			upper quartile=47.732127,
			upper whisker=125.501425
			}]
		coordinates {};
		\addplot+[boxplotcolor=cPLOT3, boxplot prepared={
			lower whisker=0.000000,
			lower quartile=40.269156,
			median=48.197013,
			upper quartile=55.732163,
			upper whisker=72.718267
			}]
		coordinates {};
		\addplot+[boxplotcolor=cPLOT5, boxplot prepared={
			lower whisker=0.000000,
			lower quartile=0.000000,
			median=0.465987,
			upper quartile=0.894211,
			upper whisker=28.860368
			}]
		coordinates {};
	\end{axis}
\end{tikzpicture}}&
         \hspace{-1.1cm}
         \makecell{\newcommand{\scaleLinewWidth}{3pt}
\definecolor{cPLOT0}{RGB}{28,213,227}
\definecolor{cPLOT1}{RGB}{80,150,80}
\definecolor{cPLOT2}{RGB}{90,130,213}
\definecolor{cPLOT3}{RGB}{247,179,43}
\definecolor{cPLOT5}{RGB}{242,64,0}

\pgfplotsset{%
    label style = {font=\large},
    tick label style = {font=\large},
    title style =  {font=\Large},
    legend style={  fill= gray!10,
                    fill opacity=0.6, 
                    font=\large,
                    draw=gray!20, %
                    text opacity=1}
}
\begin{tikzpicture}[scale=0.5, transform shape]
	\begin{axis}[
		width=\columnwidth,
		height=0.75\columnwidth,
		grid=major,
		title=Scale Invariance,
		legend style={
			at={(0.97,0.97)},
			anchor=north east,
			legend columns=1},
		legend cell align={left},
        ylabel={{\large Mean Geo. Err.}},
        xlabel={Scale of Shape $\cY$},
        xmode=log,
        xmin=0.1,
        xmax=10,
        ylabel near ticks,
        ymin=-0.1,
        ymax=0.6,
        ytick={0, 0.2, 0.4, 0.6, 0.8, 1},
	]
    \addplot [color=cPLOT0, smooth, line width=\scaleLinewWidth]
    table[row sep=crcr]{%
0.1  0.32486\\
0.5  0.40884\\
1 0.35142\\
5 0.37922\\
10  0.44471\\
    };
    \addlegendentry{\textcolor{black}{\smcomb}}
    \addplot [color=cPLOT2, smooth, line width=\scaleLinewWidth]
    table[row sep=crcr]{%
0.1 0.11728\\
0.5 0.11728\\
1 0.11728\\
5 0.11728\\
10 0.11728\\
    };
    \addlegendentry{\textcolor{black}{PMSDP}}
    \addplot [color=cPLOT3, smooth, line width=\scaleLinewWidth]
    table[row sep=crcr]{%
0.1 0.39424\\
0.5 0.14412\\
1 0.097603\\
5 0.33173\\
10 0.34554\\
    };
    \addlegendentry{\textcolor{black}{MINA}}
    \addplot [color=cPLOT5, smooth, line width=\scaleLinewWidth]
    table[row sep=crcr]{%
0.1 7.9073e-05\\
0.5 0\\
1 0\\
5 0\\
10 0.0044366\\
    };
    \addlegendentry{\textcolor{black}{Ours}}
        
	\end{axis}
\end{tikzpicture}\vspace{-0.35cm}}
    \end{tabular}
    \caption{(Left) \textbf{Relative optimality gap statistics} on the TOSCA. The Inf gap reflects cases where \smcomb~\cite{roetzer2022scalable} fails to find a solution. Our proposed method produces the lowest and most concentrated relative optimality gap. (Right) Mean geodesic error for shape pairs with \textbf{different shape scales}. PMSDP \cite{maron2016point} and our method are the only ones which perform consistently across different shape scales, with ours being more accurate.
    }
    \label{fig:box_tosca}
\end{figure}

\begin{figure*}[!ht]
    \hspace{-1.2cm}
    \def\columnOne{dt4d-Punching015-BboyUprockStart035}
\def\columnTwo{dt4d-Standing2HMagicAttack01036-DancingRunningMan232}
\def\columnThree{dt4d-StandingReactLargeFromLeft010-SwingDancing031}
\def\columnFour{shrec20_bison_elephant_a}
\def\columnFive{shrec20_camel_a_cow}
\def\columnSix{shrec20_elephant_a_hippo}
\def\columnSeven{smal_00211799_ferrari_lion6}
\def\columnEight{smal_Davis_cow_00000_cow_alph4}
\def\columnNine{tosca_cat-0_cat-1}
\def\columnTen{tosca_wolf-0_wolf-2}
\def\markerfile{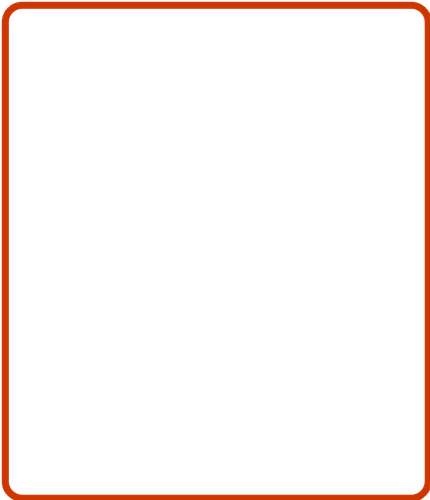}
\def\heightQ{2cm}
\def\widthQ{1.77cm}
\def\hspaceCols{-0.5cm}
\begin{tabular}{lcccccccccc}%
        \setlength{\tabcolsep}{0pt} 
        & \circled{1} &  \circled{2} &  \circled{3} &  \circled{4} &  \circled{5} &  \circled{6} &  \circled{7} &  \circled{8} &  \circled{9} &  \circled{10} \\
        \rotatebox{90}{\hspace{0.35cm}Source}&
        \hspace{\hspaceCols}
        \includegraphics[height=\heightQ, width=\widthQ]{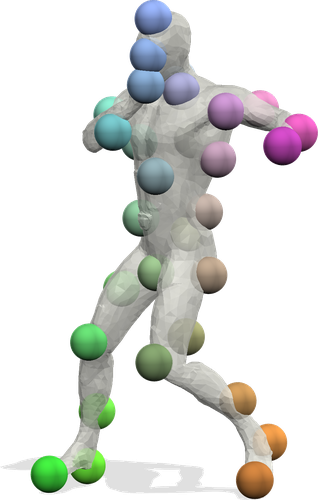}&
        \hspace{\hspaceCols}
        \includegraphics[height=\heightQ, width=\widthQ]{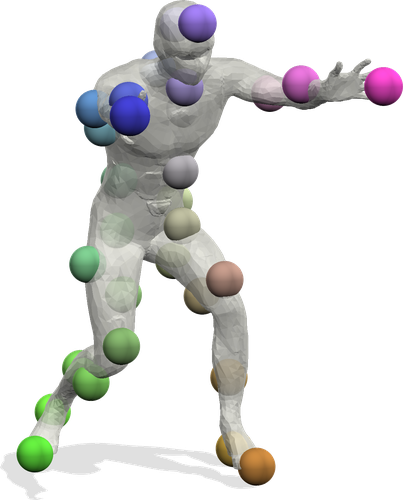}&
        \hspace{\hspaceCols}
        \includegraphics[height=\heightQ, width=\widthQ]{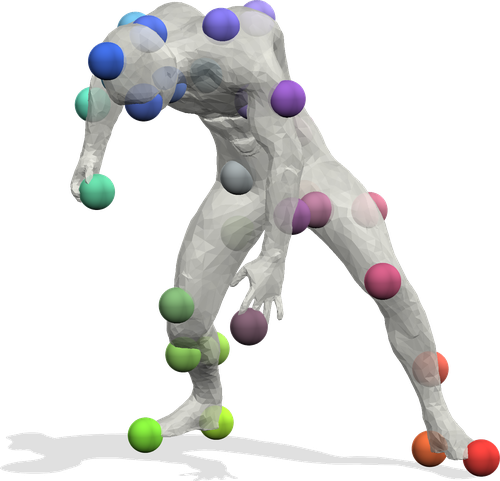}&
        \hspace{\hspaceCols}
        \includegraphics[height=\heightQ, width=\widthQ]{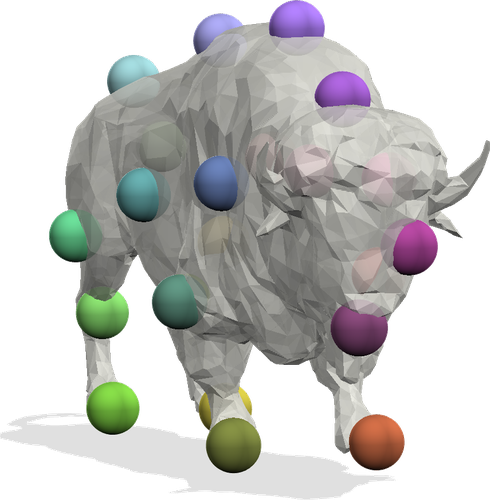}&
        \hspace{\hspaceCols}
        \includegraphics[height=\heightQ, width=\widthQ]{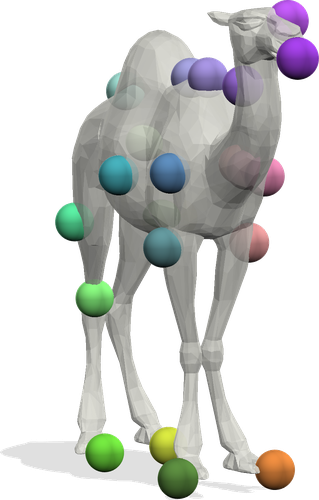}&
        \hspace{\hspaceCols}
        \includegraphics[height=\heightQ, width=\widthQ]{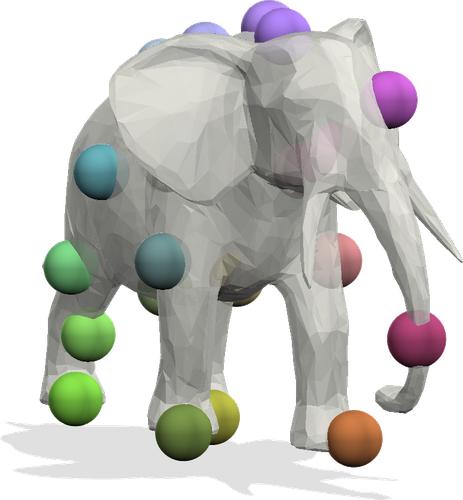}&
        \hspace{\hspaceCols}
        \includegraphics[height=\heightQ, width=\widthQ]{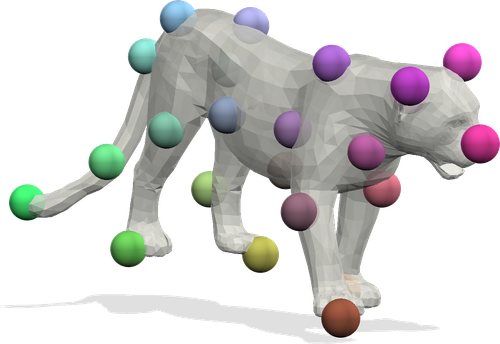}&
        \hspace{\hspaceCols}
        \includegraphics[height=\heightQ, width=\widthQ]{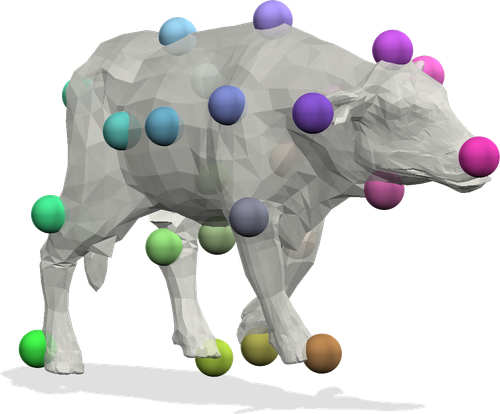}&
        \hspace{\hspaceCols}
        \includegraphics[height=\heightQ, width=\widthQ]{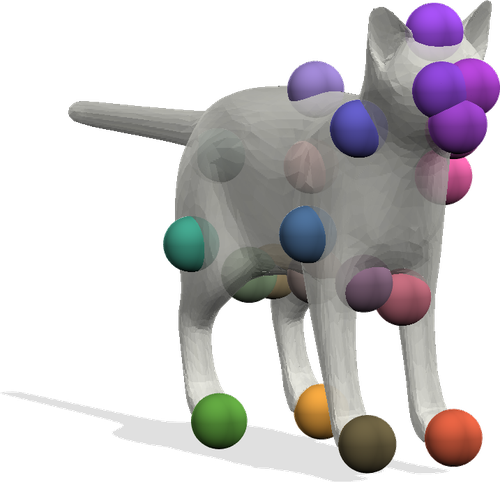}&
        \hspace{\hspaceCols}
        \includegraphics[height=\heightQ, width=\widthQ]{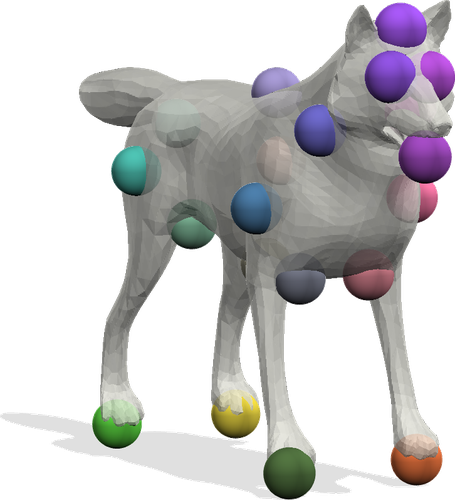}\\
        \rotatebox{90}{\hspace{0.35cm}\smcomb}&
        \hspace{\hspaceCols}
        \includegraphics[height=\heightQ, width=\widthQ]{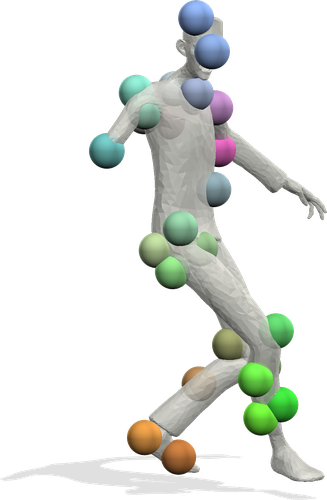}&
        \hspace{\hspaceCols}
        \includegraphics[height=\heightQ, width=\widthQ]{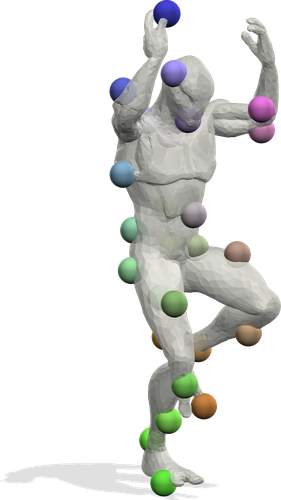}&
        \hspace{\hspaceCols}
        \begin{overpic}[height=\heightQ, width=\widthQ]{\pathsmcomb\columnThree\trgtEnd}
            \put(-2,-10){\includegraphics[height=2cm, width=1.9cm, keepaspectratio=false]{\markerfile}}
        \end{overpic}&
        \hspace{\hspaceCols}
        \includegraphics[height=\heightQ, width=\widthQ]{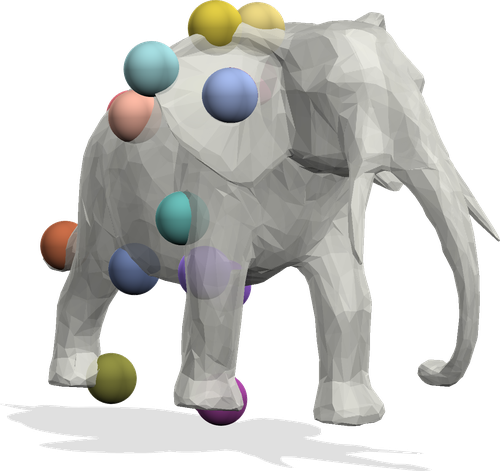}&
        \hspace{\hspaceCols}
        \includegraphics[height=\heightQ, width=\widthQ]{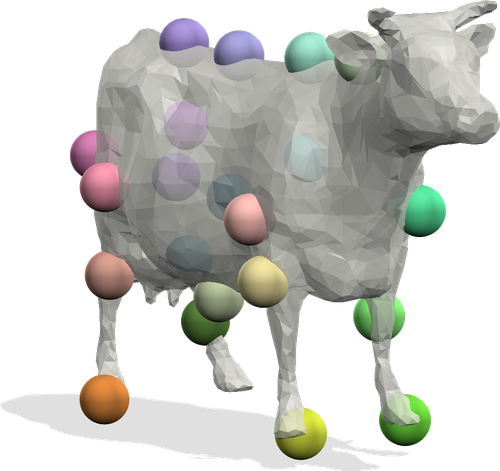}&
        \hspace{\hspaceCols}
        \begin{overpic}[height=\heightQ, width=\widthQ]{\pathsmcomb\columnSix\trgtEnd}
            \put(0,-10){\includegraphics[height=2cm, width=1.8cm, keepaspectratio=false]{\markerfile}}
        \end{overpic}&
        \hspace{\hspaceCols}
        \includegraphics[height=\heightQ, width=\widthQ]{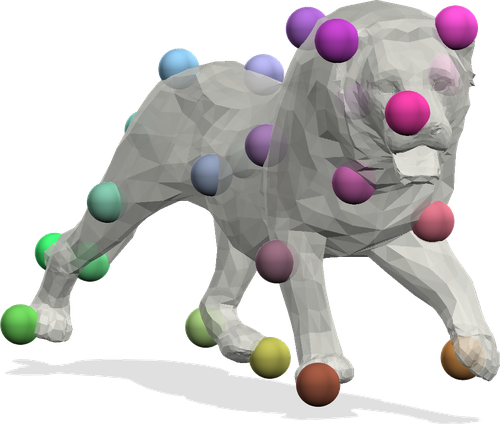}&
        \hspace{\hspaceCols}
        \includegraphics[height=\heightQ, width=\widthQ]{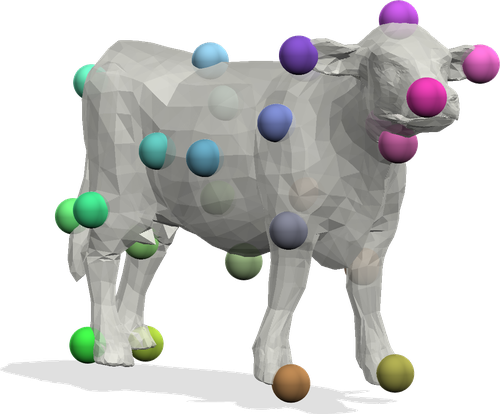}&
        \hspace{\hspaceCols}
        \begin{overpic}[height=\heightQ, width=\widthQ]{\pathsmcomb\columnNine\trgtEnd}
            \put(0,-5){\includegraphics[height=2.2cm, width=1.8cm, keepaspectratio=false]{\markerfile}}
        \end{overpic}&
        \hspace{\hspaceCols}
        \includegraphics[height=\heightQ, width=\widthQ]{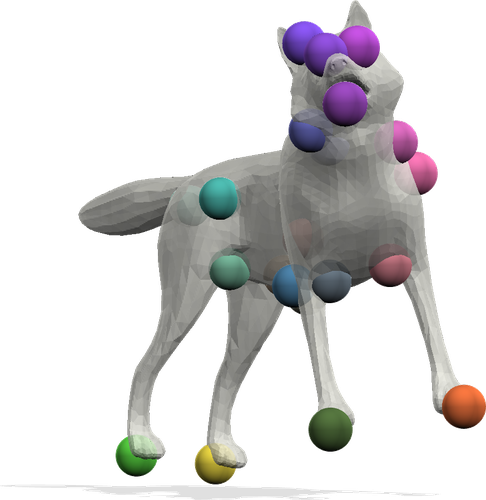}\\
        \rotatebox{90}{\hspace{0.35cm}PMSDP}&
        \hspace{\hspaceCols}
        \begin{overpic}[height=\heightQ, width=\widthQ]{\pathPmsdp\columnOne\trgtEnd}
            \put(5,-10){\includegraphics[height=2.3cm, width=1.55cm, keepaspectratio=false]{\markerfile}}
        \end{overpic}&
        \hspace{\hspaceCols}
        \includegraphics[height=\heightQ, width=\widthQ]{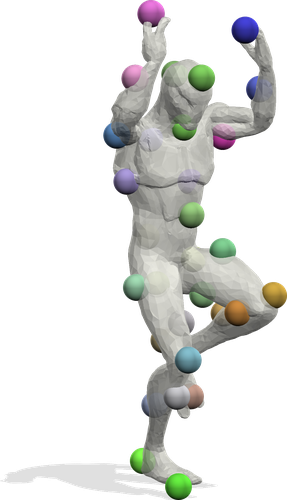}&
        \hspace{\hspaceCols}
        \includegraphics[height=\heightQ, width=\widthQ]{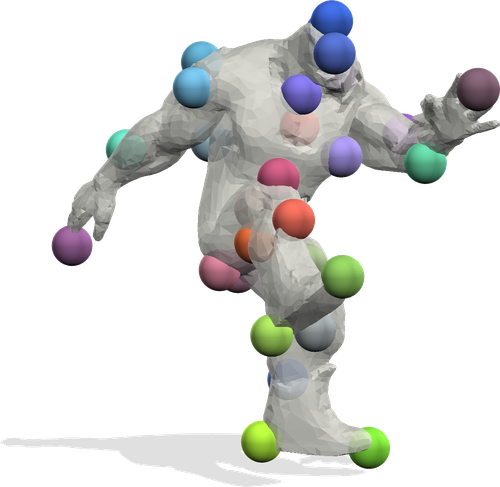}&
        \hspace{\hspaceCols}
        \begin{overpic}[height=\heightQ, width=\widthQ]{\pathPmsdp\columnFour\trgtEnd}
            \put(0,-10){\includegraphics[height=2cm, width=1.85cm, keepaspectratio=false]{\markerfile}}
        \end{overpic}&
        \hspace{\hspaceCols}
        \includegraphics[height=\heightQ, width=\widthQ]{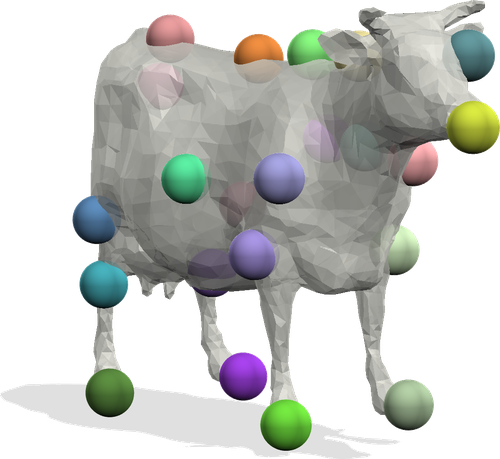}&
        \hspace{\hspaceCols}
        \begin{overpic}[height=\heightQ, width=\widthQ]{\pathPmsdp\columnSix\trgtEnd}
            \put(0,-10){\includegraphics[height=2cm, width=1.8cm, keepaspectratio=false]{\markerfile}}
        \end{overpic}&
        \hspace{\hspaceCols}
        \includegraphics[height=\heightQ, width=\widthQ]{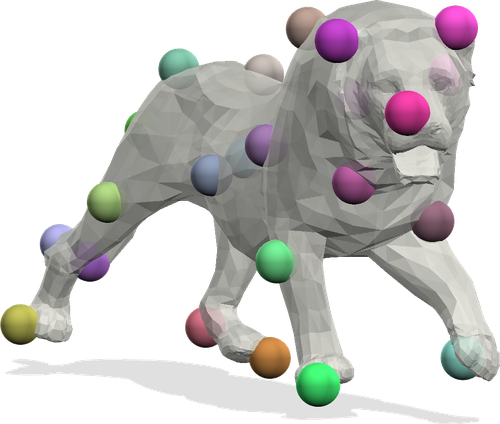}&
        \hspace{\hspaceCols}
        \includegraphics[height=\heightQ, width=\widthQ]{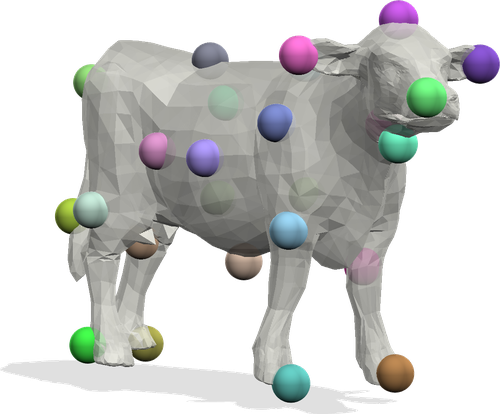}&
        \hspace{\hspaceCols}
        \includegraphics[height=\heightQ, width=\widthQ]{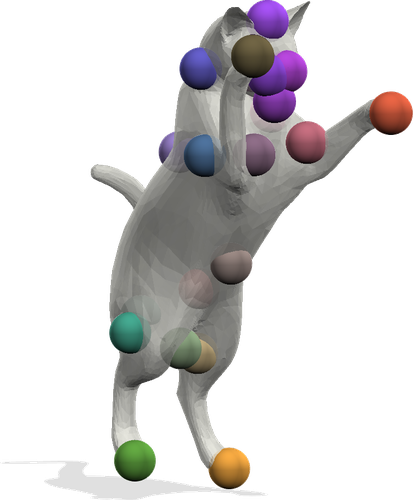}&
        \hspace{\hspaceCols}
        \includegraphics[height=\heightQ, width=\widthQ]{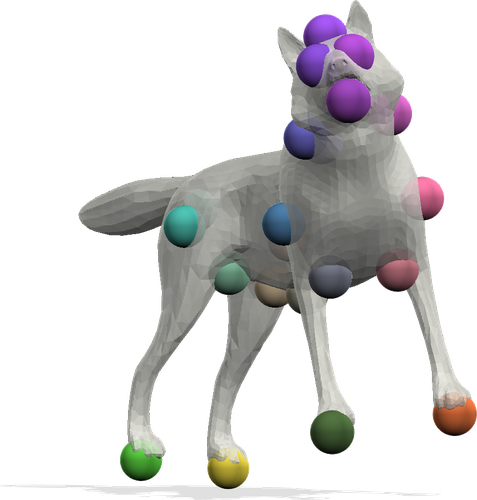}\\
        \rotatebox{90}{\hspace{0.4cm}MINA}&
        \hspace{\hspaceCols}
        \includegraphics[height=\heightQ, width=\widthQ]{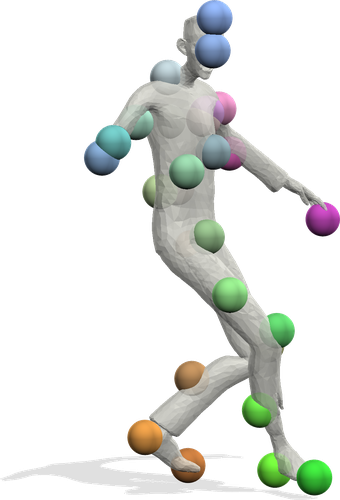}&
        \hspace{\hspaceCols}
        \includegraphics[height=\heightQ, width=\widthQ]{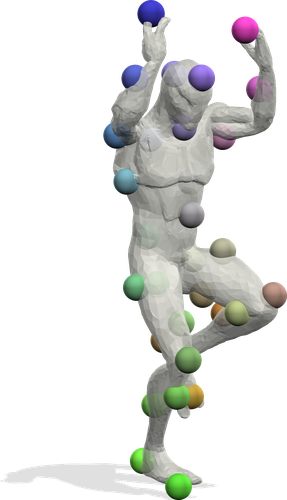}&
        \hspace{\hspaceCols}
        \includegraphics[height=\heightQ, width=\widthQ]{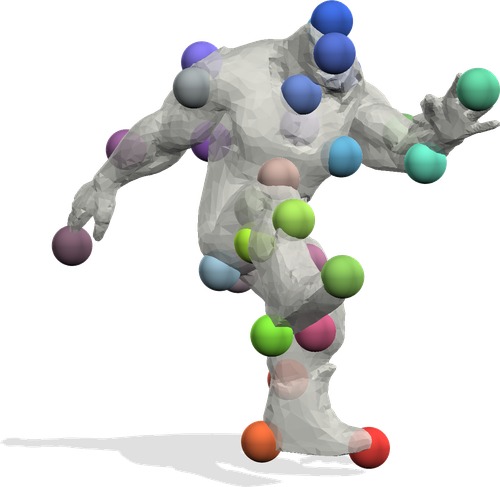}&
        \hspace{\hspaceCols}
        \begin{overpic}[height=\heightQ, width=\widthQ]{\pathMina\columnFour\trgtEnd}
            \put(0,-10){\includegraphics[height=2cm, width=1.85cm, keepaspectratio=false]{\markerfile}}
        \end{overpic}&
        \hspace{\hspaceCols}
        \includegraphics[height=\heightQ, width=\widthQ]{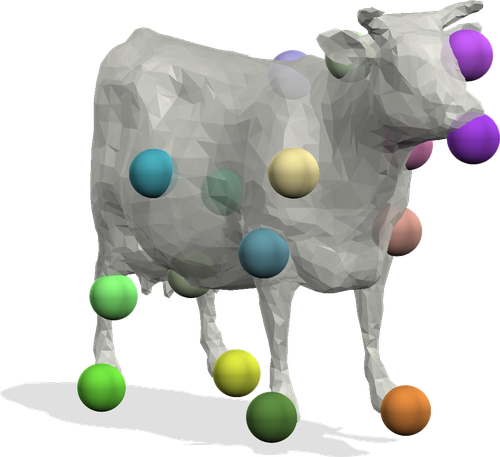}&
        \hspace{\hspaceCols}
        \includegraphics[height=\heightQ, width=\widthQ]{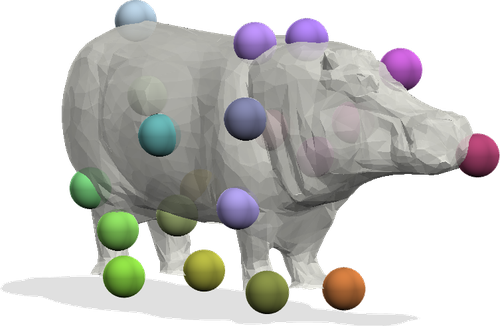}&
        \hspace{\hspaceCols}
        \includegraphics[height=\heightQ, width=\widthQ]{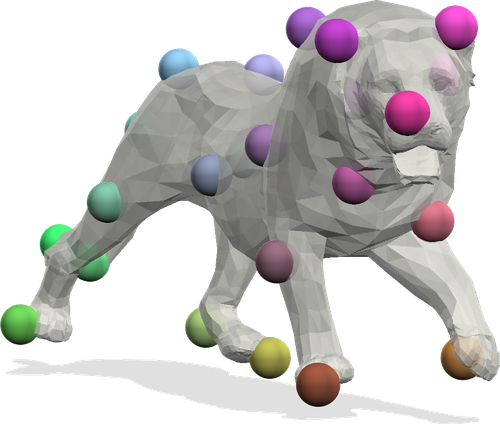}&
        \hspace{\hspaceCols}
        \begin{overpic}[height=\heightQ, width=\widthQ]{\pathMina\columnEight\trgtEnd}
            \put(0,-10){\includegraphics[height=2cm, width=1.9cm, keepaspectratio=false]{\markerfile}}
        \end{overpic}&
        \hspace{\hspaceCols}
        \includegraphics[height=\heightQ, width=\widthQ]{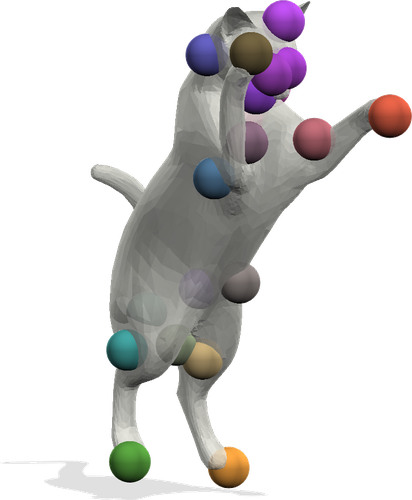}&
        \hspace{\hspaceCols}
        \includegraphics[height=\heightQ, width=\widthQ]{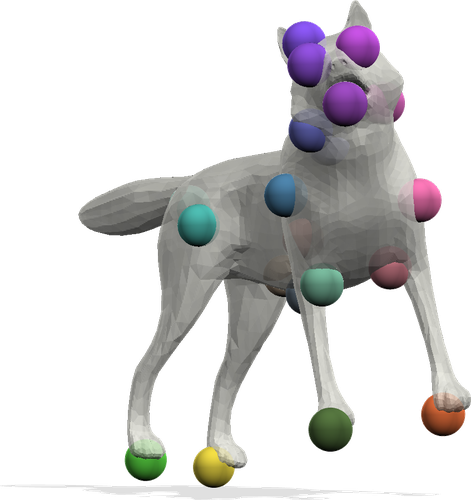}\\
        \rotatebox{90}{\hspace{0.5cm}Ours}&
        \hspace{\hspaceCols}
        \includegraphics[height=\heightQ, width=\widthQ]{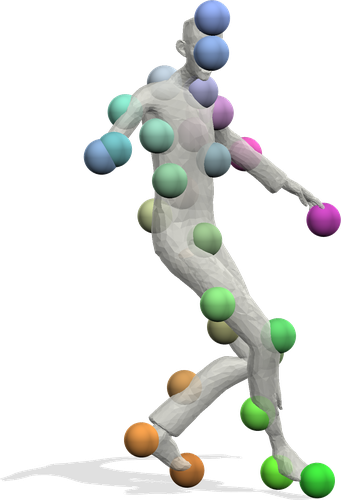}&
        \hspace{\hspaceCols}
        \includegraphics[height=\heightQ, width=\widthQ]{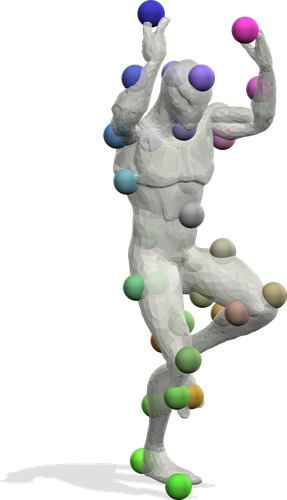}&
        \hspace{\hspaceCols}
        \includegraphics[height=\heightQ, width=\widthQ]{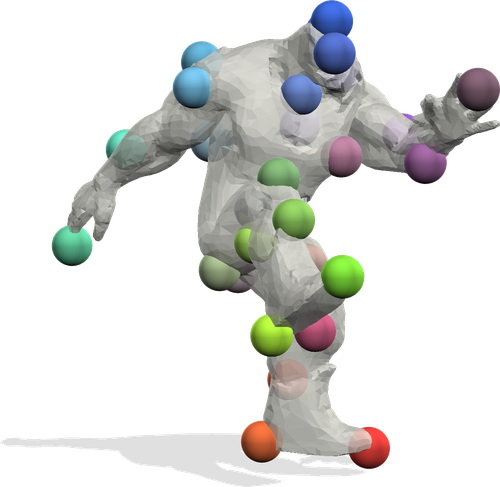}&
        \hspace{\hspaceCols}
        \includegraphics[height=\heightQ, width=\widthQ]{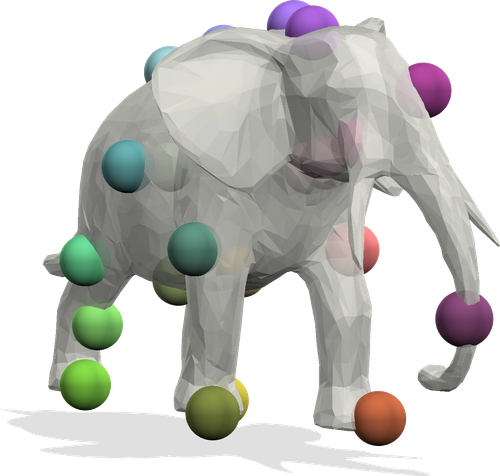}&
        \hspace{\hspaceCols}
        \includegraphics[height=\heightQ, width=\widthQ]{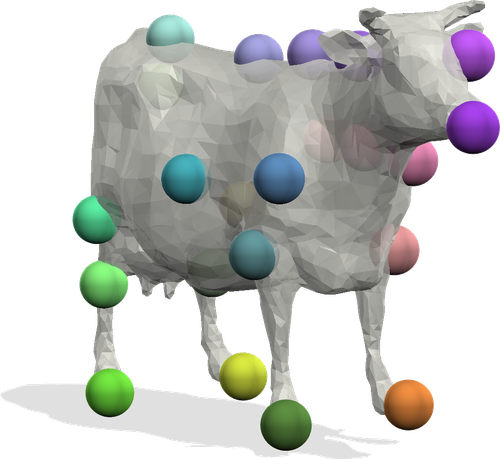}&
        \hspace{\hspaceCols}
        \includegraphics[height=\heightQ, width=\widthQ]{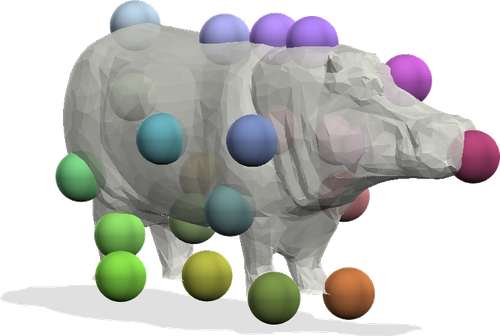}&
        \hspace{\hspaceCols}
        \includegraphics[height=\heightQ, width=\widthQ]{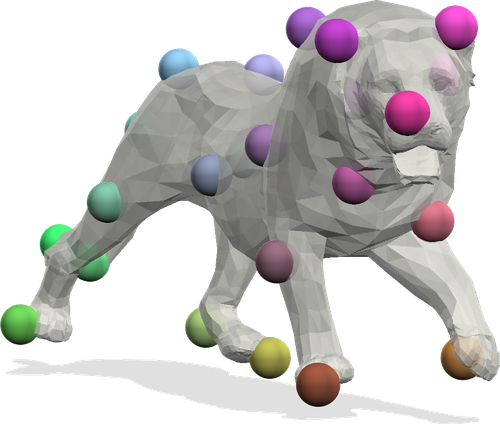}&
        \hspace{\hspaceCols}
        \includegraphics[height=\heightQ, width=\widthQ]{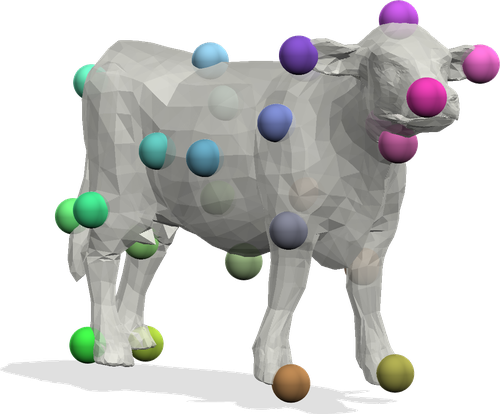}&
        \hspace{\hspaceCols}
        \includegraphics[height=\heightQ, width=\widthQ]{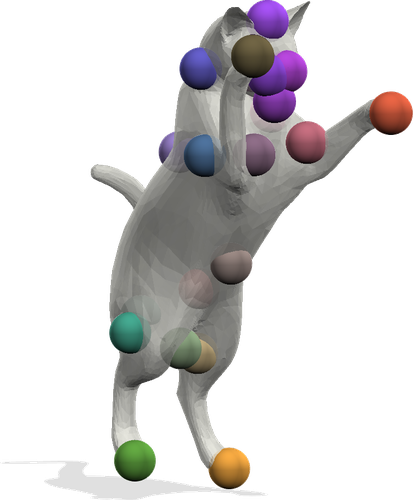}&
        \hspace{\hspaceCols}
        \includegraphics[height=\heightQ, width=\widthQ]{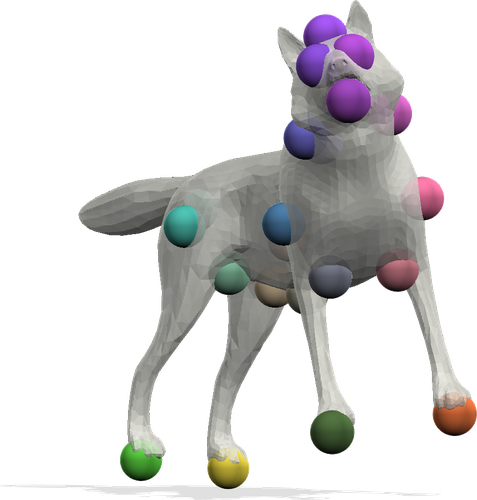}\\
    \end{tabular}
    \caption{\textbf{Qualitative results} on instances of DT4D-M \cite{magnet2022smooth} (\scircled{1}--\scircled{3}), SHREC20 \cite{Dyke:2020:track.b} (\scircled{4}--\scircled{6}), SMAL \cite{Zuffi:CVPR:2017} (\scircled{7},\scircled{8}) and TOSCA \cite{Bronstein:2008:NGN:1462123} (\scircled{9},\scircled{10}). While our approach consistently produces good results across all datasets, the other approaches have difficulties: \smcomb~\cite{roetzer2022scalable} (which is continuous and thus may match keypoints of shape $\cX$ to any point of shape $\cY$) cannot always find a feasible solution, which results in incomplete solutions (\scircled{3}, \scircled{6}, \scircled{9}); 
    PMSDP~\cite{maron2016point} especially struggles with non-isometric shape pairs (\scircled{1}, \scircled{4}, \scircled{6}), similarly as MINA~\cite{bernard2020mina} (\scircled{4}, \scircled{8}).}
    \label{fig:q-comparison}
\end{figure*}

\textbf{SMAL~\cite{Zuffi:CVPR:2017}} is a 3D animal dataset containing near-isometric shapes, such as foxes and dogs. 
For our experiments we use furthest point sampling (FPS) to select $25$  sparse keypoints,  consistently subsample the shapes to $5$k faces, and then transfer the selected keypoints to the lower resolution by nearest neighbour search.
This introduces noise into the keypoint positions while leaving the mesh topology consistent. 
Furthermore, the shapes are not pre-aligned and used `as is' in our experiments.

As shown in Fig.~\ref{fig:pck_tosca}, our proposed method works well in this setting and solves all matching instances to global optimality within the $1$h budget (cf. Tab.~\ref{tab:optimal_pairs}). 
\smcomb~ \cite{roetzer2022scalable} can produce high quality matchings due to the consistent meshing, which, however, is often not available in practice. 
MINA \cite{bernard2020mina} can handle near-isometric shapes and estimate accurate correspondences. However, it fails to close the relative optimality gap and thus cannot certify global optimality. 
The same holds for PMSDP \cite{maron2016point}. 

\vspace{-0.5mm}
\subsection{Non-Isometric Shape Matching}

\textbf{SHREC20~\cite{Dyke:2020:track.b}} contains $14$ non-isometric, high resolution animal shapes, ranging from dog and cow to giraffe and elephant. 
It provides consensus-based sparse keypoint correspondences which we use in our evaluation.
Note that these keypoints are manually selected by a human and an absolute ground truth does not exist. 
In total, $25$ shape pairs are randomly selected from the dataset.

Our proposed method outperforms all competitors on this dataset and finds the global optima for all $25$ matching pairs (cf. Tab.~\ref{tab:optimal_pairs}). 
Furthermore, all competing methods deteriorate heavily in this non-isometric setting. 
While it is much harder for MINA \cite{bernard2020mina} to deform the shapes, the high-dimensional embedding of LBO eigenfunctions used in PMSDP \cite{maron2016point} breaks for non-isometric shapes, hence their unsatisfactory performance. 
\smcomb~\cite{roetzer2022scalable} suffers from the large scale variation and the inconsistent triangulation present in the dataset.

\textbf{DeformingThings4D-Matching} (DT4D-M)
\cite{magnet2022smooth} is a recent dataset with humanoid shapes in various poses and inconsistent meshing. 
Dense but incomplete ground truth correspondences are provided for evaluation. 
We randomly created $60$ shape pairs, consistently sample $30$ keypoints and then (independently) subsample shapes to about $10$k faces. 
In this way, the selected keypoints are not in exact correspondence. 
This dataset often exhibits non-isometric and non-rigid deformations at the same time and poses a challenging task for all methods.

As shown in Fig.~\ref{fig:pck_tosca}, ours yields the best accuracy and obtains the most globally optimal solutions among all baselines despite the present difficulties (Tab.~\ref{tab:optimal_pairs}). 
Note that $\cEori$ is less helpful here than in TOSCA since it is designed to disambiguate symmetry based on the orientation-aware $\vh^{(\cdot)}$ which is more {accurate} for isometric shapes. DT4D-M (cf. TOSCA) has less isometric shapes (some without any symmetry, e.g.~\emph{prisoner}), so $\cEori$ has less impact (Fig.~\ref{fig:q-comparison}).
Besides the challenges mentioned above, many its shapes contain non-manifold structures which empirically causes more difficulties.

\vspace{-1mm}
\subsection{Global Scaling}

We showed the scale invariance of our method in \Cref{lemma:invariance-scale-rigid} in theory, but also validate our claims with experiments.
Shapes with different scales exist naturally, such as the dog-camel and bison-elephant shown in Figs.~\ref{fig:teaser}a $\&$ \ref{fig:q-comparison}, but many methods struggle with such large scale changes.
We randomly select five SHREC20 \cite{Dyke:2020:track.b} pairs to further study the effect of shape scale by rescaling one shape while fixing the other. 
More specifically, we fix the shape $\cX$ and rescale shape $\cY$ by a factor of $\{0.1, 0.5, 1, 5, 10\}$ respectively, to create $5$ instances with different scales of the same matching pair.
As shown in \Cref{lemma:invariance-scale-rigid} and Fig.~\ref{fig:box_tosca} (right), our proposed method is scale-invariant and solves all matching instances to global optimality independent of the shape scale, hence consistently achieving the lowest \emph{mean geodesic error}. While PMSDP \cite{maron2016point} is also scale-invariant, it fails under non-isometric deformation. All other baselines depend on shape scale and thus manifest an accuracy reduction upon scale changes.

\begin{table}
    \centering
    \begin{tabularx}{\columnwidth}{lcccc}
        \toprule
        & \smcomb & MINA & PMSDP &  Ours \\
        \toprule
         TOSCA (71)    & 13  & 8  &  4 & \textbf{56}  \\
         SMAL (44)     & 31  & 9  &  2 & \textbf{44}  \\
         SHREC20 (25)  & 0  & 5  &  0 & \textbf{25}  \\
         DT4D-M (60)   & 3  & 1  &  4 & \textbf{21} \\
         \hline
    \end{tabularx}
\caption{The \textbf{number of matching instances with certified global optimality} within 1h time budget. Numbers in  parentheses in the 1st column are the total number of matching pairs. \smcomb~\cite{roetzer2022scalable} does not allow to set a time budget, so we let it run until it terminates. 
}
\label{tab:optimal_pairs}
\end{table}

\subsection{Mesh Resolution}
The mesh resolution of shapes often plays an vital role in the scalability of matching methods, especially for global methods.  
The elastic matching model \cite{windheuser2011geometrically} targeted by \smcomb~\cite{roetzer2022scalable} is constructed in the space of product-surfaces, and therefore the number of binary variables to be optimised increases quadratically with the number of mesh triangles. MINA \cite{bernard2020mina} deforms each mesh triangle using an affine transformation, and in addition requires a global rotation matrix that drastically increases its runtime due to the involved discretisation. In our approach the continuous variables $\mXh, \mYh$  increase linearly with the number of mesh vertices while the number of  binary variables $\mP$ remains unchanged. As we demonstrate in Fig.~\ref{fig:teaser}d, our approach scales much better compared to \smcomb~and MINA, which stems from our more efficient matching formalism (e.g.~without rotation matrices opposed to MINA, or without a quadratic number of binary variables opposed to \smcomb).

\begin{figure}[t!]
    \centering
    \begin{tabular}{cc}
         \hspace{-1.1cm}
         \makecell{\def\columnOne{dt4d-InvertedDoubleKickToKipUp028-StandingReactLargeFromLeft021_cloud}
\def\columnTwo{dt4d-RifleTurnAndKick037-Running041_cloud}
\def\columnThree{dt4d-Standing2HMagicAttack01020-StandingCoverTurn030_cloud}
\def\columnFour{shrec20_elephant_a_hippo_cloud}
\def\columnFive{shrec20_hippo_rhino_cloud}
\def\columnSix{smal_00211799_ferrari_muybridge_132_133_07_cloud}
\def\heightPC{2.1cm}
\def\widthPC{1.9cm}
\def\hspaceColsPC{-0.5cm}
\begin{tabular}{cccccc}%
        \setlength{\tabcolsep}{0pt}
        \hspace{\hspaceColsPC}
        \includegraphics[height=\heightPC, width=\widthPC]{\pathOurs\columnTwo\srcEnd}&
        \hspace{\hspaceColsPC}
        \includegraphics[height=\heightPC, width=\widthPC]{\pathOurs\columnThree\srcEnd}&
        \hspace{\hspaceColsPC}
        \includegraphics[height=\heightPC, width=\widthPC]{\pathOurs\columnFour\srcEnd}&\\
        \hspace{\hspaceColsPC}
        \includegraphics[height=\heightPC, width=\widthPC]{\pathOurs\columnTwo\trgtEnd}&
        \hspace{\hspaceColsPC}
        \includegraphics[height=\heightPC, width=\widthPC]{\pathOurs\columnThree\trgtEnd}&
        \hspace{\hspaceColsPC}
        \includegraphics[height=\heightPC, width=\widthPC]{\pathOurs\columnFour\trgtEnd}&
    \end{tabular}}
         \hspace{-0.6cm}
         \rotatebox{90}{\textcolor{gray}{\hspace{-1.9cm}\rule{4cm}{0.5pt}}} &
         \hspace{-0.1cm}
         \makecell{
            \begin{tabular}{cc}
                    \hspace{-0.7cm}
                    \includegraphics[width=0.24\columnwidth, height=2cm]{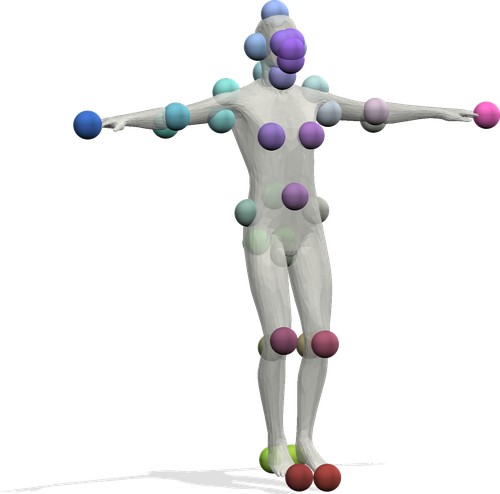}& 
                    \hspace{-0.8cm}
                    \includegraphics[width=0.24\columnwidth, height=2cm]{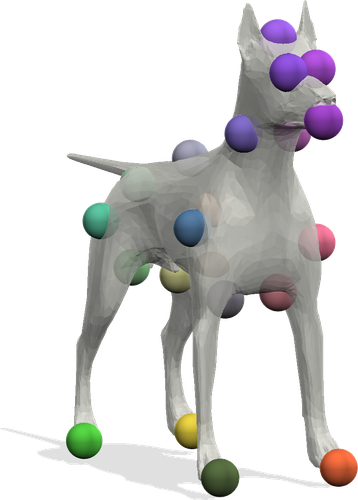} \\
                    \hspace{-1cm}
                    \includegraphics[width=0.24\columnwidth, height=2cm]{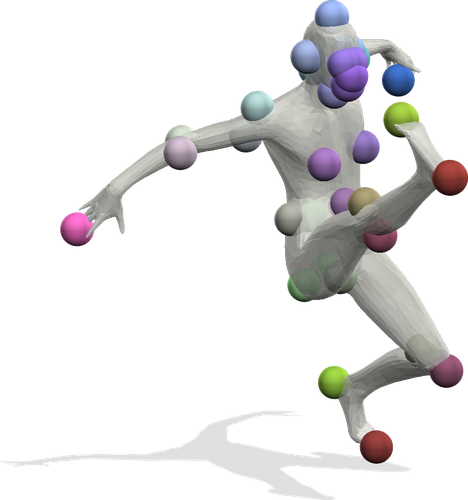}& 
                    \hspace{-0.8cm}
                    \includegraphics[width=0.24\columnwidth, height=2cm]{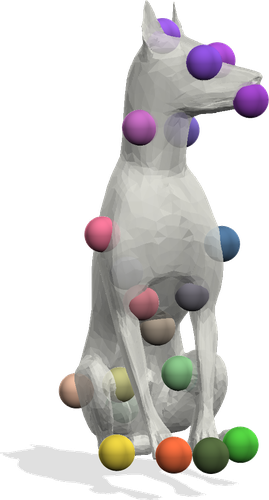}
            \end{tabular}
         }
    \end{tabular}
    \caption{(Left) \textbf{Mesh to point cloud} matching. Shapes in the top row are matched to respective shapes in the bottom row. Our method is able to tackle this challenging scenario. (Right) \textbf{failure modes}. For few shapes our orientation term does not help to disentangle intrinsic symmetry. }
    \label{fig:point-cloud-failures}
\end{figure}

\vspace{-0.5mm}
\section{Discussion \& Limitations} Despite the state-of-the-art performance, SIGMA has also limitations. 
We show some failure cases due to inconsistent orientation maps in Fig.~\ref{fig:point-cloud-failures}. 
As proof-of-concept, we showed in Fig.~\ref{fig:teaser}b that SIGMA can also work for partial shapes. However, its performance is far from perfect under this challenging scenario and often cannot find the global optimum within an adequate time limit. 
This is due to the fact that the equality constraints on $\mP$ become the inequalities $\mP^\top\vone \leq \vone,~\mP\vone \leq \vone$ (cf. Eqn.~\eqref{eq:optimization-problem}) under partiality, and this  increases the size of the  solution search space, resulting in longer runtime.
We can handle mesh to point cloud matching quite well, as shown in Fig.~\ref{fig:point-cloud-failures}.
Here, we replace the geodesic distance with Euclidean distances for the pruning strategy. 
Another challenge are matchings between pairs with topological changes. Here, SIGMA struggles since the mesh of one shape could not well-explain the deformation into the other shape anymore. We leave these challenging problem settings for future  exploration.

\vspace{-0.5mm}
\section{Conclusion} \label{sec:conclusion}
We presented SIGMA, an initialisation-free MIP formulation for sparse shape matching problems, for which we demonstrate that it can be solved to global optimality for many instances.
Our  method is provably invariant to global scaling and rigid transformations of the input shapes, eliminating the need for an extrinsic shape (pre-)alignment required by many shape matching methods. 
Furthermore, we introduced  the projected Laplace-Beltrami operator PLBO, which combines both intrinsic and extrinsic geometric information and remains invariant under $\rmE{3}$ group actions -- we believe it may also be useful for a broad range of other geometry processing tasks. In our experiments, we demonstrated that our  method outperforms competitive baselines in terms of matching quality, optimality and scalability, across a spectrum of challenging 3D shape benchmarks.

\blfootnote{\textbf{Acknowledgement}. This work were supported by the ERC Advanced Grant SIMULACRON, the CRC "Discretization in Geometry and Dynamics", the EPSRC Programme Grant VisualAI EP/T028572/1, the Ministry of Culture and Science NRW and MCML.
}
{\small
\bibliographystyle{ieee_fullname}
\bibliography{sigma_references}

\begin{thebibliography}{10}\itemsep=-1pt

\bibitem{aflalo2013saleinvariant}
Yonathan Aflalo, Ron Kimmel, and Dan Raviv.
\newblock {S}cale {I}nvariant {G}eometry for {N}onrigid {S}hapes.
\newblock {\em SIAM Journal on Imaging Sciences}, 6(3):1579--1597, 2013.

\bibitem{mosek}
MOSEK ApS.
\newblock {\em The MOSEK Optimization Toolbox for MATLAB Manual.}, 2019.

\bibitem{attaiki2021dpfm}
Souhaib Attaiki, Gautam Pai, and Maks Ovsjanikov.
\newblock {DPFM}: {D}eep {P}artial {F}unctional {M}aps.
\newblock In {\em 3DV}, 2021.

\bibitem{aubry2011wks}
Mathieu Aubry, Ulrich Schlickewei, and Daniel Cremers.
\newblock {The Wave Kernel Signature: A Quantum Mechanical Approach to Shape
  Analysis}.
\newblock In {\em ICCV Workshop}, 2011.

\bibitem{bernard2017combinatorial}
Florian Bernard, Frank Schmidt, Johan Thunberg, and Daniel Cremers.
\newblock {A Combinatorial Solution to Non-Rigid 3D Shape-to-Image Matching}.
\newblock In {\em CVPR}, 2017.

\bibitem{bernard2020mina}
Florian Bernard, Zeeshan~Khan Suri, and Christian Theobalt.
\newblock {MINA: Convex Mixed-Integer Programming for Non-Rigid Shape
  Alignment}.
\newblock In {\em CVPR}, 2020.

\bibitem{bernard:2018}
Florian Bernard, Christian Theobalt, and Michael Moeller.
\newblock {DS*: Tighter Lifting-Free Convex Relaxations for Quadratic Matching
  Problems}.
\newblock In {\em CVPR}, 2018.

\bibitem{botsch2006primo}
Mario Botsch, Mark Pauly, Markus Gross, and Leif Kobbelt.
\newblock {PriMo: Coupled Prisms for Intuitive Surface Modeling}.
\newblock In {\em SGP}, pages 11--20, 2006.

\bibitem{Bronstein:2008:NGN:1462123}
Alexander Bronstein, Michael Bronstein, and Ron Kimmel.
\newblock {\em {Numerical Geometry of Non-Rigid Shapes}}.
\newblock Springer Publishing Company, Incorporated, 1 edition, 2008.

\bibitem{BronsteinK10}
Michael Bronstein and Iasonas Kokkinos.
\newblock {Scale-Invariant Heat Kernel Signatures for Non-Rigid Shape
  Recognition}.
\newblock In {\em CVPR}, pages 1704--1711, 2010.

\bibitem{Dyke:2020:track.b}
Roberto Dyke, Yu-Kun Lai, Paul Rosin, Stefano Zappal{\`a}, Seana Dykes,
  Daoliang Guo, Kun Li, Riccardo Marin, Simone Melzi, and Jingyu Yang.
\newblock {SHREC'20}: {Shape Correspondence with Non-Isometric Deformations}.
\newblock {\em Computers \& Graphics}, 92:28--43, 2020.

\bibitem{Dym:2017ue}
Nadav Dym, Haggai Maron, and Yaron Lipman.
\newblock {DS++ - A Flexible, Scalable and Provably Tight Relaxation for
  Matching Problems}.
\newblock {\em ACM Transactions on Graphics (TOG)}, 36(6), 2017.

\bibitem{eisenberger2020deep}
Marvin Eisenberger, Aysim Toker, Laura Leal-Taix{\'e}, and Daniel Cremers.
\newblock {Deep Shells: Unsupervised Shape Correspondence with Optimal
  Transport}.
\newblock {\em Neurips}, 2020.

\bibitem{ezuz2019elastic}
Danielle Ezuz, Behrend Heeren, Omri Azencot, Martin Rumpf, and Mirela Ben-Chen.
\newblock {Elastic Correspondence between Triangle Meshes}.
\newblock {\em CGF}, 2019.

\bibitem{felzenszwalb2005representation}
Pedro Felzenszwalb.
\newblock {Representation and Detection of Deformable Shapes}.
\newblock {\em TPAMI}, 27(2):208--220, 2005.

\bibitem{grinspun2003discrete}
Eitan Grinspun, Anil Hirani, Mathieu Desbrun, and Peter Schr{\"o}der.
\newblock {Discrete Shells}.
\newblock In {\em Proceedings of the 2003 ACM SIGGRAPH/Eurographics symposium
  on Computer animation}, pages 62--67, 2003.

\bibitem{kezurer2015}
Itay Kezurer, Shahar Kovalsky, Ronen Basri, and Yaron Lipman.
\newblock {Tight Relaxation of Quadratic Matching.}
\newblock {\em Comput. Graph. Forum}, 2015.

\bibitem{Kim11}
Vladimir~G. Kim, Yaron Lipman, and Thomas Funkhouser.
\newblock {Blended Intrinsic Maps}.
\newblock {\em ACM Transactions on Graphics (TOG)}, 30(4), 2011.

\bibitem{kushinsky2019sinkhorn}
Yam Kushinsky, Haggai Maron, Nadav Dym, and Yaron Lipman.
\newblock {Sinkhorn Algorithm for Lifted Assignment Problems}.
\newblock {\em SIAM Journal on Imaging Sciences}, 12(2):716--735, 2019.

\bibitem{lahner2016efficient}
Zorah L\"{a}hner, Emanuele Rodola, Frank Schmidt, Michael Bronstein, and Daniel
  Cremers.
\newblock {Efficient Globally Optimal 2D-to-3D Deformable Shape Matching}.
\newblock In {\em CVPR}, 2016.

\bibitem{Lawler:1963wn}
Eugene Lawler.
\newblock {The Quadratic Assignment Problem}.
\newblock {\em Management science}, 9(4):586--599, 1963.

\bibitem{le2017alternating}
D.~Khu{\^e} L{\^e}-Huu and Nikos Paragios.
\newblock {Alternating Direction Graph Matching}.
\newblock In {\em CVPR}, 2017.

\bibitem{Loiola:2ua4FrR7}
Eliane~Maria Loiola, Nair Maria~Maia de Abreu, Paulo Oswaldo~Boaventura Netto,
  Peter Hahn, and Tania~Maia Querido.
\newblock {A Survey for the Quadratic Assignment Problem.}
\newblock {\em European Journal of Operational Research}, 176(2):657--690,
  2007.

\bibitem{magnet2022smooth}
Robin Magnet, Jing Ren, Olga Sorkine-Hornung, and Maks Ovsjanikov.
\newblock {Smooth Non-Rigid Shape Matching via Effective Dirichlet Energy
  Optimization}.
\newblock In {\em 3DV}, 2022.

\bibitem{maron2016point}
Haggai Maron, Nadav Dym, Itay Kezurer, Shahar Kovalsky, and Yaron Lipman.
\newblock {Point Registration via Efficient Convex Relaxation}.
\newblock {\em ACM Transactions on Graphics (TOG)}, 35(4):73, 2016.

\bibitem{nasikun2018fastspectrum}
Ahmad Nasikun, Christopher Brandt, and Klaus Hildebrandt.
\newblock {F}ast {A}pproximation of {L}aplace-{B}eltrami {E}igenproblems.
\newblock {\em Computer Graphics Forum}, 37(5), 2018.

\bibitem{ovsjanikov2012functional}
Maks Ovsjanikov, Mirela Ben-Chen, Justin Solomon, Adrian Butscher, and Leonidas
  Guibas.
\newblock {Functional Maps: a Flexible Representation of Maps Between Shapes}.
\newblock {\em ACM Transactions on Graphics (TOG)}, 31(4):30, 2012.

\bibitem{Pai2021CVPR}
Gautam Pai, Jing Ren, Simone Melzi, Peter Wonka, and Maks Ovsjanikov.
\newblock {Fast Sinkhorn Filters: Using Matrix Scaling for Non-Rigid Shape
  Correspondence With Functional Maps}.
\newblock In {\em CVPR}, pages 384--393, 2021.

\bibitem{panine2022landmark}
Mikhail Panine, Maxime Kirgo, and Maks Ovsjanikov.
\newblock {Non-Isometric Shape Matching via Functional Maps on Landmark-Adapted
  Bases}.
\newblock {\em Computer Graphics Forum (CGF)}, 41(6), 2022.

\bibitem{Pardalos:1993uo}
Panos Pardalos, Franz Rendl, and Henry Wolkowicz.
\newblock {The Quadratic Assignment Problem - A Survey and Recent
  Developments.}
\newblock {\em DIMACS Series in Discrete Mathematics}, 1993.

\bibitem{pinkall1993cotan}
Ulrich Pinkall and Konrad Polthier.
\newblock {Computing Discrete Minimal Surfaces and their Conjugates}.
\newblock {\em Experimental Mathematics}, 1(2), 1993.

\bibitem{Ren2020MapTree}
Jing Ren, Simone Melzi, Maks Ovsjanikov, and Peter Wonka.
\newblock {MapTree: Recovering Multiple Solutions in the Space of Maps}.
\newblock {\em ACM Transactions on Graphics (TOG)}, 2020.

\bibitem{ren2018continuous}
Jing Ren, Adrien Poulenard, Peter Wonka, and Maks Ovsjanikov.
\newblock {Continuous and Orientation-Preserving Correspondences via Functional
  Maps}.
\newblock In {\em SIGGRAPH Asia}, 2018.

\bibitem{ren2020jaws}
Jing Ren, Peter Wonka, Gowtham Harihara, and Maks Ovsjanikov.
\newblock {Geometric Analysis of Shape Variability of Lower Jaws of Prehistoric
  Humans}.
\newblock {\em L'Anthropologie}, 2020.

\bibitem{rendl1994quadratic}
F Rendl, P Pardalos, and H Wolkowicz.
\newblock {The Quadratic Assignment Problem: A Survey and Recent Developments}.
\newblock In {\em DIMACS workshop}, 1994.

\bibitem{rodola2012game}
Emanuele Rodola, Alex Bronstein, Andrea Albarelli, Filippo Bergamasco, and
  Andrea Torsello.
\newblock {A Game-Theoretic Approach to Deformable Shape Matching}.
\newblock In {\em CVPR}, 2012.

\bibitem{roetzer2023conjugate}
Paul Roetzer, Zorah L{\"a}hner, and Florian Bernard.
\newblock {Conjugate Product Graphs for Globally Optimal 2D-3D Shape Matching}.
\newblock In {\em CVPR}, 2023.

\bibitem{roetzer2022scalable}
Paul Roetzer, Paul Swoboda, Daniel Cremers, and Florian Bernard.
\newblock {A Scalable Combinatorial Solver for Elastic Geometrically Consistent
  3D Shape Matching}.
\newblock In {\em CVPR}, 2022.

\bibitem{sahillioglu2018genetic}
Yusuf Sahillio\u{g}lu.
\newblock {A Genetic Isometric Shape Correspondence Algorithm with Adaptive
  Sampling}.
\newblock {\em ACM Transactions on Graphics (TOG)}, 36, 2018.

\bibitem{sahillioglu2020survey}
Yusuf Sahillio\u{g}lu.
\newblock {Recent Advances in Shape Correspondence}.
\newblock {\em The Visual Computer}, 36(1), 2020.

\bibitem{Schellewald:2005up}
Christian Schellewald and Christoph Schn{\"o}rr.
\newblock {Probabilistic Subgraph Matching Based on Convex Relaxation}.
\newblock In {\em EMMCVPR}, 2005.

\bibitem{schmidt2009planar}
Frank Schmidt, Eno T\"oppe, and Daniel Cremers.
\newblock {Efficient Planar Graph Cuts with Applications in Computer Vision}.
\newblock In {\em CVPR}, 2009.

\bibitem{schoenemann2009combinatorial}
Thomas Schoenemann and Daniel Cremers.
\newblock {A Combinatorial Solution for Model-Based Image Segmentation and
  Real-time Tracking}.
\newblock {\em TPAMI}, 32(7):1153--1164, 2009.

\bibitem{solomon2011killing}
Justin Solomon, Mirela Ben-Chen, Adrian Butscher, and Leonidas Guibas.
\newblock {As-Killing-As-Possible Vector Fields for Planar Deformation}.
\newblock In {\em Computer Graphics Forum}, 2011.

\bibitem{sorkine2007rigid}
Olga Sorkine and Marc Alexa.
\newblock {As-Rigid-As-Possible Surface Modeling}.
\newblock In {\em SGP}, 2007.

\bibitem{taylor2012vitruvian}
Jonathan Taylor, Jamie Shotton, Toby Sharp, and Andrew Fitzgibbon.
\newblock {The Vitruvian Manifold: Inferring Dense Correspondences for One-Shot
  Human Pose Estimation}.
\newblock In {\em CVPR}, pages 103--110, 2012.

\bibitem{vestner2017efficient}
Matthias Vestner, Zorah L{\"a}hner, Amit Boyarski, Or Litany, Ron Slossberg,
  Tal Remez, Emanuele Rodola, Alex Bronstein, Michael Bronstein, Ron Kimmel,
  and Daniel Cremers.
\newblock {Efficient Deformable Shape Correspondence via Kernel Matching}.
\newblock In {\em 3DV}, 2017.

\bibitem{windheuser2011geometrically}
Thomas Windheuser, Ulrich Schlickewei, Frank Schmidt, and Daniel Cremers.
\newblock {Geometrically Consistent Elastic Matching of 3D Shapes: A Linear
  Programming Solution}.
\newblock In {\em ICCV}, 2011.

\bibitem{wu2019sagnet}
Zhijie Wu, Xiang Wang, Di Lin, Dani Lischinski, Daniel Cohen-Or, and Hui Huang.
\newblock {SAGNet: Structure-Aware Generative Network for 3D-Shape Modeling}.
\newblock {\em ACM Transactions on Graphics (TOG)}, 2019.

\bibitem{yang2015sparse}
Jingyu Yang, Ke Li, Kun Li, and Yu-Kun Lai.
\newblock {Sparse Non-rigid Registration of 3D Shapes}.
\newblock In {\em SGP}, 2015.

\bibitem{Zuffi:CVPR:2017}
Silvia Zuffi, Angjoo Kanazawa, David Jacobs, and Michael~J. Black.
\newblock {3D Menagerie: Modeling the {3D} Shape and Pose of Animals}.
\newblock In {\em CVPR}, 2017.

\end{thebibliography}
}

\appendix
\clearpage
\begin{center}
\textbf{\large Supplementary Material}
\end{center}

\section{Proofs}

In the following we provide proofs for all lemmata from the main paper.
\subsection{Proof of Lemma 1} \label{subsec:proofs-lemma1}

\paragraph{Invariance of the PLBO}
We demonstrate that, despite using the vertex coordinates $\mX$ explicitly, the operator $\DeltaXproj$ is agnostic to the extrinsic orientation of the input pose $\cX$. Specifically, it is invariant under arbitrary rigid body transformations from the Euclidean group $\rmE{3}$.

\begin{lemma}\label{lemma:invariance-rigid-body-supp}
Let $\Delta(\mX):=\DeltaXproj\in\bbR^{\absmX\times \absmX}$ be the projected Laplace-Beltrami operator for the vertices $\mX$, defined in~Eqn.~\eqref{eq:proj-lbo}. For any rigid body transformation 
\begin{equation}
    \begin{pmatrix}
    \mR & \vt \\
    \vzero & 1
    \end{pmatrix}\in\rmE{3},~~\text{with}~~\mR\in \rmO{3},\vt\in\bbR^3,
\end{equation}
it holds that $\Delta(\mX)=\Delta(\mX\mR^\top+\vone\vt^\top)$.
\end{lemma}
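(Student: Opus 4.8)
The plan is to reduce the claim to a single algebraic observation: a rigid motion acts on the homogeneous coordinate matrix $\mXt=\begin{pmatrix}\mX&\vone\end{pmatrix}$ by right multiplication with an invertible matrix, and each of the two factors in $\DeltaXproj=(\mPiX)^\top\DeltaXstiff\mPiX$ is immune to such an action, though for two different reasons that I would treat separately. First I would record how the data transforms. Writing $\mathbf{T}=\begin{pmatrix}\mR&\vt\\\vzero&1\end{pmatrix}$ and $\mX':=\mX\mR^\top+\vone\vt^\top$, a block computation gives
\begin{equation*}
\begin{pmatrix}\mX'&\vone\end{pmatrix}=\mXt\,\mathbf{T}^\top,
\end{equation*}
so the transformed homogeneous coordinate matrix is exactly $\mXt\mathbf{T}^\top$. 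Since $\mR\in\rmO{3}$ is invertible, $\mathbf{T}$ and hence $\mathbf{T}^\top$ are invertible.

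The key step is the invariance of the projector. The matrix $\mPiX$ is the orthogonal projector onto the orthogonal complement of the column space of $\mXt$, and right multiplication by the invertible $\mathbf{T}^\top$ leaves this column space unchanged, so $\mPiX$ should be unchanged as well. I would make this precise by direct substitution: using $(\mathbf{T}\,\mXt^\top\mXt\,\mathbf{T}^\top)^{-1}=\mathbf{T}^{-\top}(\mXt^\top\mXt)^{-1}\mathbf{T}^{-1}$, the factors $\mathbf{T}^\top\mathbf{T}^{-\top}$ and $\mathbf{T}^{-1}\mathbf{T}$ cancel, leaving $\mXt\mathbf{T}^\top(\mathbf{T}\,\mXt^\top\mXt\,\mathbf{T}^\top)^{-1}\mathbf{T}\,\mXt^\top=\mXt(\mXt^\top\mXt)^{-1}\mXt^\top$, i.e. $\mPiXp=\mPiX$. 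This step tacitly requires $\mXt^\top\mXt$ to be invertible, which holds whenever the vertices are not contained in a common affine plane (the generic case); I would flag this as a standing nondegeneracy assumption, since otherwise $\mPiX$ is not even well defined.

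It then remains to argue that the stiffness matrix is invariant. The cotangent stiffness matrix $\DeltaXstiff$ is \emph{intrinsic}: each entry is built from cotangents of triangle angles, which in turn depend only on the pairwise edge lengths. A Euclidean motion -- including reflections, since $\mR$ ranges over all of $\rmO{3}$ -- preserves all distances between vertices, hence all edge lengths and angles, so $\DeltaXstiff(\mX')=\DeltaXstiff(\mX)$. This is a standard fact about the cotangent Laplacian that I would simply invoke.

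Combining the three observations gives $\Delta(\mX')=(\mPiXp)^\top\DeltaXstiff(\mX')\mPiXp=(\mPiX)^\top\DeltaXstiff(\mX)\mPiX=\Delta(\mX)$, which is the assertion. The genuinely load-bearing part is the projector cancellation, so I expect the main obstacle to be stating the nondegeneracy hypothesis cleanly and verifying the inverse factorisation of $\mXt'^\top\mXt'$; the stiffness invariance and the transformation law for $\mXt$ are routine once the problem is set up in homogeneous coordinates.
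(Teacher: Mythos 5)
Your proof is correct and follows essentially the same route as the paper's: the same transformation law $\mXt\mapsto\mXt\mQ^\top$ in homogeneous coordinates, the same cancellation argument for the projector (the paper justifies it identically by noting $\rmE{3}\subset\mathrm{GL}(4)$, i.e.\ invertibility of $\mQ$), and the same appeal to the intrinsic invariance of the stiffness matrix $\DeltaXstiff$. Your explicit flagging of the nondegeneracy condition that $\mXt^\top\mXt$ be invertible is a reasonable addition that the paper leaves implicit, but it does not alter the argument.
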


\begin{proof}
We first simplify the rigidly transformed vertices $\mX\mR^\top+\vone\vt^\top$ in homogeneous coordinates 
\begin{equation}
\begin{pmatrix}\mX\mR^\top+\vone\vt^\top&\vone\end{pmatrix}=\begin{pmatrix}\mX&\vone\end{pmatrix}\begin{pmatrix}\mR^\top&\mathbf{0}\\\vt^\top&1\end{pmatrix}=\mXt\mQ^\top
\end{equation}
where $\mXt:=\begin{pmatrix}\mX&\vone\end{pmatrix}\in\bbR^{\absmX\times 4}$ is the neutral input pose and $\mQ:=\begin{pmatrix}
    \mR & \vt \\
    \vzero & 1
    \end{pmatrix}$. 
We then directly obtain the invariance of the projection matrix defined in Eqn.~\eqref{eq:proj-matrix}
\begin{align}\nonumber
    \mPi_{\mX\mR^\top+\vone\vt^\top}=&~\mathbf{I}-\mXt\mQ^\top(\mQ\mXt^\top\mXt\mQ^\top)^{-1}\mQ\mathbf{{X}}^\top\\=&~\mathbf{I}-\mXt(\mXt^\top\mXt)^{-1}\mXt^\top=\mPi_{\mX}.
    \label{eq:proof-projection-invariance}
\end{align}
The last equality is valid for any $\mQ$, because $\rmE{3}\subset\mathrm{GL}(4)$.
Since the Laplacian matrix $\DeltaXstiff$ is, by construction, invariant under rigid-body transformations, inserting Eqn.~\eqref{eq:proof-projection-invariance} into Eqn.~\eqref{eq:proj-lbo} directly yields the desired equality.
\end{proof}

\subsection{Proof of Lemma 2} \label{subsec:proofs-lemma2}
\begin{lemma}\label{lemma:invariance-scale-rigid-supp}
    Let $\bigl(\mP,\mXh,\mYh\bigr)$ be a global optimiser of Eqn.~\eqref{eq:optimization-problem}. 
    \begin{enumerate}
        \item[(a)] Let $\cX':=\bigl(s\mX,\mFX,\cI\bigr)$ be a rescaled input shape $\cX$, where a scalar factor $s>0$ is applied to the vertex coordinates.
        Then $\bigl(\mP',\mXh',\mYh'\bigr):=\bigl(\mP,\mXh,s\mYh\bigr)$ is a global optimiser of Eqn.~\eqref{eq:optimization-problem} between $\cX'$ and $\cY$.
        \item[(b)] Let $\cX'':=\bigl(\mX \mR^\top + \vone\vt^\top ,\mFX,\cI\bigr)$ be a rigidly transformed version of $\cX$ with $\mR\in \rmSO{3},\vt\in\bbR^3$.
    Then $\bigl(\mP'',\mXh'',\mYh''\bigr):=\bigl(\mP,\mXh,\mYh\mR^\top+\vone\vt^\top\bigr)$ is a global optimiser of Eqn.~\eqref{eq:optimization-problem} between $\cX''$ and $\cY$.
    \end{enumerate}
\end{lemma}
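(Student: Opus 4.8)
The plan is to prove both parts by the same template: exhibit a bijection of the feasible set of the transformed problem onto that of the original problem that leaves the objective of Eqn.~\eqref{eq:optimization-problem} unchanged, so that global optima are carried to global optima. Since the only constraints act on $\mP$ and these are identical for both problems, the prescribed maps $(\mP,\mXh,\mYh)\mapsto(\mP,\mXh,s\mYh)$ for part (a) and $(\mP,\mXh,\mYh)\mapsto(\mP,\mXh,\mYh\mR^\top+\vone\vt^\top)$ for part (b) are clearly bijections of the feasible sets, since each acts invertibly on $\mYh$ while fixing $\mP$ and $\mXh$. It therefore remains to check, term by term, that $\cErec$, $\cEdef$ and $\cEori$ take the same value before and after the substitution.

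For part (a) I would first record how the auxiliary quantities of $\cX$ scale: the cotangent stiffness matrix $\DeltaXstiff$ is built from angles and is thus scale-invariant; the projector $\mPiX$ is unchanged because $(s\mX,\vone)$ spans the same column space as $\mXt$, whence $\DeltaXproj$ is invariant; the geodesic diameter obeys $\dcX\mapsto s\dcX$ while $\dcY$ is fixed; the keypoints obey $\mXI\mapsto s\mXI$; and $\vhXI$ is invariant because (as stated in Sec.~\ref{subsec:sparse-matching}) the feature uses \emph{normalised} gradient fields, making each entry a scalar triple product of unit vectors. Substituting $\mYh'=s\mYh$ then makes every factor $s$ cancel against the $1/\dcX$ normalisation: the second summand of $\cErec$ becomes $\tfrac{1}{ns\dcX}\|s\mYhJ-\mP^\top s\mXI\|_F$ and the second summand of $\cEdef$ becomes $\tfrac{1}{\absmY s\dcX}\|\DeltaYproj\, s\mYh\|_F$, and by homogeneity of the Frobenius norm both reduce to their original values, while the remaining summands and $\cEori$ are untouched.

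For part (b) the key inputs are \Cref{lemma:invariance-rigid-body}, which gives invariance of $\DeltaXproj$ directly (here $\mR\in\rmSO{3}\subset\rmO{3}$), together with the facts that rigid motions preserve geodesic distances (so $\dcX$ is fixed) and that $\vhXI$ is invariant under \emph{proper} rotations and translations; this last point is exactly why part (b) is stated for $\rmSO{3}$ rather than all of $\rmO{3}$, since a reflection flips the sign of the orientation feature. The substitution $\mYh''=\mYh\mR^\top+\vone\vt^\top$ then needs to be pushed through the two affected summands. In $\cErec$ I would use $\mP^\top\vone_n=\vone_n$ to obtain $\mP^\top\mXI''=\mP^\top\mXI\mR^\top+\vone_n\vt^\top$, so the translation cancels and the residual equals $(\mYhJ-\mP^\top\mXI)\mR^\top$, whose Frobenius norm is unchanged as $\mR$ is orthogonal. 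In $\cEdef$ I would use $\mPiY\vone=\vzero$ (since $\vone$ is a column of $\tilde{\mY}$), hence $\DeltaYproj\vone=\vzero$, so that $\DeltaYproj\mYh''=\DeltaYproj\mYh\,\mR^\top$ and the norm is again preserved.

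The norm bookkeeping is routine; the main obstacle is correctly tracking how each geometric ingredient transforms and, in part (b), the interplay of the translation $\vone\vt^\top$ with the constraint $\mP^\top\vone_n=\vone_n$ and the kernel relation $\mPiY\vone=\vzero$ — these two facts are precisely what absorb the additive offset, and overlooking either would leave a spurious $\vone\vt^\top$ term. Invariance under rescaling and rigid transformations of $\cY$ then follows verbatim by the symmetry of the formulation.
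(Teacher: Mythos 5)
Your proposal is correct and follows essentially the same route as the paper's own proof: a change-of-variables argument checking term by term that the objective is preserved, using scale/rigid invariance of $\DeltaXproj$, $\DeltaXstiff$ and $\vhX$, the scaling $\dcX\mapsto s\dcX$, the constraint $\mP^\top\vone_n=\vone_n$ to absorb the translation in $\cErec$, and $\DeltaYproj\vone=\vzero$ to absorb it in $\cEdef$. Your only (welcome) streamlinings are arguing projector invariance via equality of column spaces instead of the paper's explicit computation with $\mS=\mathrm{diag}(s,s,s,1)$, and making explicit why part (b) is restricted to $\rmSO{3}$.
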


\paragraph{Lemma 2a -- Invariance of Global Scaling}
\begin{proof}
Rescaling the shape $\cX\to \cX'$ affects the shape diameter in the same manner $\dcX'=s~\dcX$ with $s>0$.
On the other hand, the orientation features $\vh^{\bullet}$ are fully scale invariant, since we leverage scale-invariant scalar input fields and the outer normals are normalised to unit length. Likewise, the Laplacian stiffness matrix $\DeltaXstiff$ is unaffected, and the projection $\mPiX$ defined in Eqn.~\eqref{eq:proj-matrix} of $\cX'$ becomes:

\begin{align}
    \label{eq:plbo-scale-invariant}
    \mPiXp &~= \mI - \mXt \mS \bigl( (\mXt \mS) ^\top \mXt \mS)^{-1} (\mXt \mS \bigr)^\top \nonumber \\
    &~= \mI - \mXt (\mXt^\top \mXt)^{-1} \mXt^\top = \mPiX,
\end{align}

where the homogeneous coordinates $\mXt$ are rescaled with the diagonal matrix $\mS:=\mathrm{diag}(s,s,s,1)\in\bbR^{4\times 4}$.
Hence, the projected LBO $\DeltaXproj$ from Eqn.~\eqref{eq:proj-lbo} scale-invariant.

Inserting these scale shift identities yields the following optimisation problem Eqn.~\eqref{eq:optimization-problem-scaled}:

\begin{align}\label{eq:optimization-problem-scaled}
    \min_{\mP', \mXh', \mYh'}~~&~~\frac{1}{n\dcY}\bigl\|\mXhI'-\mP'\mYJ\bigr\|_F+\frac{1}{n\dcX}\bigl\|\frac{1}{s}\mYhJ'-\mP'^\top\mXI\bigr\|_F
    \nonumber\\&~~+\frac{\lambda_\mathrm{def}}{\absmX\dcY}\bigl\|\DeltaXproj\mXh'\bigr\|_F+\frac{\lambda_\mathrm{def}}{\absmY\dcX}\bigl\|\frac{1}{s}\DeltaYproj\mYh'\bigr\|_F \nonumber\\
    &~~+\frac{\lambda_\mathrm{ori}}{n}\bigl\| \vhX-\mP'\vhY \bigr\|_F,\\
    \text{s.t.}~~&~~\mP'\in\{0,1\}^{n\times n},~\mP'^\top\vone_n=\vone_n,~\mP'\vone_n=\vone_n,\nonumber
\end{align}
Substituting $\mP'\to \mP$, $\mXh'\to \mXh$ and $\frac{1}{s}\mYh'\to \mYh$ then results in exactly the optimisation problem from Eqn.~\eqref{eq:optimization-problem} with the original inputs $\cX$ and $\cY$. Inserting the global optimiser from the original, unscaled problem $\bigl(\mP,\mXh,\mYh\bigr)$ directly results in the global optimiser $\bigl(\mP',\mXh',\mYh'\bigr)=\bigl(\mP,\mXh,s\mYh\bigr)$ of the scaled problem. 
\end{proof}

\paragraph{Lemma 2b -- Invariance of Rigid Transformations}

\begin{proof} Applying a rigid transformation $\cX \to \cX''$ to the shape $\mX \to \mX \mR^\top + \vone  \vt^\top $ leads to $\mXI \to \mXI \mR^\top + \vone  \vt^\top $ directly, where $\mR \in \rmSO{3}$ and $\vt \in \bbR^3$. On the other hand, $\DeltaXproj$ is invariant to rigid transformation (cf. \Cref{lemma:invariance-rigid-body}), so the term $\DeltaXproj$ stays unaffected. Furthermore, the orientation-aware features $\vh^{\bullet}$ are rigid transformation invariant, as discussed in Sec.~\ref{subsec:sparse-matching} of the main paper.

Inserting these rigid transformation identities yields the following optimisation problem Eqn.~\eqref{eq:optimization-problem-rigid}:

\begin{align}\label{eq:optimization-problem-rigid}
    \min_{\mP'', \mXh'', \mYh''}~~&~~\frac{1}{n\dcY}\bigl\|\mXhI''-\mP''\mYJ\bigr\|_F \nonumber \\
    &~~+\frac{1}{n\dcX}\bigl\|\mYhJ''-\mP''^\top (\mXI \mR^\top + \vone  \vt^\top) \bigr\|_F
    \nonumber\\&~~+\frac{\lambda_\mathrm{deform}}{\absmX\dcY}\bigl\| \DeltaXproj\mXh''\bigr\|_F+\frac{\lambda_\mathrm{deform}}{\absmY\dcX}\bigl\| \DeltaYproj\mYh''\bigr\|_F \nonumber\\
    &~~+\frac{\lambda_\mathrm{orient}}{n}\bigl\| \vhX-\mP''\vhY \bigr\|_F,\\
    \text{s.t.}~~&~~\mP''\in\{0,1\}^{n\times n},~\mP''^\top\vone_n=\vone_n,~\mP''\vone_n=\vone_n,\nonumber
\end{align}

Substituting $\mP''\to \mP$, $\mXh''\to \mXh$ and $\mYh'' \to \mYh \mR^\top + \vone \vt^\top $, we have:

\begin{align}\label{eq:optimization-problem-rigid1}
    \min_{\mP, \mXh, \mYh}~~&~~\frac{1}{n\dcY}\bigl\|\mXhI-\mP\mYJ\bigr\|_F \nonumber \\
    &~~+\frac{1}{n\dcX}\bigl\| (\mYhJ \mR^\top + \vone \vt^\top) - \mP^\top (\mXI \mR^\top + \vone  \vt^\top) \bigr\|_F
    \nonumber\\&~~+\frac{\lambda_\mathrm{deform}}{\absmX\dcY}\bigl\| \DeltaXproj\mXh\bigr\|_F \nonumber \\
    &~~+\frac{\lambda_\mathrm{deform}}{\absmY\dcX}\bigl\| \DeltaYproj(\mYh \mR^\top + \vone \vt^\top)\bigr\|_F \nonumber\\
    &~~+\frac{\lambda_\mathrm{orient}}{n}\bigl\| \vhX-\mP\vhY \bigr\|_F,\\
    \text{s.t.}~~&~~\mP\in\{0,1\}^{n\times n},~\mP^\top\vone_n=\vone_n,~\mP\vone_n=\vone_n,\nonumber
\end{align}

The first, third and fifth term in Eqn.~\eqref{eq:optimization-problem-rigid1} are same as the corresponding terms in the original problem of Eqn.~\eqref{eq:optimization-problem}, hence the only critical terms are the second and forth. The second can be rewritten as follows:

\begin{align}
    &~~\bigl\|\mYh \mR^\top + \vone \vt^\top -\mP^\top (\mXI \mR^\top + \vone \vt^\top) \bigr\|_F \nonumber \\
    =&~~\bigl\| (\mYh - \mP^\top  \mXI ) \mR^\top \bigr\|_F \nonumber \\
    =&~~\bigl\| \mYh - \mP^\top  \mXI \bigr\|_F 
\end{align}

The first equality holds because $\mP^\top\vone=\vone$ holds by construction. The second equality follows from the fact the $\rmSO{3}$ elements do not affect the Frobenius norm.

For the forth term we have the following equalities:

\begin{align}
    &~~\bigl\| \DeltaYproj(\mYh \mR^\top + \vone \vt^\top)\bigr\|_F \nonumber \\
    =&~~\bigl\| \DeltaYproj\mYh \mR^\top + \DeltaYproj \vone \vt^\top\bigr\|_F \nonumber \\
    =&~~\bigl\| \DeltaYproj\mYh \mR^\top \bigr\|_F \nonumber \\
    =&~~\bigl\| \DeltaYproj\mYh \bigr\|_F
\end{align}

where we utilise the linearity of the PLBO, and the fact that the constant vector $\vone \vt^\top$ lives in its nullspace (the first and second equality). The third equality holds because rotations do not affect the Frobenius norm.

Hence, $\bigl( \mP'', \mXh'', \mYh''\bigr)= \bigl( \mP,\mXh, \mYh \mR^\top + \vone \vt^\top \bigr)$ is the global optimiser of the rigidly transformed problem. 
\end{proof}

\section{Implementation Details}

\begin{figure}[t]
    \centering
    \begin{tabular}{cc}
        \includegraphics[width=3.5cm]{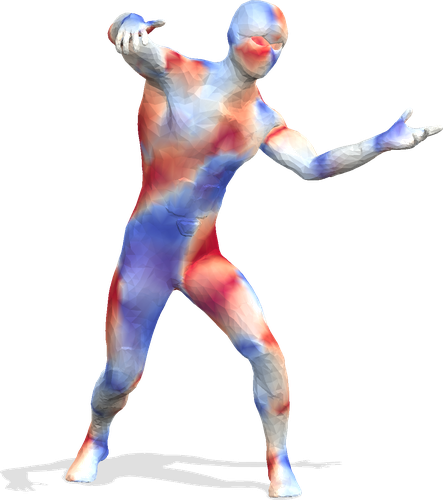}&
        \includegraphics[width=4cm]{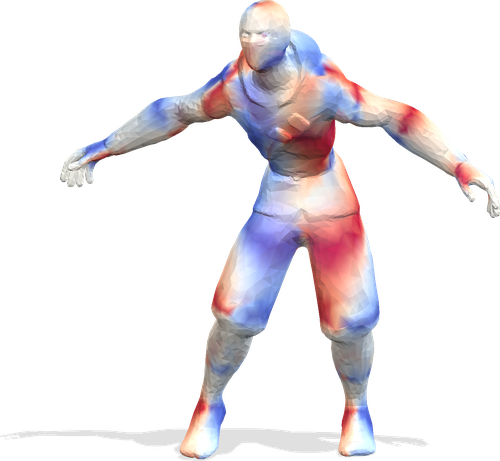}\\
        \includegraphics[width=3.5cm]{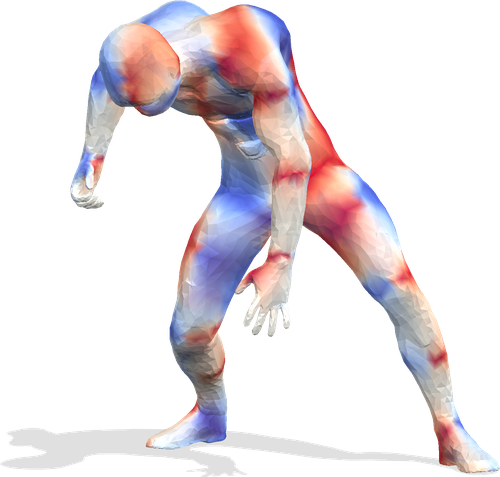}&
        \includegraphics[width=3.7cm]{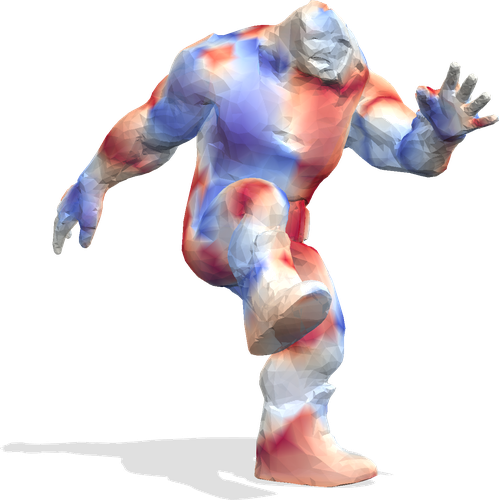}\\
        \includegraphics[width=3.7cm]{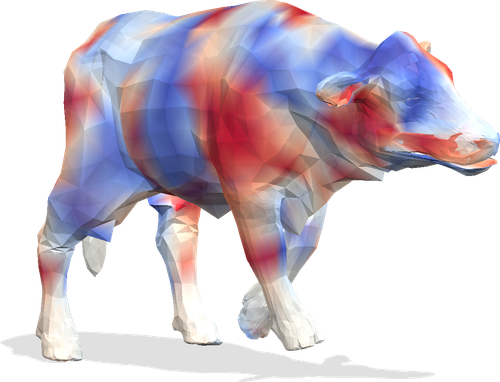}&
        \includegraphics[width=2.8cm]{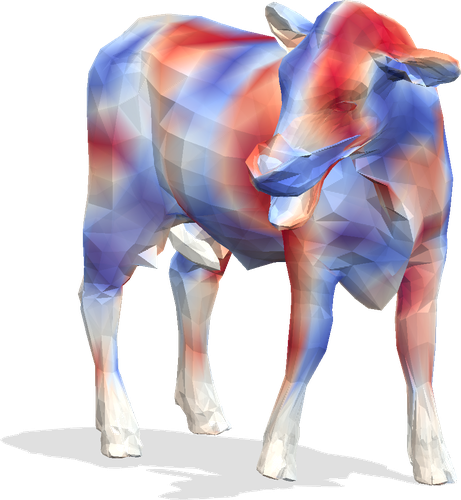}\\
        \includegraphics[width=3.5cm]{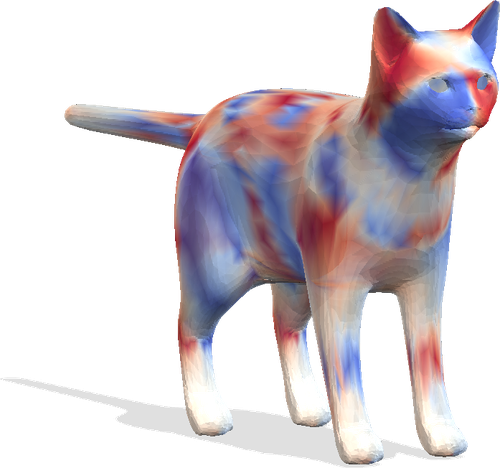}&
        \includegraphics[width=2.5cm]{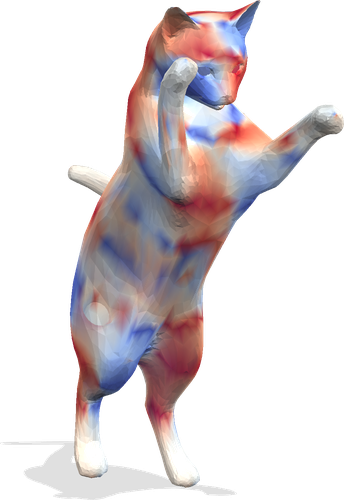}
    \end{tabular}
    \caption{Visualisation of our \textbf{orientation-aware feature}. Red and blue indicate the respective orientation ${-1, 1}$ while white indicates values in-between.
    }
    \label{fig:feat_ori}
\end{figure}

\subsection{Shape Reconstruction}
\label{secsupp:shape_rec}
In Fig.~\ref{fig:lbovsproj} examples of the reconstructed shapes using the area-normalised LBO, \ie the stiffness matrix component $\DeltaXstiff$ of the Laplacian, and PLBO are illustrated for qualitative comparison. It demonstrates that the PLBO leads to more realistic results whereas the LBO leads to over-smoothed reconstructions. For obtain these results, we fixed the correspondences in the $\cErec$ term of our final objective defined in Eqn.~\eqref{eq:optimization-problem} and run our full implementation to estimate $\mXh$ and $\mYh$. This results in fast optimisation since the unknowns are continuous variables living in $\bbR^3$ and helps to show the quality of the reconstruction in very high resolution ($10$k faces) with clean correspondences. Since the reconstructed shapes are only a by-product of our method, we use lower resolution meshes ($500$ faces) in all experiments for the sake of faster optimisation, but a similar reconstruction behaviour of PLBO and LBO can be observed in this case as well.

The idea of projecting LBO has been explored in \cite{nasikun2018fastspectrum}, with a focus on \emph{preserving} a certain subspace of the LBO (approximating the low frequency end of the spectrum), for the task of fast spectral decomposition. In contrary, we aim to \emph{exclude} the subspace of shape coordinates by the adding it to the kernel (of PLBO) for geometry reconstruction. Hence both approaches can be seen complementary.

\subsection{Details on the Orientation-Aware Feature}

Inspired by \cite{ren2018continuous}, the orientation-aware feature defined in Eqn.~\eqref{eq:orientation-feature} requires two scalar-valued features $\vfX$ and $\vgX$ besides the unit outer normal $\vnX$ as input which encode the information of shape orientation. In theory, we can use any pair of scalar-valued features to construct an orientation-aware feature map. However, there are several aspects one needs to take into consideration. First, the scalar-values features should be scale invariant. Second, their gradient fields should not be close to parallel, since the cross-product of two parallel vectors will vanish. In practice, we choose the $1$st and $70$-th frequency of the wave kernel signature \cite{aubry2011wks} as the $\vfX$ and $\vgX$ due to their dissimilar frequencies which effectively avoids near-parallel gradient fields. Empirically we found it works well for our purpose. See Fig.~\ref{fig:feat_ori} for an illustration. Note that although the orientation-aware features can be easily computed densely for every vertex, only the ones for the sparse keypoints are utilised in Eqn.~\eqref{eq:term-ori-aware}. As shown in Fig.~\ref{fig:feat_ori}, the feature map is noisy, especially under the presence of non-isometric deformations. Hence, it helps to disambiguate the intrinsic shape symmetry, however, it does not fully exclude symmetrically flipped matchings and does not lead to fine-grain feature alignment.

\begin{figure}
    \begin{tabular}{cc}
        \hspace{-1cm}
        \newcommand{\pckLineWidth}{3pt}
\newcommand{\plotWidth}{\columnwidth}
\newcommand{\plotHeight}{0.75\columnwidth}
\newcommand{\pckTitle}{TOSCA}
\definecolor{cPLOT0}{RGB}{28,213,227}
\definecolor{cPLOT1}{RGB}{80,150,80}
\definecolor{cPLOT2}{RGB}{90,130,213}
\definecolor{cPLOT3}{RGB}{247,179,43}
\definecolor{cPLOT5}{RGB}{242,64,0}

\pgfplotsset{%
    label style = {font=\large},
    tick label style = {font=\large},
    title style =  {font=\LARGE},
    legend style={  fill= gray!10,
                    fill opacity=0.6, 
                    font=\large,
                    draw=gray!20, %
                    text opacity=1}
}
\begin{tikzpicture}[scale=0.5, transform shape]
	\begin{axis}[
		width=\plotWidth,
		height=\plotHeight,
		grid=major,
		title=\pckTitle,
		legend style={
			at={(0.97,0.03)},
			anchor=south east,
			legend columns=1},
		legend cell align={left},
		ylabel={{\Large$\%$ Correct Matchings}},
        xlabel={{\Large$\%$ Geodesic Error}},
		xmin=0,
        xmax=1,
        ylabel near ticks,
        xtick={0, 0.25, 0.5, 0.75, 1},
        ymin=0.5,
        ymax=1.03,
        ytick={0, 0.20, 0.40, 0.60, 0.7, 0.80, 0.9, 1.00},
        yticklabels = {0, 20, 40, 60, 70, 80, 90, 100},
	]

    \addplot [color=cPLOT3, smooth, line width=\pckLineWidth]
    table[row sep=crcr]{%
0	0.831722217362144\\
0.0344827586206897	0.855876806734124\\
0.0689655172413793	0.895471865006464\\
0.103448275862069	0.918214601619378\\
0.137931034482759	0.933608899775464\\
0.172413793103448	0.942271989424469\\
0.206896551724138	0.951046860875397\\
0.241379310344828	0.956077041961916\\
0.275862068965517	0.958480350703253\\
0.310344827586207	0.959177771945684\\
0.344827586206897	0.960239699063949\\
0.379310344827586	0.965605225556236\\
0.413793103448276	0.971082533850445\\
0.448275862068966	0.973344900319793\\
0.482758620689655	0.974518609239981\\
0.517241379310345	0.975130979111383\\
0.551724137931034	0.975913451724841\\
0.586206896551724	0.976219636660543\\
0.620689655172414	0.977002109274001\\
0.655172413793103	0.979655712050078\\
0.689655172413793	0.981220657276995\\
0.724137931034483	0.982003129890454\\
0.758620689655172	0.984741784037559\\
0.793103448275862	0.988262910798122\\
0.827586206896552	0.991001564945227\\
0.862068965517241	0.992175273865415\\
0.896551724137931	0.997652582159624\\
0.931034482758621	1\\
0.96551724137931	1\\
1	1\\
    };
    \addlegendentry{\textcolor{black}{{Mina}~\cite{bernard2020mina}: 0.962}}
    \addplot [color=cPLOT5, smooth, densely dotted, line width=\pckLineWidth]
    table[row sep=crcr]{%
0	0.81345561290448\\
0.0344827586206897	0.839272349070267\\
0.0689655172413793	0.87388339700036\\
0.103448275862069	0.907104462523936\\
0.137931034482759	0.932077975096958\\
0.172413793103448	0.953107534093449\\
0.206896551724138	0.977738411143187\\
0.241379310344828	0.986595903925971\\
0.275862068965517	0.994138745516578\\
0.310344827586207	0.997433878634124\\
0.344827586206897	0.99810456944566\\
0.379310344827586	0.99810456944566\\
0.413793103448276	0.998775260257195\\
0.448275862068966	0.998775260257195\\
0.482758620689655	0.998775260257195\\
0.517241379310345	0.998775260257195\\
0.551724137931034	1\\
0.96551724137931	1\\
1	1\\
    };
    \addlegendentry{\textcolor{black}{Ours w/o Ori.: 0.976}}
    \addplot [color=cPLOT5, smooth, line width=\pckLineWidth]
    table[row sep=crcr]{%
0	0.890417383528222\\
0.0344827586206897	0.900533636602222\\
0.0689655172413793	0.917548770886187\\
0.103448275862069	0.937049835243344\\
0.137931034482759	0.952864530176226\\
0.172413793103448	0.963981959389185\\
0.206896551724138	0.980904752184605\\
0.241379310344828	0.986464681810671\\
0.275862068965517	0.994255387396845\\
0.310344827586207	0.996821508762721\\
0.344827586206897	0.997798384509958\\
0.379310344827586	0.997798384509958\\
0.413793103448276	0.998775260257195\\
0.448275862068966	0.998775260257195\\
0.482758620689655	0.998775260257195\\
0.517241379310345	0.998775260257195\\
0.551724137931034	1\\
1	1\\
    };
    \addlegendentry{\textcolor{black}{Ours: \textbf{0.984}}}
        
	\end{axis}
\end{tikzpicture}&
        \hspace{-1cm}
        \newcommand{\gapLineWidth}{3pt}
\definecolor{cPLOT0}{RGB}{28,213,227}
\definecolor{cPLOT1}{RGB}{80,150,80}
\definecolor{cPLOT2}{RGB}{90,130,213}
\definecolor{cPLOT3}{RGB}{247,179,43}
\definecolor{cPLOT5}{RGB}{242,64,0}

\pgfplotsset{%
    label style = {font=\large},
    tick label style = {font=\large},
    title style =  {font=\LARGE},
    legend style={  fill= gray!10,
                    fill opacity=0.6, 
                    font=\large,
                    draw=gray!20, %
                    text opacity=1}
}
\begin{tikzpicture}[scale=0.5, transform shape]
	\begin{axis}[
		width=\columnwidth,
		height=0.75\columnwidth,
		grid=major,
		title=Optimality vs. Runtime,
		legend style={
			at={(0.97,0.03)},
			anchor=south east,
			legend columns=1},
		legend cell align={left},
		ylabel={{\Large$\%$ Pairs w/ Closed Gap}},
        xlabel={{\Large$\%$ Runtime (s)}},
		xmin=0,
        xmax=3650,
        ylabel near ticks,
        xtick={0, 1000, 2000, 3000, 4000},
		ymin=-5,
        ymax=103,
        ytick={0, 20, 40, 60, 80, 100},
	]
 
    \addplot [color=cPLOT3, smooth, line width=\gapLineWidth]
    table[row sep=crcr]{%
0 0\\
124.1379 7.0423\\
248.2759 25.3521\\
372.4138 29.5775\\
496.5517 32.3944\\
620.6897 39.4366\\
744.8276 39.4366\\
868.9655 40.8451\\
993.1034 46.4789\\
1117.2414 49.2958\\
1241.3793 50.7042\\
1365.5172 53.5211\\
1489.6552 54.9296\\
1613.7931 56.338\\
1737.931 57.7465\\
1862.069 59.1549\\
1986.2069 63.3803\\
2110.3448 63.3803\\
2234.4828 66.1972\\
2358.6207 66.1972\\
2482.7586 66.1972\\
2606.8966 66.1972\\
2731.0345 66.1972\\
2855.1724 69.0141\\
2979.3103 73.2394\\
3103.4483 73.2394\\
3227.5862 73.2394\\
3351.7241 73.2394\\
3475.8621 76.0563\\
3600 77.4648\\
    };
    \addlegendentry{\textcolor{black}{Mina}~\cite{bernard2020mina}}

    \addplot [color=cPLOT5, smooth, tension=0.1, line width=\gapLineWidth]
    table[row sep=crcr]{%
0 0\\
124.1379 61.9718\\
248.2759 84.507\\
372.4138 92.9577\\
496.5517 95.7746\\
620.6897 100\\
3600 100\\
    };
    \addlegendentry{\textcolor{black}{Ours}}
        
	\end{axis}
\end{tikzpicture}
    \end{tabular}
    \caption{Quantitative results with $k=5$ for MINA~\cite{bernard2020mina} and ours. Our method produces global optimal results for \textbf{all} instances for this (smaller) optimisation problem within $15$min.}
    \label{fig:pck-tosca-numlaps5}
\end{figure}
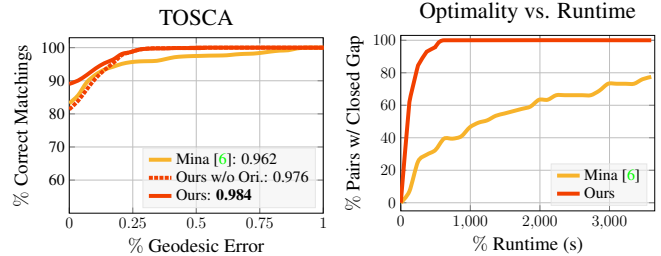

\section{Further Experiments}

\subsection{Experiments with Default Settings in MINA~\cite{bernard2020mina}}

The results shown in the main paper are obtained under a more conservative search space pruning than was proposed in the original paper, namely $k=11$, where the original MINA defaults to $k=5$. Below we show additional experimental results using $k=5$ and argue that in general less pruning is more favourable. 

As shown in Fig.~\ref{fig:pck-tosca-numlaps5}, MINA achieves accurate correspondences and certifies $80\%$ globally optimal pairs within $1$h in its original setting with a solution search space restricted to only allowing $k=5$ matching candidates per keypoint. In this setting, SIGMA still outperforms MINA and is able to find the global optima of all $71$ TOSCA \cite{Bronstein:2008:NGN:1462123} matching pairs with better accuracy within $15$ minutes. Note that the matching accuracy of SIGMA does not improve compared to the case of $k=11$ (cf. Fig.~\ref{fig:pck_tosca}), this is because the more aggressive pruning inevitably excludes some correct matchings which effects the matching performance negatively. Therefore, a more conservative pruning with higher $k$ is preferable in practice. Removing the pruning completely leads to an extremely enlarged search space and, thus, also higher runtime.

\subsection{Robustness to Keypoints}
Compared to MINA, our SIGMA is less sensitive to the exact position of keypoints.
Our experiments (Fig.~\ref{fig:pck_tosca}, \ref{fig:q-comparison}) on SHREC20 non-isometric shows the robustness of SIGMA, since the keypoints were hand-picked by the dataset author and an absolute matching does not exist.
Additionally we consider the challenging scenario of sampling the keypoints for each shape independently using farthest point sampling (FPS). Here, there are no guarantees that meaningful correspondences exist.
Quantitative and qualitative results on SMAL can be seen in Fig.~\ref{fig:pck_smal_fps}, \ref{fig:qual_smal_fps}. 
Compared to other keypoint-dependent approaches (MINA and PMSDP), our method is the least sensitive to keypoint positions but a certain drop in performance is expected.

\begin{figure}
    \centering
    \newcommand{\pckLineWidth}{3pt}
\newcommand{\plotWidth}{1.5\columnwidth}
\newcommand{\plotHeight}{1\columnwidth}
\newcommand{\pckTitle}{PCK of SMAL w/ fps Sampling}
\definecolor{cPLOT0}{RGB}{28,213,227}
\definecolor{cPLOT1}{RGB}{80,150,80}
\definecolor{cPLOT2}{RGB}{90,130,213}
\definecolor{cPLOT3}{RGB}{247,179,43}
\definecolor{cPLOT5}{RGB}{242,64,0}

\pgfplotsset{%
	label style = {font=\large},
	tick label style = {font=\large},
	title style =  {font=\large,
                        yshift=-0.5em},
	legend style={  fill= gray!10,
		fill opacity=0.6, 
		font=\large,
		draw=gray!20, %
		text opacity=1}
}
\begin{tikzpicture}[scale=0.5, transform shape]
	\begin{axis}[
		width=\plotWidth,
		height=\plotHeight,
		grid=major,
		legend style={
			at={(0.97,0.03)},
			anchor=south east,
			legend columns=1},
		legend cell align={left},
		ylabel={{\LARGE$\%$ Correct Matchings}},
        xlabel={\LARGE$\%$ Geodesic Error},
		xmin=0,
		xmax=1,
		ylabel near ticks,
		xtick={0, 0.25, 0.5, 0.75, 1},
		ymin=0,
		ymax=1.03,
		ytick={0, 0.20, 0.40, 0.60, 0.80, 1.00},
		yticklabels={$0$, $20$, $40$, $60$, $80$, $100$},
		]
		\addplot [color=cPLOT2, smooth, line width=\pckLineWidth]
		table[row sep=crcr]{%
			0	0.00340301409820127\\
			0.0344827586206897	0.0224842002916869\\
			0.0689655172413793	0.0330578512396694\\
			0.103448275862069	0.0444214876033058\\
			0.137931034482759	0.0618619348565873\\
			0.172413793103448	0.0955274671852212\\
			0.206896551724138	0.140738940204181\\
			0.241379310344828	0.188684978123481\\
			0.275862068965517	0.245685464268352\\
			0.310344827586207	0.305542051531356\\
			0.344827586206897	0.362421001458435\\
			0.379310344827586	0.415714632960622\\
			0.413793103448276	0.469798249878464\\
			0.448275862068966	0.52722411278561\\
			0.482758620689655	0.586290714632961\\
			0.517241379310345	0.649003403014098\\
			0.551724137931034	0.701324744773943\\
			0.586206896551724	0.745563928050559\\
			0.620689655172414	0.787858531842489\\
			0.655172413793103	0.825413223140496\\
			0.689655172413793	0.859200291686923\\
			0.724137931034483	0.885695187165775\\
			0.758620689655172	0.909212445308702\\
			0.793103448275862	0.931696645600389\\
			0.827586206896552	0.949866310160428\\
			0.862068965517241	0.961959163830821\\
			0.896551724137931	0.974477394263491\\
			0.931034482758621	0.985719494409334\\
			0.96551724137931	0.993072435585805\\
			1	1\\
		};
		\addlegendentry{\textcolor{black}{PMSDP tuned: 0.57}}
		\addplot [color=cPLOT3, smooth, line width=\pckLineWidth]
		table[row sep=crcr]{%
			0	0.0423029345759219\\
			0.0344827586206897	0.23829300737488\\
			0.0689655172413793	0.33949492115513\\
			0.103448275862069	0.413964662154802\\
			0.137931034482759	0.489941877445778\\
			0.172413793103448	0.568744223930345\\
			0.206896551724138	0.630764026263428\\
			0.241379310344828	0.680754451890621\\
			0.275862068965517	0.71732849803632\\
			0.310344827586207	0.754051558860301\\
			0.344827586206897	0.789479385277505\\
			0.379310344827586	0.807767748207796\\
			0.413793103448276	0.817562979791565\\
			0.448275862068966	0.831576048404696\\
			0.482758620689655	0.850364934365155\\
			0.517241379310345	0.867641785661405\\
			0.551724137931034	0.892786939812761\\
			0.586206896551724	0.908863002661435\\
			0.620689655172414	0.921476914011889\\
			0.655172413793103	0.930821678530099\\
			0.689655172413793	0.943090708442669\\
			0.724137931034483	0.952606548420595\\
			0.758620689655172	0.965116696512643\\
			0.793103448275862	0.972588044522528\\
			0.827586206896552	0.98013465644167\\
			0.862068965517241	0.9833843578139\\
			0.896551724137931	0.987934246413092\\
			0.931034482758621	0.988532332537494\\
			0.96551724137931	0.994705326792647\\
			1	1\\
		};
		\addlegendentry{\textcolor{black}{Mina: 0.80}}
		\addplot [color=cPLOT5, smooth, densely dotted, line width=\pckLineWidth]
		table[row sep=crcr]{%
			0	0.0444214876033058\\
			0.0344827586206897	0.28099173553719\\
			0.0689655172413793	0.417355371900826\\
			0.103448275862069	0.511363636363636\\
			0.137931034482759	0.580578512396694\\
			0.172413793103448	0.671487603305785\\
			0.206896551724138	0.741735537190083\\
			0.241379310344828	0.788223140495868\\
			0.275862068965517	0.825413223140496\\
			0.310344827586207	0.847107438016529\\
			0.344827586206897	0.862603305785124\\
			0.379310344827586	0.870867768595041\\
			0.413793103448276	0.877066115702479\\
			0.448275862068966	0.885330578512396\\
			0.482758620689655	0.894628099173553\\
			0.517241379310345	0.912190082644628\\
			0.551724137931034	0.932851239669421\\
			0.586206896551724	0.940082644628099\\
			0.620689655172414	0.944214876033058\\
			0.655172413793103	0.962809917355372\\
			0.689655172413793	0.970041322314049\\
			0.724137931034483	0.976239669421488\\
			0.758620689655172	0.987603305785124\\
			0.793103448275862	0.989669421487603\\
			0.827586206896552	0.993801652892562\\
			0.862068965517241	0.994834710743802\\
			0.896551724137931	0.995867768595041\\
			0.931034482758621	0.996900826446281\\
			0.96551724137931	0.99896694214876\\
			1	1\\
		};
		\addlegendentry{\textcolor{black}{Ours w/o Ori.: \textbf{0.85}}}
		\addplot [color=cPLOT5, smooth, line width=\pckLineWidth]
		table[row sep=crcr]{%
			0	0.0516528925619835\\
			0.0344827586206897	0.273760330578512\\
			0.0689655172413793	0.402892561983471\\
			0.103448275862069	0.491735537190083\\
			0.137931034482759	0.56301652892562\\
			0.172413793103448	0.658057851239669\\
			0.206896551724138	0.722107438016529\\
			0.241379310344828	0.762396694214876\\
			0.275862068965517	0.794421487603306\\
			0.310344827586207	0.815082644628099\\
			0.344827586206897	0.833677685950413\\
			0.379310344827586	0.84400826446281\\
			0.413793103448276	0.852272727272727\\
			0.448275862068966	0.863636363636364\\
			0.482758620689655	0.878099173553719\\
			0.517241379310345	0.894628099173554\\
			0.551724137931034	0.913223140495868\\
			0.586206896551724	0.929752066115702\\
			0.620689655172414	0.93801652892562\\
			0.655172413793103	0.949380165289256\\
			0.689655172413793	0.958677685950413\\
			0.724137931034483	0.962809917355372\\
			0.758620689655172	0.977272727272727\\
			0.793103448275862	0.982438016528926\\
			0.827586206896552	0.985537190082645\\
			0.862068965517241	0.987603305785124\\
			0.896551724137931	0.990702479338843\\
			0.931034482758621	0.993801652892562\\
			0.96551724137931	0.997933884297521\\
			1	1\\
		};
		\addlegendentry{\textcolor{black}{Ours: {0.84}}}
	\end{axis}
\end{tikzpicture}
    \caption{PCK curves on SMAL \cite{Zuffi:CVPR:2017} with independent FPS Sampling.}
    \label{fig:pck_smal_fps}
\end{figure}
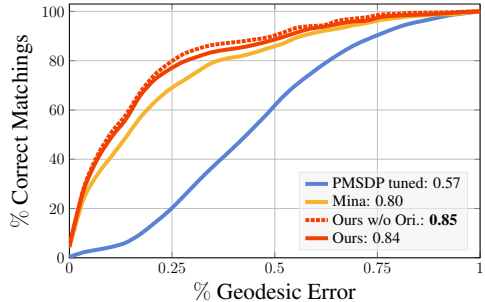

\begin{figure}[h!]
    \centering
    \begin{tabular}{cc}
        \hspace{-1cm}
         \newcommand{\scaleLinewWidth}{3pt}
\definecolor{cPLOT0}{RGB}{28,213,227}
\definecolor{cPLOT1}{RGB}{80,150,80}
\definecolor{cPLOT2}{RGB}{90,130,213}
\definecolor{cPLOT3}{RGB}{247,179,43}
\definecolor{cPLOT5}{RGB}{242,64,0}

\pgfplotsset{%
    label style = {font=\large},
    tick label style = {font=\large},
    title style =  {font=\Large},
    legend style={  fill= gray!10,
                    fill opacity=0.6, 
                    font=\large,
                    draw=gray!20, %
                    text opacity=1}
}
\begin{tikzpicture}[scale=0.5, transform shape]
	\begin{axis}[
		width=\columnwidth,
		height=0.75\columnwidth,
		grid=major,
        ylabel={{\Large Mean Geo. Err. (x100)}},
        xlabel={\resizebox{1cm}{!}{$\lambdarec$}},
        xmode=log,
        xmin=1e-8,
        xmax=1e6,
        ylabel near ticks,
        xtick={1e-8, 1e-4, 1e0, 1e4},
        ymin=3,
        ymax=14,
        ytick={2, 4, 6, 8, 10, 12, 14},
	]
    \addplot [color=cPLOT2, smooth, line width=\scaleLinewWidth]
    table[row sep=crcr]{%
1e-08	13.450787347693\\
1e-07	13.3455089712506\\
1e-06	10.0897186881428\\
1e-05	4.97000665063424\\
0.0001	6.13970581335309\\
0.001	7.83667435080871\\
0.01	7.3730484449518\\
0.1	6.08162310934488\\
0.2	5.99646191208002\\
0.3	5.28306550548388\\
0.4	5.86669751468636\\
0.5	5.85144961002569\\
0.6	5.6701390008938\\
0.7	5.64318302345769\\
0.8	5.39934592953973\\
0.9	5.39567345663631\\
1	5.55249957222243\\
10	6.51740863854626\\
100	6.42390444508719\\
1000	7.73035808174131\\
10000	7.64857916570787\\
100000	8.30702264501527\\
    };
        
	\end{axis}
\end{tikzpicture}%
         \hspace{-0.1cm}
         \definecolor{cPLOT0}{RGB}{28,213,227}
\definecolor{cPLOT1}{RGB}{80,150,80}
\definecolor{cPLOT2}{RGB}{90,130,213}
\definecolor{cPLOT3}{RGB}{247,179,43}
\definecolor{cPLOT5}{RGB}{242,64,0}

\pgfplotsset{%
    label style = {font=\large},
    tick label style = {font=\large},
    title style =  {font=\Large},
    legend style={  fill= gray!10,
                    fill opacity=0.6, 
                    font=\large,
                    draw=gray!20, %
                    text opacity=1}
}
\begin{tikzpicture}[scale=0.5, transform shape]
	\begin{axis}[
		width=\columnwidth,
		height=0.75\columnwidth,
		grid=major,
        xlabel={\resizebox{1cm}{!}{$\lambdaori'$}},
        xmode=log,
        xmin=1e-8,
        xmax=1e0,
        ylabel near ticks,
        xtick={1e-8, 1e-6, 1e-4, 1e-2, 1e0},
        ymin=3,
        ymax=14,
        ytick={2, 4, 6, 8, 10, 12, 14},
	]

 \addplot [color=cPLOT1, smooth, line width=\scaleLinewWidth]
    table[row sep=crcr]{%
1e-08	5.30286409764703\\
1e-07	5.31287289801323\\
1e-06	5.36020382491791\\
1e-05	5.23014612880729\\
0.0001	4.97072425396912\\
0.001	4.75244528307926\\
0.005	3.20845213788204\\
0.01	3.45024855095021\\
0.1	6.64317875484633\\
1	11.69934045965\\
10	12.8082280133184\\
100	13.6161056661484\\
    };
    \addlegendentry{\textcolor{black}{$\expnumber{3}{-1}$}}
    \addplot [color=cPLOT3, smooth, line width=\scaleLinewWidth]
    table[row sep=crcr]{%
1e-08	5.19890054712137\\
1e-07	5.40299117021706\\
1e-06	5.02241567725641\\
1e-05	5.15840706505248\\
0.0001	5.18794165012843\\
0.001	4.8607331316882\\
0.01	4.73164414541775\\
0.1	6.22503745898951\\
1	10.9134015414496\\
10	12.675278310894\\
100	13.610146712369\\
1000	13.9641398777638\\
10000	14.1061997953913\\
100000	13.8156056041571\\
    };
    \addlegendentry{\textcolor{black}{$\expnumber{9}{-1}$}}
    \addplot [color=cPLOT0, smooth, line width=\scaleLinewWidth]
    table[row sep=crcr]{%
1e-08	5.79194455330552\\
1e-07	5.21412565714623\\
1e-06	5.31553117120456\\
1e-05	10.1619749052211\\
0.0001	12.3500165492351\\
0.001	13.4999130454192\\
0.01	13.4808440477379\\
0.1	13.7895394534531\\
1	13.863597851476\\
10	13.8672782271962\\
100	13.8526645938296\\
    };
    \addlegendentry{\textcolor{black}{$\expnumber{1}{-5}$}}
	\end{axis}
\end{tikzpicture}%
    \end{tabular}
    \caption{ \textbf{Ablation study} on $\lambdarec~\&~\lambdaori'$. (Left) By setting $\lambdaori'=0$, the lowest mean geodesic errors are obtained at $\lambdarec~\in~\{\expnumber{1}{-5},~\expnumber{3}{-1},~\expnumber{9}{-1}\}$ (Right) The $\lambdarec$ is fixed to be $\{\expnumber{1}{-5},~\expnumber{3}{-1},~\expnumber{9}{-1}\}$ respectively, the lowest mean geodesic error is obtained at $\lambdaori'=\expnumber{5}{-3}$. Note that overall the (cyan) curve with $\lambdarec=\expnumber{1}{-5}$ lies above the other two curves, suggesting its suboptimal performance.}
    \label{fig:hyperparams-lambda}
\end{figure}

\begin{figure}
    \centering
    \begin{minipage}{0.9\columnwidth}
    \vspace{0.1cm}
         \begin{minipage}{0.8\columnwidth}
            \vspace{0cm}
             \begin{tabular}{cc}
                  \hspace{0cm}
                  \includegraphics[width=0.6\columnwidth]{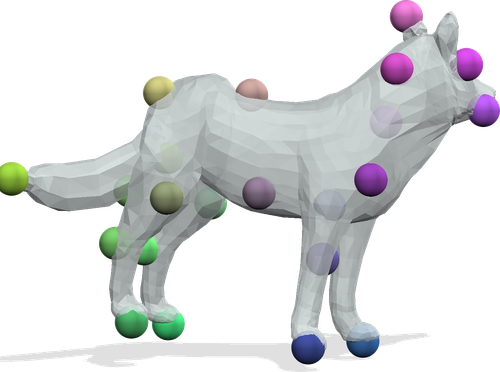}&
                  \hspace{0cm}
                  \includegraphics[width=0.6\columnwidth]{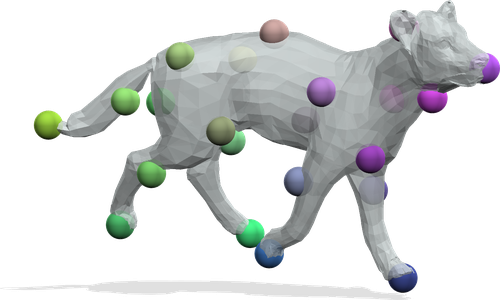}  \\
                  \includegraphics[width=0.6\columnwidth]{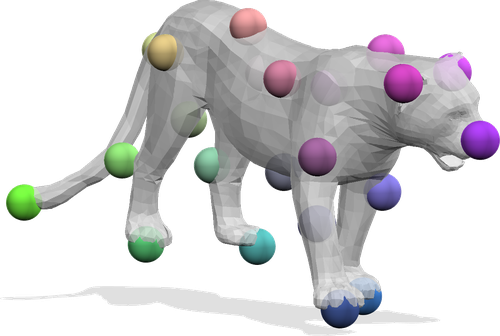}&
                  \hspace{-0.7cm}
                  \includegraphics[width=0.5\columnwidth]{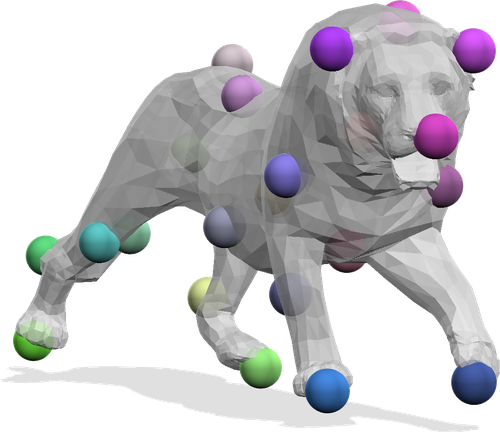}  \\
             \end{tabular}
         \end{minipage}
    \vspace{0.1cm}
    \end{minipage}
    \caption{Qualitative results of SIGMA on SMAL \cite{Zuffi:CVPR:2017} with independently FPS-sampled keyponits.}
    \label{fig:qual_smal_fps}
\end{figure}

\subsection{Ablation on $\lambdarec~\&~\lambdaori'$}

Our ablation is based on $20$ randomly sampled TOSCA shape pairs and an (equivalent) version of the original objective presented in Eq.~\eqref{eq:optimization-problem} for practical reasons.
\begin{equation}
\label{eq:optimisation-problem-lambda}
    \lambdarec \cErec(\mP, \mXh, \mYh) +\cEdef(\mXh, \mYh) + \lambdaori' \cEori(\mP)
\end{equation} 
We note it can be easily converted to the formulation in Eq.~\eqref{eq:optimization-problem} by setting $\lambdarec=\lambdadef^{-1}$ and $\lambdaori'=\lambdaori*\lambdarec$. Moreover, all the experiments (except otherwise mentioned) are conducted under the setting of $\lambdadef=5$ and $\lambdaori=\expnumber{2.5}{-2}$, which corresponds to $\lambdarec=\expnumber{2}{-1},~\lambdaori'=\expnumber{5}{-3}$.

As both the $\cErec$ and $\cEdef$ terms in the objective are dispensable (cf. Eq.~(\ref{eq:optimization-problem}, \ref{eq:optimisation-problem-lambda})), we first conduct the ablation on $\lambdarec$ alone, i.e. $\lambdaori'=0$. The search range covers from $\expnumber{1}{-8}$ to $\expnumber{1}{5}$ in logarithmic scale, with a refined search range between $\expnumber{1}{-1}$ and $\expnumber{1}{0}$. As shown in Fig.~\ref{fig:hyperparams-lambda} (left), the set of minimal mean geodesic error are achieved at $\{\expnumber{1}{-5},~\expnumber{3}{-1},~\expnumber{9}{-1}\}$. 

Consequently, we fix  $\lambdarec$ to be $\{\expnumber{1}{-5},~\expnumber{3}{-1},~\expnumber{9}{-1}\}$ respectively and fine tune $\lambdaori'$. The quantitative results in Fig.~\ref{fig:hyperparams-lambda} (right) suggest that the best accuracy is obtained at $\lambdaori'=\expnumber{5}{-3}$. 
In summary, the optimal setting (for the 20 randomly subsampled TOSCA pairs) is $\lambdarec=\expnumber{3}{-1},~\lambdaori'=\expnumber{5}{-3}$, which is close to the setting chosen in the main paper.

\end{document}